\newcommand{\A}{\mathcal{A}}
\newcommand{\R}{\mathbb{R}}
\newcommand{\x}{\bm{x}}
\newcommand{\y}{\mathbf{y}}
\newcommand{\z}{\bm{z}}
\newcommand{\jointfamily}{\mathcal{P}^{M,T}_{\A,\B}}
\newcommand{\msmfamily}{\mathcal{M}^T(\Pi_{\A}^M,\mathcal{P}_{\B}^M)}
\newcommand{\B}{\mathcal{B}}
\newcommand{\p}{\mathbf{p}}
\newcommand{\s}{\mathbf{s}}
\newcommand{\mparam}{\bm{\theta}}	
\def\1{\bm{1}}
\DeclareMathOperator*{\argmax}{arg\,max}
\theoremstyle{plain}
\newtheorem{theorem}{Theorem}[section]
\newtheorem{proposition}[theorem]{Proposition}
\newtheorem{lemma}[theorem]{Lemma}
\newtheorem{corollary}[theorem]{Corollary}
\theoremstyle{definition}
\newtheorem{definition}[theorem]{Definition}
\theoremstyle{remark}
\newlist{thmlist}{enumerate}{1}
\setlist[thmlist]{label=(\roman{thmlisti}),
                  ref=\thetheorem.(\roman{thmlisti}),
                  noitemsep}
\title{Identifying Nonstationary Causal Structures with \\  High-Order Markov Switching Models}
\author[1]{\href{mailto:<cb221@imperial.ac.uk>?Subject=Your UAI 2024 paper}{Carles Balsells-Rodas}}
\author[2]{Yixin Wang}
\author[1,3]{Pedro A. M. Mediano}
\author[1]{Yingzhen Li}
\affil[1]{%
    Imperial College London
}
\affil[2]{%
    University of Michigan
}
\affil[3]{%
    Division of Psychology and Language Sciences, University College London
  }
\begin{document}
\maketitle

\begin{abstract}
Causal discovery in time series is a rapidly evolving field with a wide variety of applications in other areas such as climate science and neuroscience. Traditional approaches assume a stationary causal graph, which can be adapted to nonstationary time series with time-dependent effects or heterogeneous noise. In this work we address nonstationarity via regime-dependent causal structures. We first establish identifiability for high-order Markov Switching Models, which provide the foundations for identifiable regime-dependent causal discovery. Our empirical studies demonstrate the scalability of our proposed approach for high-order regime-dependent structure estimation, and we illustrate its applicability on brain activity data.
\end{abstract}
\vspace{-.5em}
\section{Introduction}

Identifying causal relationships from observational data is a challenging problem \citep{spirtes2000causation,pearl2009causal}, particularly in the time series domain where temporal dependencies must be considered \citep{assaad2022survey}. Traditional approaches assume time-invariant structures \citep{granger1969investigating,peters2013causal}, where identifiability extends from non-temporal causal discovery \citep{peters2017elements}. Causal discovery encompasses a diverse taxonomy \citep{glymour2019review}. Constraint-based methods use conditional independence test to recover the underlying graph structure, such as PC \citep{spirtes2000causation}, FCI \citep{spirtes2001anytime}, and extensions to time series \citep{runge2018causal}.  Score-based methods optimise score functions from causal graphs \citep{chickering2002optimal}; for time series, Dynotears \citep{pamfil2020dynotears} relaxes the combinatorial search using constraint-based optimisation from \citet{zheng2018dags}. Functional causal model-based methods represent effects as functions of their causes via structural equation models \citep{shimizu2006linear,peters2014causal}. In time series, these approaches fit a constrained dynamic model which favours identifiability \citep{peters2013causal}. 

Recent developments explore nonstationary time series using time-dependent effects \citep{huang2015identification,huang2019causal}, heterogeneous/history-dependent noise \citep{huang2020causal,gong2023rhino}, or leveraging contextual information across datasets \citep{gunther2023causal}. Alternatively, nonstationarity can be addressed by assuming time-dependent causal structures, which is referred to as \emph{causal nonstationarity} \citep{assaad2022survey}. Regime-dependent causal discovery \citep{saggioro2020reconstructing} assumes discrete latent states (regimes) that alter the temporal dependencies, making the causal structure stationary within each regime. However, structure identifiability in this domain remains unexplored.

In this work, we establish identifiable regime-dependent causal discovery using high-order Markov Switching Models (MSMs; \citep{hamilton1989new}). They extend Hidden Markov Models (HMMs; \citep{baum1966statistical}) by incorporating autorregressive dependencies on the observations given discrete latent variables. Identifiability of stationary HMMs \citep{gassiat2016inference} is present in recent ICA literature \citep{halva2020hidden,halva2021disentangling}. However, extensions to other structured priors, such as MSMs, cannot include transition identifiability due to the requirement of independent mixture components \citep{allman2009identifiability}. 

Recently, \citet{balsells-rodas2024on} established identifiability for first-order MSMs. We generalise this to high-order dependencies by addressing the challenge of increasing additional overlapping variables in the joint distribution, which requires strengthening the assumptions in the non-parametric case. We then present parametrisations via conditional Gaussians with analytic moments (Section \ref{sec:msm_theory}). This foundational result enables identifiability of regime-dependent causal structures due to the accessibility of transition derivatives, which we present in Section \ref{sec:causal_discovery}. Finally, we demonstrate the scalability of our identifiable MSM to high-order structure estimation, and show applications to neuroscience with brain activity data (Section \ref{sec:exp}).
\vspace{-.5em}
\section{Background}\label{sec:background}


\paragraph{Causal Discovery in Time Series}
Causal discovery assumes a structural causal model (SCM; \citep{pearl2009causal}) underlying the data generation process. In this work we focus on structural equation models (SEMs; \citep{peters2017elements}), which generate data given a causal graph $\mathbf{G}\in \mathbb{G}$. Note $\mathbb{G}$ denotes a causal graph space. Consider a sequence of observations $\x_t\in\R^{d}$, with $t\in\{1,\dots,T\}$, where in the context of time series, the corresponding SEM at time $t$, with causal graph $\mathbf{G}$, is expressed as follows
\vspace{-.25em}
\begin{multline}
    x^{(j)}_t = f^{(j)}(\{x^{(i)}_{t-\tau} | x^{(i)}_{t-\tau} \in \mathbf{Pa}^{(j)}(\tau), \\ 0\leq \tau \leq M \}, \bm{\varepsilon}^{(j)}_{t}), \quad \bm{\varepsilon}^{(j)}_{t} \sim p_{\bm{\varepsilon}^{(j)}},
\end{multline}
for $j\in\{1,\dots,d\}$, where $p_{\bm{\varepsilon}^{(j)}}$ denotes the distribution of the independent noise at each time $t$. We also denote $\mathbf{Pa}^{(j)}(\tau)$ as the parents of variable $j$ with lag $\tau$, which are associated with the causal graph $\mathbf{G}$. We distinguish between \emph{instantaneous effects} ($\tau=0$) and \emph{lagged effects} ($\tau\geq 1$). Under no further assumptions, the causal graph $\mathbf{G}$ is unidentifiable from data. To exemplify, \citet{peters2013causal} proposes TiMINo, where the causal graph is identifiable if the SEM belongs to an \emph{identifiable functional model class}.

\vspace{-.25em}
\paragraph{Markov Switching Models}
Again denote $\x_t \in\R^d$ observations of a sequence with $t\in\{1,\dots,T\}$. Markov Switching Models (MSMs; \citep{hamilton1989new}), are a class of discrete latent variable models with latents $s_t\in\{1,\dots,K\}$. They consider autoregressive connections between the observed variables, conditioned on $s_t$. Given MSMs parametrised by $\mparam$, the likelihood $p_{\mparam}(\x_{1:T})$ can be computed exactly as follows:
\vspace{-.5em}
\begin{multline}\label{eq:msm}
   p_{\mparam}(\x_{1:T}) = \sum_{s_{M:T}} p_{\mparam}(\s_{M:T})  p_{\mparam}(\x_{1:M}| s_M) \\[-.75em]
   \prod_{t=M+1}^T p_{\mparam}(\x_t|\x_{t-1},\dots,\x_{t-M}, s_t)
\end{multline}
\vspace{-1.25em}

where we depict an MSM with lag $M$. For simplicity, we consider $s_M\in\{1,\dots, K_0\}$, and $s_t\in\{1,\dots, K\}$ for $M < t \leq T$. Although the prior $p_{\mparam}(\s_{M:T})$ in Eq (\ref{eq:msm}) is not specified, in our experiments we consider a first-order Markov chain. However, our theoretical results allow any form for $p_{\mparam}(\s_{M:T})$ as long as the states are independent of the observations $\x_{1:T}$.
\vspace{-.25em}
\section{Identifiability Analysis}

In this section, we first establish theoretical results for high-order Markov switching models. Then, we specify the identification of the corresponding regime-dependent causal structure. \citet{balsells-rodas2024on} already provides results for first-order MSMs using Gaussian transitions where the mean and covariance are parameterised by analytic functions. Our approach allows the incorporation of additional autoregressive dependencies without imposing further restrictions on the parametric family. Below we provide identifiability analysis with functions parameterised by neural networks. Due to neural network symmetries, identifiability will instead refer only to the functional form of the parameterisations.

\vspace{-.5em}
\subsection{High-Order MSMs}\label{sec:msm_theory}

We drop the subscript $\mparam$ for simplicity, and frame the MSM as a mixture of sequences assuming length $T < +\infty$. First, we define the family of initial and transition distributions with autoregressive order $M$:
\begin{equation}
\Pi_{\A}^M:= \Big\{p_a(\x_{1:M})|a\in \A \Big\},
\end{equation}
\begin{equation}
\mathcal{P}_{\B}^M := \Big\{ p_b(\x_t | \x_{t-1}, \dots,\x_{t-M}) | b \in \B \Big\},
\end{equation}
where $\A$ and $\B$ are index sets satisfying mild measure theoretic conditions (see Appendix \ref{app:non_parametric_mixtures}). This formulation connects with the notation presented in Eq. (\ref{eq:msm}) as we have for each $k_0\in\{1,\dots,K_0\}$: $p(\x_{1:M}|s_M=k_0) =p_a(\x_{1:M})$, for some  $a\in\A$; and for each $k\in\{1,\dots,K\}$: $p(\x_t| \x_{t-1:t-M}, s_t=k)=p_{b_t}(\x_t| \x_{t-1:t-M})$ for some $b_t\in\B$ with $M < t \leq T$. In other words, the families $\Pi_{\A}^M$ and $\mathcal{P}_{\B}^M$ contain infinitely many functions, and each the distributions in the MSM align with one of such functions.

Then, we can define a bijective \emph{path indexing} function $\varphi: \{1,\dots, C\}\rightarrow \{1,\dots,K_0\} \times \{1,\dots,K\}^{T-M}$, with a total number of mixture paths $C = K_0 \cdot K^{(T-M)}$, where $c_i=p(\s_{M:T}=\varphi(i))$ denotes the probability of a sequence of states $\s_{M:T}=\varphi(i)$. Under the following notation we introduce the family of MSMs of order $M$, which is an equivalent formulation of Eq.(\ref{eq:msm}) in the finite mixture sense.
\vspace{-1em}
\begin{multline}\label{eq:msm_finite_mixture}
    \msmfamily := \bigg\{ \sum_{i=1}^C c_i p_{a^i}(\x_{1:M}) \\[-.5em]
    \prod_{t=M+1}^T p_{b^i_t}(\x_t | \x_{t-1},\dots,\x_{t-M}) \Big| p_{a^i}\in\Pi^{M}_{\A}, p_{b^i_t}\in\mathcal{P}^{M}_{\B},\\ t>M,  a^i\in \A , b^i_t \in \B, (a^i,b^i_{M+1:T}) \neq (a^j,b^j_{M+1:T}),\\ \forall i\neq j, \sum_{i=1}^C c_i = 1, T > M \bigg\}.
\end{multline}
\vspace{-1.5em}

Given the requirement $(a^i,b^i_{M+1:T}) \neq (a^j,b^j_{M+1:T})$ for any $i\neq j$, we have an injective mapping $\phi : \{1,\dots, C\}\rightarrow \{1,\dots,K_0\} \times \{1,\dots,K\}^{T-M}$, such that $\phi(i) = (a^i,b^i_{M+1}, \dots, b^i_{T})$. Therefore, we can combine both indexing maps, $\phi \circ \varphi^{-1}$ to map a set of states $\s_{M:T}$ to some indices $a,b_{M+1:T}$, uniquely. 

The above notation allows the extension of MSM to finite mixtures, where we now have a mixture of $C$ trajectories indexed by some $(a,b_{M+1:T})$. Below we define the notion of identifiability of $\mathcal{M}^T(\Pi_{\A}^M, \mathcal{P}_{\B}^M)$.
\begin{definition}\label{def:identifiability}
We say the family of Markov switching models with order $M$, where $M\in\mathbb{Z}^+$, is \emph{identifiable up to permutations}, if for any $p,\tilde{p}\in \mathcal{M}^T(\Pi_{\A}^M, \mathcal{P}_{\B}^M)$,  
\begin{itemize}[leftmargin=0cm]
  \setlength{\itemsep}{1pt}
  \setlength{\parskip}{0pt}
  \setlength{\parsep}{0pt}
    \item[] $p(\x_{1:T}) = $
    \item[] $\quad\sum_{i=1}^{C} c_i p_{a^i}(\x_{1:M}) \prod_{t=M+1}^T p_{b^i_t}(\x_t | \x_{t-1},\dots,\x_{t-M}),$
    \vspace{.75em}
    \item[] $\tilde{p}(\x_{1:T}) = $
    \item[] $\quad\sum_{i=1}^{\tilde{C}} \tilde{c}_i p_{\tilde{a}^i}(\x_{1:M}) \prod_{t=M+1}^T p_{\tilde{b}^i_t}(\x_t | \x_{t-1},\dots,\x_{t-M}),$
\end{itemize}
with $p(\x_{1:T})=\tilde{p}(\x_{1:T}), \forall \x_{1:T}\in\R^{dT}$, we have $K_0=\tilde{K}_0$, $K=\tilde{K}$, 
and for each $1\leq i \leq C$ there is some $1 \leq j \leq \tilde{C}$ such that:
\begin{enumerate}[leftmargin=*]
    \item $c_i=\tilde{c}_j$;
    \item if $b^i_{t_1}=b^i_{t_2}$ for $t_1,t_2 > M$, $t_1\neq t_2$, then $\tilde{b}^j_{t_1}=\tilde{b}^j_{t_2}$;
    \item $p_{a^i}(\x_{1:M})=p_{\tilde{a}^j}(\x_{1:M}), \quad \forall\x_{1:M} \in \R^{dM}$;
    \item $p_{b^i_t}(\x_t | \x_{t-1},\dots,\x_{t-M})=p_{\tilde{b}^j_t}(\x_t | \x_{t-1},\dots,\x_{t-M})$,\small{$\forall(\x_t,\dots, \x_{t-M}) \in \R^{d(M+1)}$}.
\end{enumerate}
\end{definition}
Given that the above framework is designed to explore identifiability in deep generative models, we focus on establishing identifiability for parametric models, and refer to Appendix \ref{app:proof_identifiability} for the non-parametric analyses. We formulate our parametric model in terms of a non-linear Gaussian transition given $M$ previous observations:
\begin{multline}\label{eq:gaussian_transition_family}
    \mathcal{G}_{\B}^M = \Big\{p_b(\x_t|\x_{t-1}, \dots, \x_{t-M}) = \\ \mathcal{N}(\x_t| \bm{m}(\x_{t-1}, \dots, \x_{t-M}, b), \bm{\Sigma}(\x_{t-1}, \dots, \x_{t-M}, b) \Big| \\ b\in\B, (\x_t,\dots,\x_{t-M})\in\R^{d(M+1)}\Big\},
\end{multline}
with $\bm{m}(\cdot,b): \R^{dM} \rightarrow \R^d$ and $\bm{\Sigma}(\cdot,b): \R^{dM} \rightarrow \R^{d\times d}$ being non-linear functions. Similarly to \citet{balsells-rodas2024on}, we define the following unique indexing assumption for the above Gaussian family:
\begin{multline}
\label{eq:unique_indexing_conditional_gaussian}
    \forall b \neq b' \in \B, \ \exists (\x_{t-1},\dots,\x_{t-M}) \in \mathbb{R}^{dM}, \ s.t. \\ \bm{m}(\x_{t-1},\dots,\x_{t-M}, b)
    \neq \bm{m}(\x_{t-1},\dots,\x_{t-M}, b') \\ \text{or} \ \bm{\Sigma}(\x_{t-1},\dots,\x_{t-M}, b) \neq \bm{\Sigma}(\x_{t-1},\dots,\x_{t-M}, b'),
\end{multline}
which assumes two Gaussian distributions on $\x_t$ for different $b,b'\in\B$, given $(\x_{t-1},\dots,\x_{t-M})$ . We also introduce a family of initial Gaussian distribution:
\begin{equation}
\label{eq:gaussian_initial_family}
    \mathcal{I}_{\A}^M := \Big\{p_a(\x_{1:M}) = \mathcal{N}(\x_{1:M}| \bm{\mu}(a), \bm{\Sigma}_1(a)) \ | \ a \in \A \Big\},
\end{equation}
with equivalent unique indexing assumptions,
\begin{equation}
\label{eq:unique_indexing_marginal_gaussian}
    a \neq a' \in \A \Leftrightarrow \bm{\mu}(a) \neq \bm{\mu}(a') \text{ or } \bm{\Sigma}_1(a) \neq \bm{\Sigma}_1(a').
\end{equation}
We now establish the following identifiability result for the above parametric MSM of order $M$. 
\begin{theorem}\label{thm:identifiability_main}
Consider the Markov switching model family with order $M$ defined in Equation (\ref{eq:msm_finite_mixture}) under non-linear Gaussian families $\mathcal{M}(\mathcal{I}_{\A}^M, \mathcal{G}_{\B}^M)$
. Assume:
\begin{enumerate}[leftmargin=*]
    \item[(i)] Unique indexing for the transition family $\mathcal{G}_{\B}^M$, defined by Eq. (\ref{eq:unique_indexing_conditional_gaussian}); and the family $\mathcal{I}_{\A}^M$, defined by Eq. (\ref{eq:unique_indexing_marginal_gaussian});
    \item[(ii)] The transition distribution moments in $\mathcal{G}_{\B}^M$, $\bm{m}(\cdot,b): \R^{dM} \rightarrow \R^d$ and $\bm{\Sigma}(\cdot,b): \R^{dM} \rightarrow \R^{d\times d}$, are analytic functions, for any $b\in \B$;
\end{enumerate}
Then the Markov switching model family is identifiable as defined in \ref{def:identifiability}.
\end{theorem}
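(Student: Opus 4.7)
The plan is to reduce identifiability to two largely separate tasks: first, obtaining \emph{component-level} identifiability of the finite mixture in Eq.~(\ref{eq:msm_finite_mixture}), and second, peeling off individual transition and initial factors from matched components.

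For the first task, I would invoke the non-parametric finite-mixture identifiability result from Appendix \ref{app:non_parametric_mixtures}, specialised to the family $\mathcal{M}(\mathcal{I}_{\A}^M, \mathcal{G}_{\B}^M)$. The goal is to produce a bijection $\pi: \{1,\ldots,C\} \to \{1,\ldots,\tilde C\}$ with $C = \tilde C$, $c_i = \tilde c_{\pi(i)}$, and
\begin{equation*}
    p_{a^i}(\x_{1:M}) \prod_{t=M+1}^T p_{b^i_t}(\x_t | \x_{t-1:t-M}) = p_{\tilde{a}^{\pi(i)}}(\x_{1:M}) \prod_{t=M+1}^T p_{\tilde{b}^{\pi(i)}_t}(\x_t | \x_{t-1:t-M})
\end{equation*}
pointwise for each $i$. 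This is the step I expect to be the main obstacle. Classical mixture identifiability results of the Allman--Matias--Rhodes type rely on decomposing component densities into conditionally independent blocks, but here consecutive transitions $p_{b^i_t}$ and $p_{b^i_{t+1}}$ overlap in $M$ of their conditioning arguments, so a direct block factorisation fails. Verifying the non-parametric hypotheses for the Gaussian family will use \emph{both} analyticity of $\bm{m}(\cdot,b)$ and $\bm{\Sigma}(\cdot,b)$ (to lift local equalities of moments to the entire $\mathbb{R}^{d(M+1)}$) and the unique indexing assumption (to pass from distributional equality to index equality). For $M=1$ this reduces to the argument of \citet{balsells-rodas2024on}; for general $M$, the overlap grows with $M$, which is precisely what forces the strengthened assumptions advertised in the introduction.

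The second task becomes routine once component-level equality is in hand. Because every density in $\mathcal{G}_{\B}^M$ and $\mathcal{I}_{\A}^M$ is strictly positive, I would strip off transitions one at a time by marginalisation. Integrating both sides of the product identity over $\x_T$ collapses the final transition factor to one on each side, leaving an identity on trajectories of length $T-1$; dividing the original identity by this marginalised version isolates
\begin{equation*}
    p_{b^i_T}(\x_T | \x_{T-1:T-M}) = p_{\tilde{b}^{\pi(i)}_T}(\x_T | \x_{T-1:T-M}), \quad \forall (\x_{T-M},\ldots,\x_T) \in \R^{d(M+1)}.
\end{equation*}
Iterating this step down to $t=M+1$ identifies each transition factor, and the residual equality then delivers $p_{a^i}(\x_{1:M}) = p_{\tilde{a}^{\pi(i)}}(\x_{1:M})$, giving conditions (3) and (4) of Definition \ref{def:identifiability}.

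Finally, the structural claims close the argument. The equalities $K_0 = \tilde K_0$ and $K = \tilde K$ follow from unique indexing (Eqs.~\ref{eq:unique_indexing_conditional_gaussian} and \ref{eq:unique_indexing_marginal_gaussian}): distinct indices produce distinct distributions, so the number of distinct distributions appearing in the two MSMs agree, and therefore so do the index counts. Condition (2) is then immediate: if $b^i_{t_1} = b^i_{t_2}$ then the two transition distributions coincide as functions on $\R^{d(M+1)}$; by component-level distributional matching across $\pi$ and unique indexing on $\B$, the corresponding $\tilde{b}^{\pi(i)}_{t_1}$ and $\tilde{b}^{\pi(i)}_{t_2}$ must also coincide.
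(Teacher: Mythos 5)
Your overall architecture matches the paper's: reduce to component-level identifiability of the finite mixture via the Yakowitz--Spragins criterion (linear independence of the component family under finite mixtures), then peel off transition factors by marginalising out $\x_T$ and dividing, and finally read off the structural claims. Your second task is carried out correctly and is essentially identical to the paper's Theorem \ref{thm:identifiability_msm}; your derivation of condition (2) via unique indexing is if anything slightly cleaner than the paper's contradiction argument, since in the parametric setting equal distributions do force equal indices. (Minor quibble: $K_0=\tilde K_0$ and $K=\tilde K$ do not follow merely from counting distinct distributions -- $C=\tilde C$ gives $K_0K^{T-M}=\tilde K_0\tilde K^{T-M}$, and you need the per-time-step component matching to separate the two factors.)

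The genuine gap is that you name the central difficulty but do not resolve it. Invoking Proposition \ref{prop:mixture_cdf_identifiability} requires first \emph{proving} that the joint family $\Pi_{\A}^M\otimes(\otimes_{t=M+1}^T\mathcal{P}_{\B}^M)$ is linearly independent under finite mixtures, and for $M>1$ this is where essentially all of the work lives: consecutive factors $p_{b_t}$ and $p_{b_{t+1}}$ share $M$ conditioning variables, and for $t$ and $t'$ with $|t-t'|\le M$ the factors still overlap, so the single-overlap induction of \citet{balsells-rodas2024on} does not apply. Your proposal says this step ``will use both analyticity and unique indexing'' but gives no mechanism. The paper's solution has two ingredients you would need to supply: (1) group $M$ consecutive transitions into a single augmented factor $p_{b_{t+1:t+M}}(\x_{t+1:t+M}\mid\x_{t-M+1:t})$ so that consecutive blocks overlap in exactly one augmented variable, and run the induction in steps of $M$ (with a separate argument for $T\not\equiv 0 \bmod M$, Theorem \ref{thm:linear_independence_joint_non_parametric}); and (2) strengthen the linear-independence hypothesis on $\mathcal{P}_{\B}^M$ from ``some non-zero-measure set'' to ``every non-zero-measure subset of a full-measure set'' (assumption (b4), Lemmas \ref{lemma:extension_b4} and \ref{lemma:extension_b5}), because the product construction only preserves linear independence under this stronger condition; analyticity enters precisely here, via the fact that the moment functions of two distinct components can agree only on a zero-measure set \citep{mityagin2015zero}, which is Proposition \ref{prop:intersection_mom}. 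Without these two ideas the reduction to the mixture criterion is an unproven assertion, and the proof is incomplete at exactly the step you flagged as the main obstacle.
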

\emph{Proof sketch:} See Appendix \ref{app:proof_identifiability} for the full proof. The strategy can be summarised in the following steps.
\begin{enumerate}[leftmargin=*]
    \item From mixture models identifiability  \citep{yakowitz1968identifiability}, we show linear independence over the joint distribution family $\jointfamily:=\Pi_{\A}^M \otimes (\otimes^T_{m=M+1}\mathcal{P}_{\B}^M)$ is sufficient for MSM identifiability.
    \item We use similar induction techniques from \citet{balsells-rodas2024on} to show linear independence. We first find conditions for $T=2M$ and use induction on $T>2M$ (see further details below). We then solve the remaining cases ($T\not\equiv 0 \mod M$).
    \item We establish conditions for \emph{non-parametric} $\mathcal{P}_{\B}^M$ and $\Pi_{\A}^M$ such that the product $\Pi_{\A}^M\otimes(\otimes_{m=1}^M\mathcal{P}_{\B}^M) = \{p_{a^i}(\x_{1:M})\prod_{m=1}^{M} p_{b^i_{M+m}}(\x_{M+m} | \x_{m},\dots,\x_{M-1+m})\}$ contains linear independent functions (case $T=2M$).
    \item We show that the Gaussian families $\mathcal{I}_{\A}$ and $\mathcal{G}_{\B}$ under assumptions (i-ii) satisfy the \emph{non-parametric} case defined for $\Pi_{\A}^M$ and $\mathcal{P}_{\B}^M$ respectively.
\end{enumerate}

\paragraph{Non-parametric identifiability.}  The above identifiability result preserves the same model assumptions as the ones presented in \citet{balsells-rodas2024on}, which establish MSM identifiability for $M=1$. Key to their  technique is the establishment of linear independence from products of functions with one overlapping variable (Lemma B.1 in \citet{balsells-rodas2024on}). To exemplify, the following first-order joint distribution family
\begin{equation}\label{eq:joint_first_order}
    \Pi_{\A}^1 \otimes \mathcal{P}_{\B}^1 \otimes \mathcal{P}_{\B}^1 = \bigg\{ p_a(\textcolor{purple}{\x_{1}})p_{b_2}(\textcolor{teal}{\x_{2}}|\textcolor{purple}{\x_{1}})p_{b_3}(\x_3|\textcolor{teal}{\x_{2}})\bigg\},
\end{equation}
for every $a\in\A$ and $b_2,b_3\in\B$, contains exactly one overlapping variable for products of consecutive distributions (indicated in different colours).

Our strategy to generalise to any lag $M < T$ focuses on strengthening the assumptions in the non-parametric case, which are still satisfied when assuming analytic Gaussian transition models. The challenge for identifiability when increasing autoregressive dependencies is due to the increasing set of overlapping variables in the joint distribution family. This complicates proving its linear independence (\emph{Proof sketch}: step 1), as we no longer have the same overlapping structures as in Eq. (\ref{eq:joint_first_order}). Therefore, the induction technique used in \citet{balsells-rodas2024on} does not hold. To illustrate, for $M=3$, the joint distribution family has the following structure:
\vspace{-.5em}
\begin{multline}\label{eq:joint_product_3}
    \Pi^3_{\A} \otimes (\otimes^3 \mathcal{P}^3_{\B}) = \Big\{p_a(\textcolor{purple}{\x_{1}},\textcolor{teal}{\x_2},\textcolor{orange}{\x_3})p_{b_4}(\textcolor{violet}{\x_{4}}|\textcolor{purple}{\x_{1}},\textcolor{teal}{\x_2},\textcolor{orange}{\x_3}) \\ p_{b_5}(\textcolor{olive}{\x_{5}}|\textcolor{violet}{\x_{4}},\textcolor{orange}{\x_3},\textcolor{teal}{\x_2})p_{b_6}(\x_6|\textcolor{olive}{\x_{5}},\textcolor{violet}{\x_{4}},\textcolor{orange}{\x_3})\Big\},
\end{multline}
for every $a\in\A$ and $b_4,b_5,b_6\in\B$, where we indicate overlaping variables with different colours. 
Compared to $M=1$ in Eq. (\ref{eq:joint_first_order}), when multiplying the initial and transition distribution, $p_a(\x_{1:3})p_{b_4}(\x_4|\x_{3},\x_2,\x_1)$, the overlap $\x_{1:3}$ shares some similarities. In fact, one can utilise the strategy in $M=1$ to show linear independence for $T=4$ in the above family (see Lemma \ref{lemma:linear_independence_two_nonlinear_gaussians}). Then for $T=5$, the conditioned variables in $p_{b_5}(\x_5|\x_{4},\x_3,\x_2)$ will overlap with both $p_a(\x_{1:3})$ $p_{b_4}(\x_4|\x_{3},\x_2,\x_1)$, and thus the previous strategy cannot be used to prove linear independence as the product does not directly reduce to Eq. (\ref{eq:joint_first_order}). Similar arguments apply for $T=6$, where $\x_3$ overlaps with all the functions in Eq. (\ref{eq:joint_product_3}).

To ensure linear independence of the joint distribution family for any lag $M$, we take an augmented variable approach, where we group variables to force overlaps only between consecutive distributions (similar to Eq. (\ref{eq:joint_first_order})). Therefore, the previous example with $M=3$ results as follows
\vspace{-.5em}
\begin{multline}
    \Pi^3_{\A} \otimes (\otimes^3 \mathcal{P}^3_{\B}) = \Big\{p_a(\textcolor{purple}{\x_{1:3}}) \\
    p_{b_{4:6}}(\x_{4:6}|\textcolor{purple}{\x_{1:3}}) |\quad a\in\A, b_{4:6}\in(\times^{3}\B)\Big\}.
\end{multline}
Now, similar techniques used in the case $M=1$ should also apply, provided that the product distribution $\{p_{b_{4:6}}(\x_{4:6}|\x_{1:3})\}$ satisfies the conditions (a4-a6) in Lemma \ref{lemma:linear_independence_two_nonlinear_gaussians}. In Appendix \ref{app:extending_assumptions}, we show that extending assumptions to product distributions (Lemmas \ref{lemma:extension_b4} and \ref{lemma:extension_b5}) as the above requires to strengthen the assumptions on the non-parametric function families. More specifically, the stronger assumption (b4) requires linear independence of the family $\mathcal{P}^{M}_{\B}$ to hold on every subset of a full measure set of $\R^{dM}$; whereas for $M=1$, only a non-zero measure set $\mathcal{Y}\subset\R^d$ suffices. In the experiments (Sec. \ref{sec:experiments_msm}, Figure \ref{fig:model_assumptions}), we illustrate this requirement by comparing estimation for non-zero and full measure sets. The above strategy ensures linear independence of product families when $T$ is a multiple of the lag $M$. In Theorem \ref{thm:linear_independence_joint_non_parametric} we use the induction technique for $T=a\cdot M$, with $a\in\mathbb{Z}^{+}$ to prove linear independence for any $T>M$.

\vspace{-.25em}
\subsection{Regime-dependent Causal Discovery}\label{sec:causal_discovery}

In regime-dependent time series \citep{saggioro2020reconstructing}, the SEM is time-dependent. However, it is \emph{causally stationary} \citep{assaad2022survey} when conditioned on discrete latent variables  $s_t\in\{1, \dots, K\}$ at each time step $t$. Therefore, the corresponding causal graph becomes a \emph{regime-dependent causal graph} which we denote as $\mathbf{G}_{1:K}:=\{\mathbf{G}_k\}_{k=1}^K \in (\times^K \mathbb{G})$. It can be defined as a collection of causal graphs such that for each $k\in\{1,\dots, K\}$, $\mathbf{G}_{k}\in\mathbb{G}$ encodes the SEM at time $t$ if $s_t=k$. To establish connections with the previous parametric high-order MSMs, we assume an \emph{additive noise model} (ANM; \citep{hoyer2008nonlinear}) with absence of instantaneous effects. Therefore, the corresponding regime-dependent SEM is expressed as follows
\vspace{-.5em}
\begin{multline}
    x^{(j)}_t = f^{(j)}_{s_t}(\{x^{(i)}_{t-\tau} | x^{(i)}_{t-\tau} \in \mathbf{Pa}^{(j)}_{s_t}(\tau), \\ 1\leq \tau \leq M \}) + \bm{\varepsilon}^{(j)}_{t}, \quad \bm{\varepsilon}^{(j)}_{t} \sim p_{\bm{\varepsilon}^{(j)}},
\end{multline}
\vspace{-1.75em}

where $\mathbf{Pa}^{(j)}_{s_t}(\tau)$ denotes the parents of variable $j$ with lag $\tau$, associated to $\mathbf{G}_k$ for $s_t=k$.
\vspace{-.5em}
\begin{definition}\label{def:regime_dependent_graph}
    We say that the regime-dependent graph of a MSM with order $M$ is identifiable up to permutations if for any $p,\tilde{p}\in \mathcal{M}^T(\Pi_{\A}^M, \mathcal{P}_{\B}^M)$, with respective regime-dependent graphs $\mathbf{G}_{1:K}\in (\times^{K} \mathbb{G})$ and $\tilde{\mathbf{G}}_{1:\tilde{K}}\in (\times^{\tilde{K}} \mathbb{G})$, such that $p(\x_{1:T})=\tilde{p}(\x_{1:T})$; $K=\tilde{K}$ and there exists a permutation $\sigma$ such that $\mathbf{G}(k) = \tilde{\mathbf{G}}(\sigma(k))$ for $k\in K$.
\end{definition}
\vspace{-.75em}
We establish identifiability for regime-dependent time series as follows.
\begin{corollary}\label{cor:causality_identifiability}
Consider the Markov switching model family with order $M$ defined in Equation (\ref{eq:msm_finite_mixture}) under non-linear Gaussian families $\mathcal{M}(\mathcal{I}_{\A}^M, \mathcal{G}_{\B}^M)$
. Assume (i) from \ref{thm:identifiability_main} is satisfied;
\vspace{-.5em}
\begin{enumerate}[leftmargin=*]
    \item[(ii)] Independent noise: the transition distribution mean in  $\bm{m}(\cdot,b): \R^{dM} \rightarrow \R^d$ is analytic, and $\bm{\Sigma}(\cdot,b):=\bm{\Sigma}(b)$ is diagonal and constant w.r.t $(\x_{t-1},\dots,\x_{t-M})$;
    \item[(iii)] Minimality: if $x^{(i)}\notin \mathbf{Pa}^{(j)}_{k}(\tau), 1 \leq \tau \leq M$, then the $i$-th dimension of $\bm{m}(\x_{t-1},\dots,\x_{t-M},k)$ is constant w.r.t. $x^{(j)}_{t-\tau}$.
\end{enumerate}
\vspace{-.5em}
Then, the regime-dependent graph is identifiable up to permutations (Definition \ref{def:regime_dependent_graph}).
\end{corollary}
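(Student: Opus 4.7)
The plan is to reduce Corollary~\ref{cor:causality_identifiability} to Theorem~\ref{thm:identifiability_main} and then read off each regime-dependent causal graph by detecting, coordinate by coordinate, which lagged variables the identified transition mean genuinely depends on.

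First I would check that the hypotheses of Theorem~\ref{thm:identifiability_main} hold in the present setting. Unique indexing~(i) is directly assumed. The ANM form in~(ii) makes $\bm{\Sigma}(\cdot,b)$ constant (hence trivially analytic), while $\bm{m}(\cdot,b)$ is analytic by hypothesis, so both moments of $\mathcal{G}^M_{\B}$ are analytic. Thus Theorem~\ref{thm:identifiability_main} applies: for any $p,\tilde p\in \mathcal{M}^T(\mathcal{I}^M_{\A},\mathcal{G}^M_{\B})$ with $p(\x_{1:T})=\tilde p(\x_{1:T})$ we obtain $K=\tilde K$, and the trajectory-level matching of Definition~\ref{def:identifiability}(4), combined with the unique indexing assumption~(i) promoted to the level of individual regimes, yields a permutation $\sigma$ of $\{1,\dots,K\}$ such that
\begin{equation*}
p_{b=k}(\x_t \svert \x_{t-1:t-M})=p_{\tilde b=\sigma(k)}(\x_t \svert \x_{t-1:t-M}), \quad \forall \x_{t-M:t}\in\R^{d(M+1)},\ \forall k.
\end{equation*}

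Second, since both sides are Gaussian with the same constant diagonal covariance, pointwise equality of densities forces pointwise equality of means,
\begin{equation*}
\bm{m}(\x_{t-1:t-M},k)=\tilde{\bm{m}}(\x_{t-1:t-M},\sigma(k)) \quad \forall \x_{t-1:t-M}\in\R^{dM},
\end{equation*}
together with equality of the covariances. Under the ANM parameterisation, the $j$-th coordinate of $\bm{m}(\cdot,k)$ is precisely the SEM drift $f^{(j)}_k$, so this amounts to $f^{(j)}_k=\tilde f^{(j)}_{\sigma(k)}$ as functions on $\R^{dM}$ for every $j$. Since $\bm{m}(\cdot,k)$ is analytic, so is every partial derivative $\partial_{x^{(i)}_{t-\tau}} f^{(j)}_k$. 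Minimality~(iii), read in the standard SEM convention that the listed parents are exactly those arguments on which the drift is non-constant, gives
\begin{equation*}
x^{(i)}_{t-\tau}\in \mathbf{Pa}^{(j)}_{k}(\tau) \ \Longleftrightarrow\ \partial_{x^{(i)}_{t-\tau}} f^{(j)}_k\not\equiv 0 \text{ on } \R^{dM},
\end{equation*}
and the analogous statement for the tilde model. Because $f^{(j)}_k=\tilde f^{(j)}_{\sigma(k)}$, this equivalence transports across, giving $\mathbf{Pa}^{(j)}_k(\tau)=\widetilde{\mathbf{Pa}}^{(j)}_{\sigma(k)}(\tau)$ for every $j\in\{1,\dots,d\}$ and $\tau\in\{1,\dots,M\}$, which is exactly $\mathbf{G}_k=\tilde{\mathbf{G}}_{\sigma(k)}$ in the sense of Definition~\ref{def:regime_dependent_graph}.

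The step I expect to be most delicate is extracting the regime-level permutation $\sigma$ cleanly: Theorem~\ref{thm:identifiability_main} identifies whole trajectories $(b^i_{M+1},\dots,b^i_T)$ up to relabelling of mixture indices, and one must argue via unique indexing~(i) that this descends to a genuine bijection on single regime labels, rather than only on their $T-M$-tuples. A secondary subtlety is that minimality~(iii) is being used in its \emph{iff} reading; with only the one-sided version spelled out in~(iii), the recovered graph matches the true one only after pruning edges whose corresponding partial derivatives vanish identically. Once these two conventions are pinned down, the remaining steps are routine Gaussian moment matching and elementary analytic function theory.
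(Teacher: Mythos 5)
Your proposal is correct and follows essentially the same route as the paper: verify the hypotheses of Theorem \ref{thm:identifiability_main} (constant covariance is trivially analytic), match the Gaussian transition moments up to a permutation of regimes, and use analyticity of $\bm{m}(\cdot,k)$ together with minimality to read the parent sets off the (identically-)vanishing partial derivatives. The two subtleties you flag are exactly the ones the paper handles — the time-consistency of $\sigma$ follows from condition 2 of Definition \ref{def:identifiability} (repeated indices $b^i_{t_1}=b^i_{t_2}$ are preserved), and the paper likewise reads minimality in the \emph{iff} sense by defining the recovered graph through the support of the Jacobian — so no substantive gap remains.
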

\vspace{-1em}
See Appendix \ref{app:causality_proof} for the proof. It suffices to show that the regime dependent structure is recovered up to permutations, given the identifiable transition derivatives from Theorem \ref{thm:identifiability_main}. Under assumptions (i-iii), we note that other structures in traditional causal time series analysis such as the \emph{full time graph} \citep{peters2017elements} are not identifiable in the context of high-order MSMs; as we have no access to the discrete latents $\s_{M:T}$, but rather their distribution up to permutations.
\begin{figure*}
    \centering
    \begin{subfigure}{.32\linewidth}
        \centering
        \includegraphics[width=\linewidth]{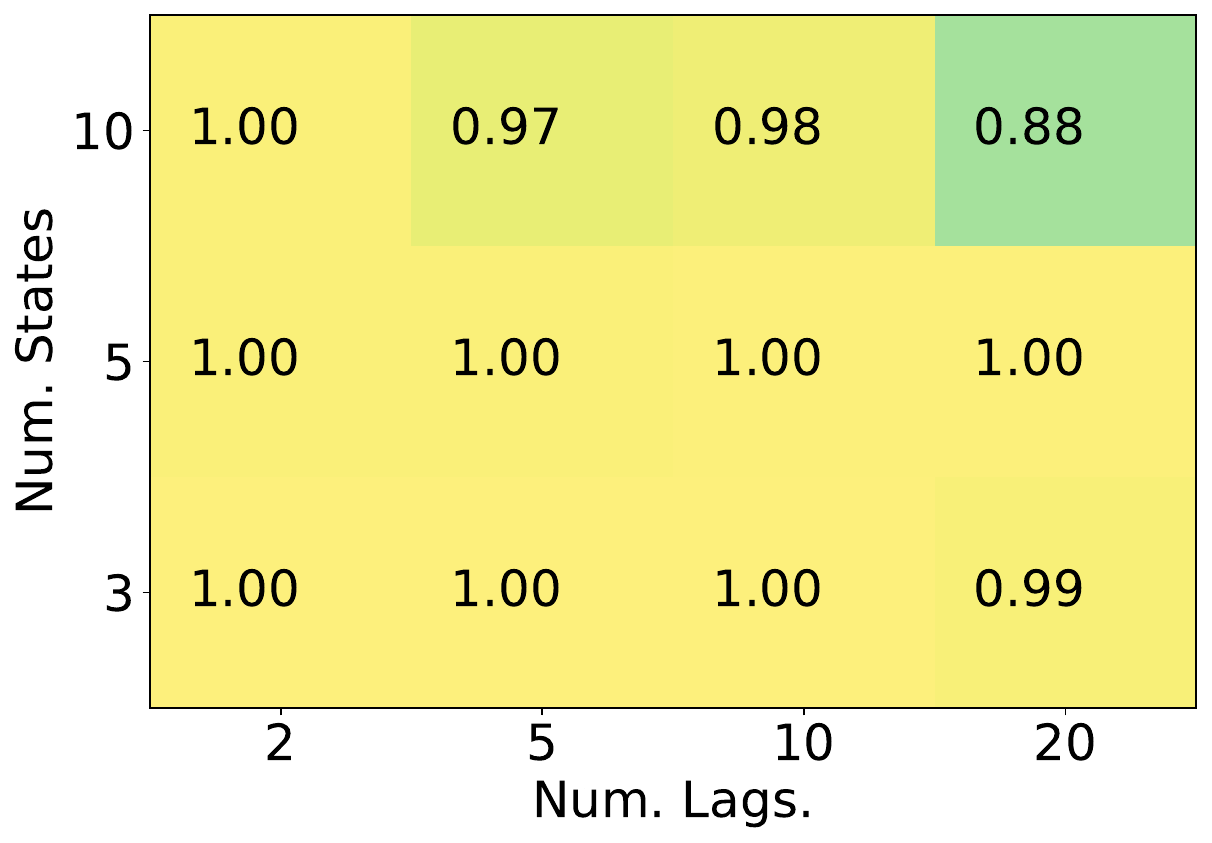}
        \vspace{-1.5em}
        \caption{Low Sparsity.}
        \label{fig:low_sparsity}
    \end{subfigure}
    \begin{subfigure}{.32\linewidth}
        \centering
        \includegraphics[width=\linewidth]{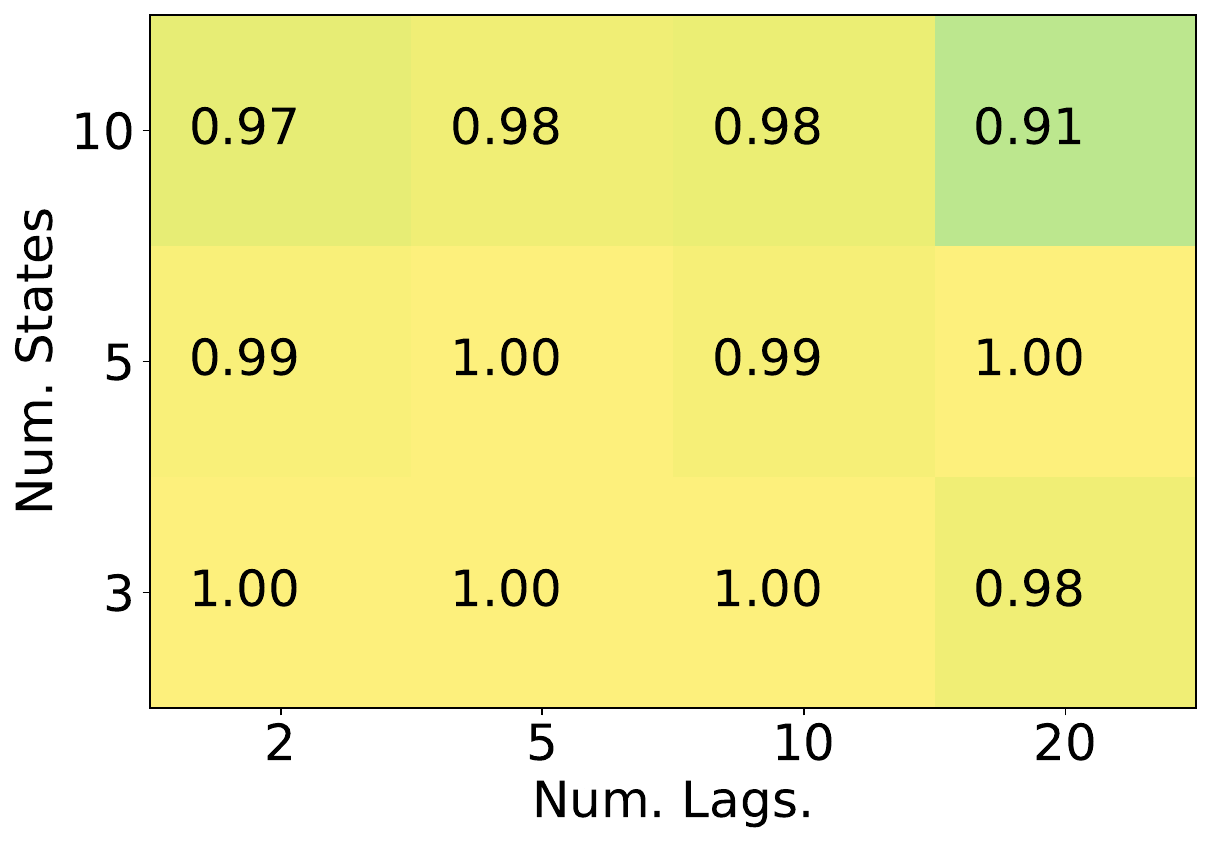}
        \vspace{-1.5em}
        \caption{High Sparsity.}
        \label{fig:high_sparsity}
    \end{subfigure}
    \begin{subfigure}{.32\linewidth}
        \centering
        \includegraphics[width=\linewidth]{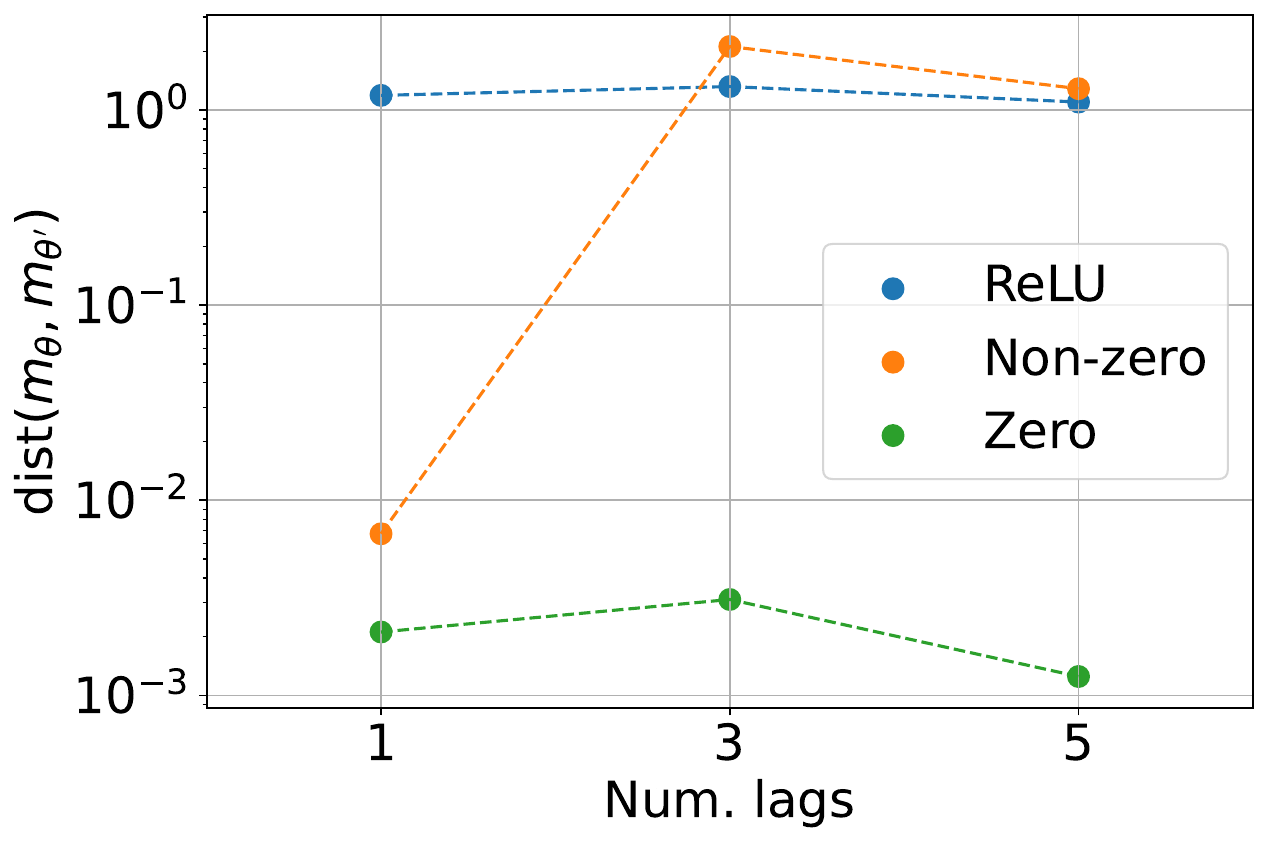}
        \vspace{-1.5em}
        \caption{Model assumptions.}
        \label{fig:model_assumptions}
    \end{subfigure}
    \vspace{-.5em}
    \caption{Synthetic experiments on high-order MSMs: averaged $F_1$ score on (a) low and (b) high sparsity settings; (c) averaged $L_2$ distance using different model assumptions.}
    \label{fig:experiments_synth}
    \vspace{-1em}
\end{figure*}

\vspace{-1.em}
\section{Estimation}
We use Expectation Maximisation (EM) for efficient mixture model estimation \citep{bishop2006pattern}, assuming we have a dataset of $N$ sequences of length $T$. Below provide update rules for a sample $\x_{1:T}$, but note that we use mini-batch stochastic gradient ascent when the number of sequences in the dataset is large. As mentioned, we assume a first-order Markov chain for the structure of the discrete states, where their posterior distribution at time $t$, $\{\gamma_{t,k}(\x_{1:T}) = p_{\mparam}(s_t=k|\x_{1:T}) \ \}$, can be computed using forward and backward messages: $\{ \alpha_{t,k}(\x_{1:t}) = p_{\mparam}(\x_{1:t},s_t=k) \}$, and $\{ \beta_{t,k}(\x_{t+1:T}) = p_{\mparam}(\x_{t+1:T}|s_t=k) \}$ respectively:
\begin{equation}
    \gamma_{t,k}(\x_{1:T}) = \frac{\alpha_{t,k}(\x_{1:t})\beta_{t,k}(\x_{t+1:T})}{\sum_{k=1}^K\alpha_{t,k}(\x_{1:t})\beta_{t,k}(\x_{t+1:T})},
\end{equation}
where $\alpha_{t,k}(\x_{1:t})$ and $\beta_{t,k}(\x_{1:t})$ are computed as follows:
\begin{multline}
\alpha_{t,k}(\x_{1:t}) = p_{\mparam}(\x_t | \x_{t-1:t-M}, s_t = k)  \\
\times \sum_{k'=1}^K p_{\mparam}(s_t = k | s_{t-1} = k') \alpha_{t-1,k'}(\x_{1:t-1}),
\end{multline}
\vspace{-1em}
\begin{multline}
   \beta_{t,k}(\x_{t+1:T}) = \sum_{k'=1}^K p_{\mparam}(\x_{t+1} | \x_{t:t+1-M}, s_{t+1} = k') \\
\times p_{\mparam}(s_{t+1} = k' | s_{t} = k) \beta_{t+1,k'}(\x_{t+1:T}). 
\end{multline}
Given that we parametrise the transition distributions using analytic neural networks, we adopt Generalised EM (GEM) \citep{dempster1977maximum}, where the following gradient ascent step is performed:
\begin{multline}\label{eq:update_rule}
    \mparam^{\text{new}} \leftarrow \mparam^{\text{old}} +
    \eta \sum_{t=M+1}^T \sum_{k=1}^K \gamma_{t,k}(\x_{1:T}) \\ \times \nabla_{\mparam} \log p_{\mparam}(\x_t| \x_{t-1:t-M}, s_t=k).
\end{multline}
The gradient term can be computed using back-propagation. This approach is well-established in the literature \citep{halva2020hidden}, and convergence to a local maximum of the likelihood is guaranteed under the large data limit.
\vspace{-.25em}
\section{Experiments}\label{sec:exp}
We first conduct experiments on synthetic data to (i) explore the scalability of our approach in learning regime-dependent graphs and (ii) empirically verify whether the assumptions in Theorem \ref{thm:identifiability_main} are necessary for identifiability. Furthermore, we motivate our approach using real brain activity data.

\vspace{-.5em}
\subsection{Synthetic Data}\label{sec:experiments_msm}

We generate $N=10000$ sequences of length $T=200$ and $d=5$ dimensions using MSMs described in Section \ref{sec:background}. The estimated functions that parameterise the transitions are evaluated using the permutation with the lowest error. We use diagonal, fixed covariance matrices and generate analytic transition means using random neural networks with cosine activations. Locally connected networks \citep{zheng2018dags} are used, with data dependencies following a sampled regime-dependent graph with a predefined sparsity ratio. See Appendix \ref{app:experiment_details} for details on data generation, model evaluation and training.

\paragraph{Regime-dependent causal structure estimation} To evaluate our approach, we generate data with different lags $M\in\{2, 5, 10, 20\}$, states $K\in\{3, 5, 10\}$, and sparsity settings. The estimated causal structure is computed via thresholding the Jacobian of the estimated transition functions. We consider moderate sparsity (up to $20$ parents per variable) and high sparsity ($5$ parents per variable). To maintain similar sparsity levels across different lags, the sparsity ratio increases with $M$. Results are reported respectively in Figures \ref{fig:low_sparsity} and \ref{fig:high_sparsity} respectively, where we compute the averaged $F_1$ score across states after accounting for the permutation. High $F_1$ scores are consistently observed for increasing states and lags, demonstrating the effectiveness of our identifiable MSM for regime-dependent causal discovery with high-order temporal dependencies.

\paragraph{Assumption violations} We have only shown sufficient conditions for identifiability under analytic transition functions (assumption (ii) in Theorem \ref{thm:identifiability_main}). 
To empirically verify identifiability under weaker assumptions, we experiment with: ReLU networks (fig. \ref{fig:model_assumptions}, ReLU); piece-wise analytic functions with a non-zero measure intersection (fig. \ref{fig:model_assumptions}, Non-zero); and analytic functions assumed in (ii), which have zero-measure intersection (fig. \ref{fig:model_assumptions}, Zero). Although ReLU networks violate (ii), identifiability should hold if assumption (d3) in Theorem \ref{thm:linear_independence_nonlinear_gaussian} holds (zero-measure intersection of Gaussian moments). Therefore, we sample random ReLU networks with the same causal structure across regimes. 

\begin{figure*}
    \centering
    \begin{subfigure}{.32\linewidth}
        \centering
        \includegraphics[width=\linewidth]{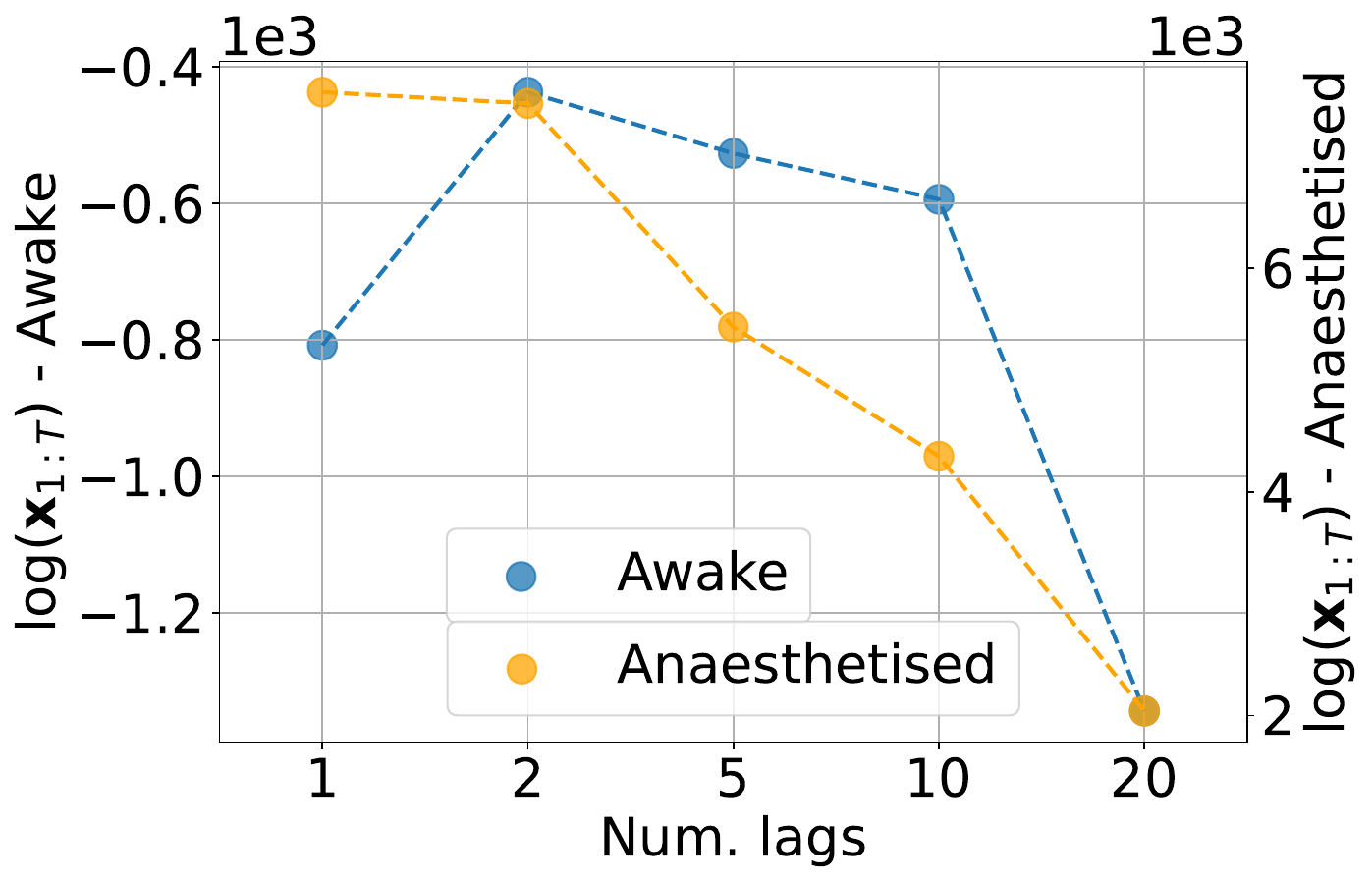}
        \vspace{-1.5em}
        \caption{}
        \label{fig:loglikelihood}
    \end{subfigure}
        \begin{subfigure}{.65\linewidth}
        \centering
        \includegraphics[width=\linewidth]{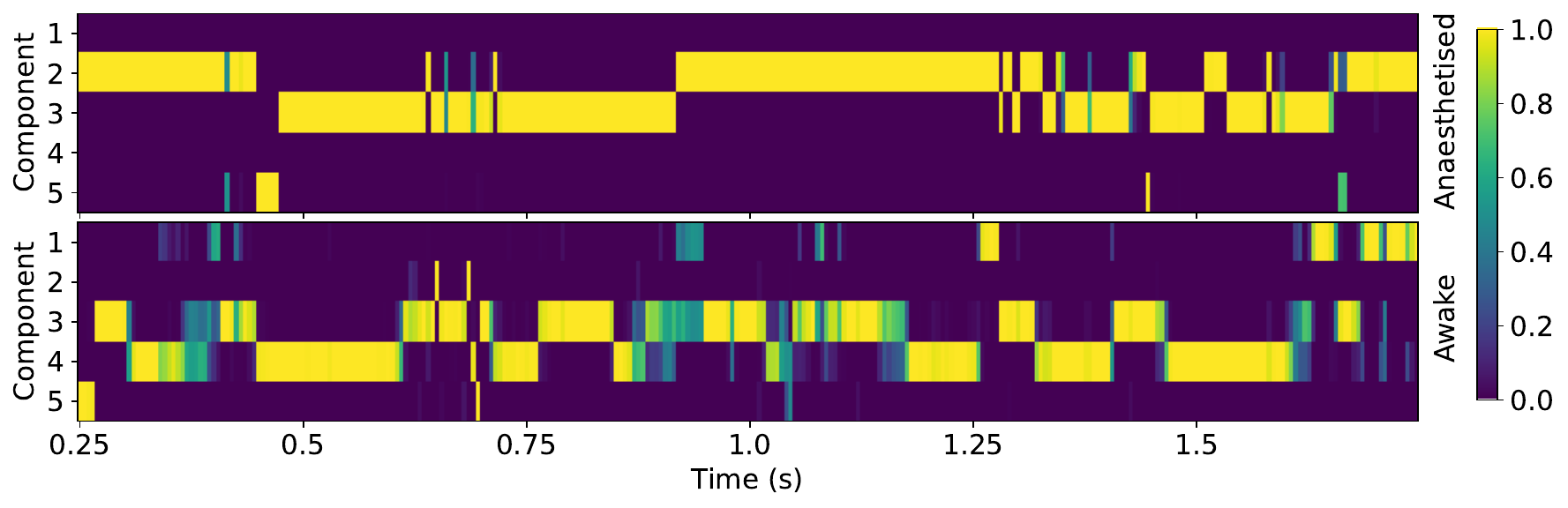}
        \vspace{-1.5em}
        \caption{}
        \label{fig:example_neuro}
    \end{subfigure}
    \vspace{-1em}
    \caption{(a) Test log-likelihood of ECoG data using different lags. (b) Posterior distribution of an ECoG epoch on Awake and Anaesthetised conditions using $K=5$ states and $M=2$ lags.}
    \label{fig:neuroscience_exp}
    \vspace{-1em}
\end{figure*}

Figure \ref{fig:model_assumptions} shows the results with averaged $L_2$ distances across regimes after accounting for permutation. ReLU networks are estimated with higher error as they directly violate assumption (ii) and the equal causal structures across regimes reduce the chances of meeting (d3) in Theorem \ref{thm:linear_independence_nonlinear_gaussian}. Regarding piece-wise analytic functions (Non-zero), we observe an abrupt increase in $L_2$ distance for higher lags ($M > 1$) when compared to the results with analytic functions. This confirms our theoretical findings discussed in Section \ref{sec:msm_theory}, where for $M>1$ the assumptions need to be strengthened to ensure zero measure intersection, whereas for $M=1$ a non-zero measure intersection is allowed.

\vspace{-.5em}
\subsection{Brain Activity Data}

To demonstrate the scalability of our method, we apply it to high-density electrocorticography (ECoG) brain activity from the NeuroTycho database\footnote{http://neurotycho.org/}, originally presented by \citet{yanagawa2013}. ECoG data consisting of signals from 128 electrodes across the brain was recorded from a macaque monkey under two conditions: normal wakefulness (Awake) and loss of consciousness induced through propofol anaesthesia (Anaesthetised). Our focus is to assess whether our identifiable high-order MSM captures different dynamics across conditions, enabling hypothesis testing in neuroscience. Based on prior work \citep{mediano2023spectrally}, we expect to observe more rapidly changing dynamics in the awake condition, captured by more frequent transitions between the states in our MSM. 

The ECoG recording is sampled at 1kHz and comprises ``awake'' and ``anaesthetised'' segments, each lasting 929.8 and 650.65 seconds respectively. We select 21 electrodes approximately corresponding to the visual cortex of the brain. We first apply a second-order Butterworth notch filter at 50Hz to eliminate line noise. Then, we downsample the data to 200Hz, standardise each channel independently, and chunk each sequence into epochs of 2 seconds ($T=400$). This yields $464$ ``awake'' and $325$ ``anaesthetised'' segments, with $50$ of each saved for testing.

Figure \ref{fig:loglikelihood} shows test log-likelihoods for MSMs with different lags. The ``awake'' condition log-likelihoods are lower, which illustrate the increased complexity in dynamics. Our identifiable MSM with $M=1$ performs best on the ``anaesthetised'' condition, while $M=2$ is better on the ``awake'' condition. This suggests $M=1$ is better for simpler dynamics, whereas higher-orders are better for more complex ones. For $M > 1$, the log-likelihood decreases with increasing lags, possibly due to the increased training complexity. Table \ref{tab:table_neuro} presents state transition frequencies (in Hz), with consistently higher values in the ``awake'' condition. To suppress transition artifacts, we convolve the posterior signal using a uniform kernel of length $3$. Furthermore, the example sequence in Figure \ref{fig:example_neuro} shows prolonged state preservation in the ``anaesthetised'' condition. This result motivates identifiable MSMs for neuroscience, complementing existing methodologies \citep{mediano2023spectrally}.

\begin{table}[]
    \centering
    \caption{State transition frequency (in Hz) on visual cortex ECoG data with $M=2$ using different states.}
    \begin{tabular}{l|c|c|c}
        \toprule
        \multirow{2}{*}{Condition} & \multicolumn{3}{c}{K} \\
        \cline{2-4}
        & 3 & 5 & 10\\
        \midrule
        Awake & 17.60 & 19.62 & 23.74\\
        Anaesthetised & 5.75 & 6.51 & 14.64 \\
        \bottomrule
    \end{tabular}
    \label{tab:table_neuro}
    \vspace{-.75em}
\end{table}
\vspace{-.25em}
\section{Conclusions}

In this work, we prove identifiability of regime-dependent causal discovery using identifiable high-order Markov Switching Models (MSMs). Our key contribution is the generalisation of identifiable first-order MSMs to higher-order temporal dependencies via strengthening the assumptions in the non-parametric case. This enables model parametrisations via analytic Gaussian transitions. We verify our theoretical findings empirically through synthetic experiments and demonstrate the applicability of our approach to realistic domains, such as hypothesis testing in neuroscience. Future studies could be focused on incorporating instantaneous effects, including full time graph identifiability via consistent state identification, or leveraging identifiable high-order MSMs for nonstationary latent causal models.





\bibliographystyle{plainnat}
\renewcommand{\bibsection}{\subsubsection*{References}}

\bibliography{ci4ts2024-template}

\newpage

\onecolumn

\title{Identifying Nonstationary Causal Structures with \\  High-Order Markov Switching Models (Supplementary Material)}
\maketitle

\appendix
\section{Identifiability in finite mixture models}\label{app:non_parametric_mixtures}

Our theoretical framework uses finite mixture model results from \citet{yakowitz1968identifiability}, which show identifiability of finite mixtures via linear independence of the family of mixing components. Consider a distribution family with functions defined on $\x\in\R^d$,
\begin{equation}\label{eq:family_functions}
    \mathcal{F}_{\A} := \{F_{a}(\x) | a \in \A \},
\end{equation}
where $F_{a}(\x)$ is a $d$-dimensional CDF. The index set $\A$ is assumed to satisfy that $F_{a}(\x)$, as a function of $(\x,a)$, is measurable on $\R^d\times\A$. We introduce the notion of linear independence under finite mixtures of a family $\mathcal{F}_{\A}$.
\begin{definition}
\label{def:linear_independence_finite_mixture}
A family of functions $\mathcal{F}_{\A}$ (Eq. (\ref{eq:family_functions})) is said to contain linearly independent functions under finite mixtures, if for any $\A_0 \subset \A$ such that $|\A_0| < +\infty$, the functions in $\{f_a(\x) | a \in \A_0 \}$ are linearly independent.
\end{definition}
The above definition is a weaker requirement of linear independence on function families as it allows linear dependence from the linear combination of infinitely many other functions. Consider the following finite mixture distribution family:
\begin{equation}
    \mathcal{H}_{\A} := \{ H(\x) = \sum_{i=1}^L c_i F_{a_i}(\x) | L < +\infty, a_i \in \A, a_i \neq a_j, \forall i \neq j, \sum_{i=1}^L c_i = 1 \},
\end{equation}
which is defined from a linear combination of CFDs in $\mathcal{F}_{\A}$. Now we specify the definition of \emph{identifiable finite mixture family} following \citet{yakowitz1968identifiability}.
\begin{definition}
\label{def:identifiability_finite_mixture}
The finite mixture family $\mathcal{H}$ is said to be identifiable up to permutations, when for any two finite mixtures $H(x) = \sum_{i=1}^L c_i F_{a_i}(\x)$ and $\tilde{H}(x) = \sum_{i=1}^{\tilde{L}} \tilde{c}_i F_{\tilde{a}_i}(\x)$, $H(\x) = \tilde{H}(\x)$ for all $\x \in \mathbb{R}^d$, if and only if $L = \tilde{L}$ and for each $1 \leq i \leq L$ there is some $1 \leq j \leq \tilde{L}$ such that $c_i = \tilde{c}_j$ and $F_{a_i}(\x) = F_{\tilde{a}_j}(\x)$ for all $\x \in \mathbb{R}^d$.
\end{definition}
Then, identifiability of finite mixture models is stated as follows.
\begin{proposition}\citep{yakowitz1968identifiability}
\label{prop:mixture_cdf_identifiability}
The finite mixture distribution family $\mathcal{H}$ is identifiable up to permutations, if and only if functions in $\mathcal{F}$ are linearly independent under the finite mixtures. 
\end{proposition}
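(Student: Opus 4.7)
The plan is to prove the biconditional by arguing the two implications separately. The proof is essentially an exercise in reorganizing a finite linear combination of CDFs as a difference of two mixtures, together with the boundary behavior of distribution functions which lets the necessity direction be rescued by a renormalization argument.

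For the direction ``linear independence $\Rightarrow$ identifiability,'' I would start from two equal finite mixtures $\sum_{i=1}^{L} c_i F_{a_i}(\x) = \sum_{j=1}^{\tilde L} \tilde c_j F_{\tilde a_j}(\x)$, and form the combined finite index set $\A_0 = \{a_1,\ldots,a_L\} \cup \{\tilde a_1,\ldots,\tilde a_{\tilde L}\} \subset \A$. Collecting coefficients per distinct index (this is unambiguous by the distinctness condition built into Definition~\ref{def:identifiability_finite_mixture}) turns the equality into a finite linear combination $\sum_{a\in\A_0} d_a F_a(\x) = 0$ with $d_a$ equal to $c_i$, $-\tilde c_j$, or $c_i - \tilde c_j$ depending on whether $a$ appears only in the first mixture, only in the second, or in both. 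Linear independence under finite mixtures (Definition~\ref{def:linear_independence_finite_mixture}) forces every $d_a$ to vanish. Positivity of the mixture weights then yields a bijection between $\{a_i\}$ and $\{\tilde a_j\}$ with matching coefficients, which is exactly identifiability up to permutations; in particular $L=\tilde L$.

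For the converse ``identifiability $\Rightarrow$ linear independence,'' I would argue by contrapositive. Suppose there exist distinct $a_1,\ldots,a_n \in \A$ and scalars $\alpha_1,\ldots,\alpha_n$ not all zero with $\sum_i \alpha_i F_{a_i}(\x) = 0$ for every $\x$. Split the indices into $P = \{i : \alpha_i > 0\}$ and $N = \{i : \alpha_i < 0\}$, discarding zero-coefficient terms. Letting every coordinate of $\x$ tend to $+\infty$ and using $F_{a_i}(\x) \to 1$ yields $\sum_i \alpha_i = 0$, hence $S := \sum_{i\in P}\alpha_i = \sum_{i\in N}|\alpha_i|$; nontriviality forces $S>0$ and both $P,N$ nonempty. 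Dividing by $S$ produces
\[
\sum_{i \in P} \tfrac{\alpha_i}{S}\, F_{a_i}(\x) \;=\; \sum_{j \in N} \tfrac{|\alpha_j|}{S}\, F_{a_j}(\x),
\]
that is, two bona fide finite mixtures in $\mathcal{H}_{\A}$ with positive weights summing to one, supported on disjoint index sets $P$ and $N$, that induce the same CDF. This directly contradicts Definition~\ref{def:identifiability_finite_mixture}, completing the contrapositive.

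I do not foresee any serious obstacle: this is essentially the classical Yakowitz--Spragins argument, and neither direction requires more than elementary manipulations. The one subtle point is in the necessity step, where an arbitrary linear dependence must be converted into a pair of \emph{valid} probability mixtures. Renormalization by $S$ is available precisely because CDFs tend to $1$ at $+\infty$, automatically balancing the positive and negative parts of the dependence; for function families without this boundary constraint the equivalence would fail, which is why the hypothesis that each $F_a$ is a CDF (rather than an arbitrary function) is essential.
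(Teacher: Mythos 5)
Your proof is correct. The paper does not prove this proposition itself --- it is imported verbatim from \citet{yakowitz1968identifiability} --- and your argument is precisely the classical Yakowitz--Spragins proof of that result (difference of mixtures plus linear independence for sufficiency; normalization of a nontrivial dependence into two valid mixtures, using $F_a(\x)\to 1$, for necessity), so there is nothing to compare against beyond noting that your reconstruction of the cited argument is sound.
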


\section{Proof of Theorem \ref{thm:identifiability_main}}\label{app:proof_identifiability}
\paragraph{Sketch of the proof:} We organise the proof strategy into 4 steps.
\begin{enumerate}
    \item We show the requirement for identifiability is linear independence of the joint distribution family.
    \item We provide linear independence results for products of non-parametric functions. First, we state the linear dependence result from \citet{balsells-rodas2024on}, and we then strengthen the assumptions to allow products of $M$ functions.
    \item We prove linear independence of the joint distribution family for non-parametric transitions of order $M$.
    \item We show the parametric assumptions (i), (ii) satisfy linear independence of the joint distribution family.
\end{enumerate}
We note the strategy follows closely from \citet{balsells-rodas2024on}, where identifiability is shown for first-order autoregressive dependencies. 
\subsection{Linear independece renders identifiability}

\citet{balsells-rodas2024on} shows that Proposition \ref{prop:mixture_cdf_identifiability} can be generalised to CFDs defined on $\x_{1:T}\in\R^{Td}$. We note the joint distribution family of the MSM with order $M$ $\mathcal{P}_{\A,\B}^{M,T}:= \Pi_{\A}^M\otimes(\otimes_{t=M+1}^T \mathcal{P}_{\B}^M)$ has linear independent components  if and only if its CDF also contains linear independent components. We include the following extension to Proposition \ref{prop:mixture_cdf_identifiability} which is adapted from \citet{balsells-rodas2024on}.

\begin{proposition}
\label{prop:mixture_pdf_identifiability}
\textbf{(Adapted from \citep{balsells-rodas2024on})} Consider the distribution family given by Eq. \ref{eq:msm_finite_mixture}.
Then the joint distribution in $\mathcal{M}^T(\Pi_{\A}^M,\mathcal{P}_{\B}^M)$ is identifiable up to permutations if and only if functions in $\mathcal{P}^{M,T}_{\A,\B}$ are linearly independent under finite mixtures.
\end{proposition}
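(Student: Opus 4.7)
The plan is to reduce the claim to Yakowitz's finite-mixture identifiability theorem (Proposition \ref{prop:mixture_cdf_identifiability}) by treating each product $p_a(\x_{1:M})\prod_t p_{b_t}(\x_t\svert\x_{t-1:t-M})$ as a single joint density on $\R^{dT}$, and then translating the resulting ``matching of mixture components'' into the factor-by-factor statement required by Definition \ref{def:identifiability}.

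First, I would observe that integration from PDF to CDF is a linear, injective map on the relevant density class, so linear independence under finite mixtures of the joint densities in $\mathcal{P}^{M,T}_{\A,\B}$ is equivalent to linear independence of the corresponding family of joint CDFs. Applying Proposition \ref{prop:mixture_cdf_identifiability} to this lifted CDF family reduces the claim to showing that Yakowitz-matching of joint products is equivalent to the factor-level MSM matching demanded by Definition \ref{def:identifiability}.

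For the ``only if'' direction, once Yakowitz provides an index pairing $i \leftrightarrow j$ with $c_i = \tilde c_j$ and
\begin{equation*}
p_{a^i}(\x_{1:M}) \prod_{t=M+1}^T p_{b^i_t}(\x_t\svert\x_{t-1:t-M}) = p_{\tilde a^j}(\x_{1:M}) \prod_{t=M+1}^T p_{\tilde b^j_t}(\x_t\svert\x_{t-1:t-M})
\end{equation*}
on all of $\R^{dT}$, I would marginalise over $\x_{M+1:T}$ to recover item 3 (initial-factor matching $p_{a^i}=p_{\tilde a^j}$), and then invoke chain-rule uniqueness of conditionals sequentially to extract each transition factor $p_{b^i_t}=p_{\tilde b^j_t}$ (item 4), provided the initial density is positive on a full-measure set so that division is well-defined. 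The cardinalities $K_0 = \tilde K_0$ and $K = \tilde K$ follow by combining the injective trajectory indexing $(a^i,b^i_{M+1:T}) \neq (a^j,b^j_{M+1:T})$ for $i \neq j$ from Eq. (\ref{eq:msm_finite_mixture}) with the unique indexing on $\A$ and $\B$: distinct indices produce distinct factors, so the number of distinct factors appearing across all matched trajectories coincides with the state cardinality. Item 2, preservation of intra-trajectory repetitions $b^i_{t_1}=b^i_{t_2}\Rightarrow \tilde b^j_{t_1}=\tilde b^j_{t_2}$, follows because the pairing is at the level of the whole transition-label sequence and each distinct factor corresponds to a unique label.

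For the ``if'' direction, any nontrivial finite linear combination $\sum_i \alpha_i F_{(a^i,b^i_{M+1:T})} = 0$ in $\mathcal{P}^{M,T}_{\A,\B}$ can be rearranged by splitting positive and negative coefficients and rescaling into an equality between two distinct positive finite mixtures in $\mathcal{M}^T(\Pi_{\A}^M,\mathcal{P}_{\B}^M)$, violating MSM identifiability and thus yielding linear independence. The main obstacle I anticipate is the factor-by-factor disentangling in the ``only if'' step: the marginalisation-and-division argument must be justified on a full-measure set so that the recovered equalities for $p_a$ and each $p_{b_t}$ hold pointwise on $\R^{dM}$ and $\R^{d(M+1)}$ respectively, and the unique-indexing assumption must be carefully invoked to convert ``matching joint products'' into ``matching state cardinalities plus matching factor labels''; the rest is essentially Yakowitz plus linearity of integration.
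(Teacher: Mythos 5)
Your proposal is correct and follows essentially the same route as the paper: the lift from PDFs to CDFs to invoke Proposition \ref{prop:mixture_cdf_identifiability}, and the positive/negative-coefficient splitting for the converse, are exactly the intended argument (the paper states this Proposition as adapted from \citet{balsells-rodas2024on} without reproving it). Note only that your marginalisation-and-division extraction of the individual factors and the repetition-preservation argument belong to the paper's proof of Theorem \ref{thm:identifiability_msm} rather than to this Proposition, and there they are carried out in the same way you describe.
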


We also adapt the following result from \citep{balsells-rodas2024on} and provide the proof for completeness. 
\begin{theorem}\label{thm:identifiability_msm} \textbf{(Adapted from \citep{balsells-rodas2024on})}
    Assume the functions in $\mathcal{P}^{T,M}_{\A,\B}$ are linearly independent under finite mixtures, then the distribution family $\mathcal{M}^T(\Pi_{\A}^M,\mathcal{P}_{\B}^M)$ is identifiable as defined in Def \ref{def:identifiability}.
\end{theorem}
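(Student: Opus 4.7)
The plan is to reduce Theorem~\ref{thm:identifiability_msm} directly to Proposition~\ref{prop:mixture_pdf_identifiability} applied to the joint-distribution family $\jointfamily$, and then to extract the individual initial and transition factors from each matched product component by standard marginalisation and conditioning. First I would view each $p\in\msmfamily$ as a finite mixture over $\jointfamily$, with the $i$-th component being the product
\[
f_{(a^i,b^i_{M+1:T})}(\x_{1:T}) := p_{a^i}(\x_{1:M})\prod_{t=M+1}^T p_{b^i_t}(\x_t|\x_{t-1},\dots,\x_{t-M}).
\]
The uniqueness clause $(a^i,b^i_{M+1:T})\neq (a^j,b^j_{M+1:T})$ built into Eq.~\eqref{eq:msm_finite_mixture} guarantees these components are pairwise distinct as indices, so Proposition~\ref{prop:mixture_pdf_identifiability}, under the linear-independence hypothesis of the theorem, immediately yields $C=\tilde C$ together with a permutation $\sigma$ satisfying $c_i=\tilde c_{\sigma(i)}$ (property~1 of Definition~\ref{def:identifiability}) and $f_{(a^i,b^i_{M+1:T})}=f_{(\tilde a^{\sigma(i)},\tilde b^{\sigma(i)}_{M+1:T})}$ everywhere on $\R^{dT}$.

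Next I would extract the individual factors from this product equality by exploiting the Markov structure. Integrating $\x_{M+1:T}$ out — each conditional transition factor integrates to one in $\x_t$, so marginalising from $t=T$ down to $t=M+1$ collapses all transitions — leaves $p_{a^i}(\x_{1:M})=p_{\tilde a^{\sigma(i)}}(\x_{1:M})$, which is property~3. Since the product defines an $M$-th order Markov joint, the conditional density of $\x_t$ given $\x_{1:t-1}$ computed from the joint equals precisely $p_{b^i_t}(\x_t|\x_{t-1:t-M})$ (and analogously on the tilde side), so after cancelling the now-agreeing marginals one obtains $p_{b^i_t}=p_{\tilde b^{\sigma(i)}_t}$ for every $t>M$, which is property~4.

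The cardinality identities $K_0=\tilde K_0$ and $K=\tilde K$ would then follow from counting distinct index values in $\{a^i\}$ and $\bigcup_{i,t}\{b^i_t\}$ respectively: since the unique-indexing assumptions on $\A$ and $\B$ state that distinct indices produce distinct densities, the bijection $\sigma$ preserves the distinct-value counts. Property~2 falls out similarly: if $b^i_{t_1}=b^i_{t_2}$ then the two transition densities coincide as functions, hence by property~4 so do $p_{\tilde b^{\sigma(i)}_{t_1}}$ and $p_{\tilde b^{\sigma(i)}_{t_2}}$, and unique indexing of $\B$ forces $\tilde b^{\sigma(i)}_{t_1}=\tilde b^{\sigma(i)}_{t_2}$.

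I do not expect any of these deductions to be the hard part — they are essentially bookkeeping that repackages mixture-level identifiability into the factor-level form demanded by Definition~\ref{def:identifiability}. The genuine difficulty has been taken as hypothesis, namely establishing linear independence of $\jointfamily$ under finite mixtures, which is the work of the surrounding non-parametric lemmas. The one mild technical subtlety in the argument above is the almost-everywhere caveat attached to dividing by a possibly vanishing marginal when extracting conditionals; because each factor is a bona fide probability density and the families considered are continuous, this null-set ambiguity can be promoted to everywhere-equality on the support without affecting the conclusion.
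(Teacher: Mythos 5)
Your proposal is correct and follows essentially the same route as the paper: invoke Proposition~\ref{prop:mixture_pdf_identifiability} on the joint family $\jointfamily$ to match mixture weights and product components, then peel off the initial and transition factors by marginalising out the trailing conditionals and dividing, and finally recover the cardinalities and property~2 by index bookkeeping. The only small wrinkle is that the ``unique indexing of $\A$ and $\B$'' you cite is not a hypothesis of this theorem --- in the non-parametric setting the needed fact that distinct indices yield distinct densities must itself be derived from the assumed linear independence of $\jointfamily$ (otherwise two repeated transition components would produce a non-trivial vanishing combination of joint products), which is precisely the contradiction argument the paper spells out when establishing property~2.
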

\begin{proof}
Assuming linear independence under finite mixtures of $\jointfamily$, implies identifiability up to permutation as defined in Definition \ref{def:identifiability_finite_mixture} (finite mixture model case). Then, for $p_1(\x_{1:T})$ and $p_1(\x_{1:T})$ from Definition \ref{def:identifiability}, we have $C=\tilde{C}$, and for every $1\leq i \leq C$, there exists $1\leq j \leq \tilde{C}$ such that $c_i=\tilde{c}_j$ and:
\begin{equation}
p_{a^i}(\x_{1:M})\prod_{t=M+1}^T p_{b_t^i}(\x_t | \x_{t-1}, \dots, \x_{t-M}) =  p_{\tilde{a}^j}(\x_{1:M})\prod_{t=M}^T p_{\tilde{b}_t^j}(\x_t | \x_{t-1}, \dots, \x_{t-M}), \quad \forall \x_{1:T} \in \mathbb{R}^{Td}.    
\end{equation}
Given that we have conditional PDFs, if the joint distributions are equal on $\x_{1:T}$, then the distributions on $\x_{1:T-1}$ are also equal:
\begin{equation}
p_{a^i}(\x_{1:M})\prod_{t=M+1}^{T-1} p_{b_t^i}(\x_t | \x_{t-1}, \dots, \x_{t-M}) =  p_{\tilde{a}^j}(\x_{1:M})\prod_{t=M}^{T-1} p_{\tilde{b}_t^j}(\x_t | \x_{t-1}, \dots, \x_{t-M}), \quad \forall \x_{1:T-1} \in \mathbb{R}^{(T-1)d}.    
\end{equation}
Therefore, we have $p_{b_T^i}(\x_T | \x_{T-1}, \dots, \x_{T-M}) = p_{\tilde{b}_T^j}(\x_T | \x_{T-1}, \dots, \x_{T-M})$ for all $\x_t, \dots, \x_{t-M}\in \R^d$.  We can follow the same reasoning for other time indices to have $p_{b_t^i}(\x_t | \x_{t-1}, \dots, \x_{t-M}) = p_{\tilde{b}_t^j}(\x_t | \x_{t-1}, \dots, \x_{t-M})$ for all $t > M, \x_t, \dots, \x_{t-M}\in \R^d$. Similar logic applies to the initial distribution, where we have $p_{a^i}(\x_{1:M})=p_{\tilde{a}^j}(\x_{1:M})$ for all $\x_1,\dots,\x_M\in\R^d$; and from Proposition \ref{prop:mixture_pdf_identifiability}, we have $K_0=\tilde{K}_0$. Given $C=\tilde{C}$ we also have $K=\tilde{K}$.

Finally, if there exists $t_1 \neq t_2$ such that $b_{t_1}^i = b_{t_2}^i$ but $\tilde{b}_{t_1}^j \neq \tilde{b}_{t_2}^j$, we have for any $\bm{\alpha}\in \mathbb{R}^{Md},  \bm{\beta} \in\R^d$:
\begin{align*}
p_{\tilde{b}_{t_1}^j}(\x_{t_1} = \bm{\beta} | \x_{t_1 -1:t_1 - M} = \bm{\alpha}) &= p_{b_{t_1}^i}(\x_{t_1} = \bm{\beta} | \x_{t_1 -1:t_1 - M} = \bm{\alpha}) \\
&= p_{b_{t_2}^i}(\x_{t_2} = \bm{\beta} | \x_{t_2 -1:t_2-M} = \bm{\alpha}) \\
&= p_{\tilde{b}_{t_2}^j}(\x_{t_2} = \bm{\beta} | \x_{t_2 -1:t_2-M} = \bm{\alpha}),
\end{align*}
which implies linear dependence of $\mathcal{P}^M_{\B}$. We note this contradicts the assumption of linear independence of the joint distribution $\jointfamily$, as we should have
\begin{equation}
\sum_{i}\sum_{j} \gamma_{ij} p_{b^i_{1:T-1}}(\x_{1:T-1}) p_{b^j_T}(\x_T | \x_{T-1:T-M}) = 0, \quad \forall \x_{1:T} \in \mathbb{R}^{(T-1)d} \times \mathbb{R}^{d},
\end{equation}
with $\gamma_{ij}=0,  1 \leq i \leq C/K, 1 \leq j \leq K$. However, given linear dependence on $\mathcal{P}^M_{\B}$, we can the rearange the terms and set $\gamma_{i,j} = \gamma_j \neq 0, 1\leq j \leq K$ such that the above equation is satisfied.

\end{proof}

The next step is to show conditions under which the joint distribution family $\jointfamily$ is linearly independent under finite mixtures.

\subsection{Linear independence of non-parametric product families}

The strategy to prove linear independence on the non-parametric joint distribution family is to show linear independence for consecutive products of distributions. As an example, for $M=1$, we have $\{p(\x_t|\x_{t-1},s_t)p(\x_{t+1}|\x_{t},s_{t+1})\}$ with one overlapping variable. Increasing $M$ increases the number of overlaps: $\{\prod_{m=1}^M p(\x_{t+m} | \x_{t-M+m:t-1+m}, s_{t+m})\}$. Notably for $M>1$, we observe two types of overlapping variables when computing the above joint probability of $M$ consecutive observations:
\begin{enumerate}
    \item Observed-conditioned overlaps: e.g. $\x_{t+M}, \dots, \x_{t+1}$ in the example, or the observed variables in the above joint probability product.
    \item Conditioned-conditioned overlaps: e.g. $\x_{t-1}, \dots, \x_{t-M}$ in the example, or the conditioned variables in the above joint probability product.
\end{enumerate}
Therefore, the increase of overlapping variables complicates the verification of linear independence of the joint distribution family.

\subsubsection{Preliminaries}

We start the following result from \citet{balsells-rodas2024on}, which shows the conditions under which linear independence can be preserved for product functions of consecutive variables with $M=1$ overlap ($\y$).

\begin{lemma}
\label{lemma:linear_independence_two_nonlinear_gaussians}
\citep{balsells-rodas2024on} Consider two families $\mathcal{U}_I := \{u_i(\y, \x) | i \in I \}$ and $\mathcal{V}_J := \{v_j(\z, \y) | j \in J \}$ with $\x \in \mathcal{X}, \y \in \mathbb{R}^{d_y}$ and $\z \in \mathbb{R}^{d_z}$. We further assume the following assumptions:
\begin{itemize}
    \item[(a1)] Positive function values: $u_i(\y, \x) > 0$ for all $i \in I, (\y, \x) \in \mathbb{R}^{d_y} \times \mathcal{X}$. Similar positive function values assumption applies to $\mathcal{V}_J$: $v_j(\z, \y) > 0$ for all $j \in J, (\z, \y) \in \mathbb{R}^{d_z} \times \mathbb{R}^{d_y}$.
    \item[(a2)] Unique indexing: for $\mathcal{U}_I$, $i \neq i' \in I \Leftrightarrow \exists \ \x, \y \text{ s.t. } u_i(\x, \y) \neq u_{i'}(\x, \y)$. Similar unique indexing assumption applies to $\mathcal{V}_J$;
    \item[(a3)] Linear independence \textcolor{black}{under finite mixtures} on specific non-zero measure subsets for $\mathcal{U}_I$: for any non-zero measure subset $\mathcal{Y} \subset \mathbb{R}^{d_y}$, $\mathcal{U}_I$ contains linearly independent functions \textcolor{black}{under finite mixtures} on $(\y, \x) \in \mathcal{Y} \times \mathcal{X}$. 
    \item[(a4)] Linear independence \textcolor{black}{under finite mixtures} on specific non-zero measure subsets for $\mathcal{V}_J$: there exists a non-zero measure subset $\mathcal{Y} \subset \mathbb{R}^{d_y}$, such that for any non-zero measure subsets $\mathcal{Y}' \subset \mathcal{Y}$ and $\mathcal{Z} \subset \mathbb{R}^{d_z}$, $\mathcal{V}_J$ contains linearly independent functions \textcolor{black}{under finite mixtures} on $(\z, \y) \in \mathcal{Z} \times \mathcal{Y}'$;
    \item[(a5)] Linear dependence \textcolor{black}{under finite mixtures} for subsets of functions in $\mathcal{V}_J$ implies repeating functions: for any $\bm{\beta} \in \mathbb{R}^{d_y}$, any non-zero measure subset $\mathcal{Z} \subset \mathbb{R}^{d_z}$ and any subset $J_0 \subset J$ \textcolor{black}{such that $|J_0| < +\infty$}, $\{v_j(\z, \y = \bm{\beta}) | j \in J_0 \}$ contains linearly dependent functions on $\z \in \mathcal{Z}$ only if $ \exists \ j \neq j' \in J_0$ such that $v_j(\z, \bm{\beta}) = v_{j'}(\z, \bm{\beta})$ for all $\z \in \textcolor{black}{\mathbb{R}^{d_z}}$.
    \item[(a6)] Continuity for $\mathcal{V}_J$: for any $j \in J$, $v_j(\z, \y)$ is continuous in $\y \in \mathbb{R}^{d_y}$.
\end{itemize}
Then for any non-zero measure subset $\mathcal{Z} \subset \mathbb{R}^{d_z}$, $\mathcal{U}_I \otimes \mathcal{V}_J := \{v_j(\z, \y) u_i(\y, \x) | i \in I, j \in J \}$ 
contains linear indepedent functions under finite mixtures defined on $(\x, \y, \z) \in \mathcal{X} \times \mathbb{R}^{d_y} \times \mathcal{Z}$.
\end{lemma}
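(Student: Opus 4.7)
The plan is to proceed by contradiction: suppose a finite nontrivial combination $\sum_{(i,j)\in S}\alpha_{ij}u_i(\y,\x)v_j(\z,\y)\equiv 0$ exists on $\mathcal{X}\times\R^{d_y}\times\mathcal{Z}$ with some $\alpha_{ij}\neq 0$. Let $I_0$ and $J_\cup$ be the finite index sets appearing in $S$, and gather the coefficients into the matrix $A=[\alpha_{ij}]_{i\in I_0,\,j\in J_\cup}\neq 0$. Fixing $\y=\bm{\beta}$ turns the relation into the bilinear identity
\begin{equation}\label{eq:bilinear-plan}
\sum_{i\in I_0,\,j\in J_\cup}\alpha_{ij}\,u_i(\bm{\beta},\x)\,v_j(\z,\bm{\beta})=0,\qquad \forall(\x,\z)\in\mathcal{X}\times\mathcal{Z}.
\end{equation}
The core reduction is a tensor separation: if at some $\bm{\beta}^\ast\in\mathcal{Y}$ (the distinguished set from (a4)) both $\{u_i(\bm{\beta}^\ast,\cdot)\}_{i\in I_0}$ and $\{v_j(\cdot,\bm{\beta}^\ast)\}_{j\in J_\cup}$ are linearly independent on their respective domains, evaluating \eqref{eq:bilinear-plan} at points $\x_1,\dots,\x_{|I_0|}$ and $\z_1,\dots,\z_{|J_\cup|}$ that yield invertible matrices $F=[u_i(\bm{\beta}^\ast,\x_k)]$ and $G=[v_j(\z_l,\bm{\beta}^\ast)]$ gives $F^{\top}A\,G=0$, hence $A=0$. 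The task reduces to exhibiting one such good $\bm{\beta}^\ast\in\mathcal{Y}$.

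For the $v$-side, define $\mathcal{B}_v=\{\bm{\beta}\in\mathcal{Y}:\{v_j(\cdot,\bm{\beta})\}_{j\in J_\cup}\text{ linearly dependent on }\mathcal{Z}\}$. Assumption (a5) forces every $\bm{\beta}\in\mathcal{B}_v$ to admit some coincidence $v_j(\cdot,\bm{\beta})=v_{j'}(\cdot,\bm{\beta})$ identically on $\R^{d_z}$ for $j\neq j'\in J_\cup$. If the set $\mathcal{Y}_{jj'}=\{\bm{\beta}\in\mathcal{Y}:v_j(\z,\bm{\beta})=v_{j'}(\z,\bm{\beta})\ \forall\z\}$ had positive measure, then $v_j-v_{j'}$ would be a nontrivial finite combination vanishing on $\mathcal{Z}\times\mathcal{Y}_{jj'}$, contradicting (a4) applied with $\mathcal{Y}'=\mathcal{Y}_{jj'}$. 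Hence $\mathcal{B}_v$ is a finite union of null sets and is itself null in $\mathcal{Y}$.

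For the $u$-side, define $\mathcal{B}_u=\{\bm{\beta}\in\mathcal{Y}:\{u_i(\bm{\beta},\cdot)\}_{i\in I_0}\text{ linearly dependent on }\mathcal{X}\}$; we want $\mathcal{B}_u$ not to exhaust $\mathcal{Y}$. The approach is induction on $|I_0|$, exploiting that \eqref{eq:bilinear-plan} furnishes, for each $\bm{\gamma}\in\mathcal{Z}$, an explicit dependence coefficient $W_i(\bm{\gamma},\bm{\beta}):=\sum_{j\in J(i)}\alpha_{ij}v_j(\bm{\gamma},\bm{\beta})$ with $\sum_i W_i(\bm{\gamma},\bm{\beta})u_i(\bm{\beta},\x)=0$. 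The base case $|I_0|=1$ uses positivity (a1) to force $W_{i_1}\equiv 0$ on $\mathcal{Z}\times\R^{d_y}$, whence (a4) yields $\alpha_{i_1 j}=0$ for all $j$, a contradiction. In the inductive step, either $W_{i_N}(\cdot,\bm{\beta})\equiv 0$ on $\mathcal{Z}$ for a.e.\ $\bm{\beta}\in\mathcal{Y}$---so (a4) eliminates index $i_N$ and the inductive hypothesis applies to the remaining $|I_0|-1$ indices---or continuity (a6) produces some $\bm{\gamma}_0\in\mathcal{Z}$ and a positive-measure $U\subset\mathcal{Y}$ on which the coefficient vector $(W_i(\bm{\gamma}_0,\bm{\beta}))_{i\in I_0}$ is nonvanishing.

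The hard part is promoting this pointwise, $\bm{\beta}$-varying linear dependence among $\{u_i(\bm{\beta},\cdot)\}$ into a genuine \emph{constant-coefficient} relation on a positive-measure subset of $\mathcal{Y}\times\mathcal{X}$, which would contradict (a3). Here (a5) is crucial: any $\bm{\beta}$-local linear relation among $\{v_j(\cdot,\bm{\beta})\}$ must originate from identity coincidences, and by the $\mathcal{B}_v$ argument these live on a null set. Combined with continuity (a6), one can locally stabilise the direction of $(W_i(\bm{\gamma}_0,\bm{\beta}))_i$ up to scaling on a positive-measure piece of $U$, extract a constant coefficient vector, and contradict (a3). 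Once $\mathcal{B}_u$ is controlled, any $\bm{\beta}^\ast\in\mathcal{Y}\setminus(\mathcal{B}_u\cup\mathcal{B}_v)$ supplies the good point and closes the proof.
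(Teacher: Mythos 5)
Your reduction has a genuine gap on the $u$-side. The tensor-separation step requires a point $\bm{\beta}^\ast$ at which the functions $\{u_i(\bm{\beta}^\ast,\cdot)\}_{i\in I_0}$ are linearly independent \emph{as functions of $\x$ on $\mathcal{X}$}, but assumption (a3) does not provide this: it asserts joint linear independence on $\mathcal{Y}\times\mathcal{X}$, which is perfectly compatible with $\{u_i(\bm{\beta},\cdot)\}$ being linearly dependent on $\mathcal{X}$ for \emph{every} fixed $\bm{\beta}$. Indeed, in the way this lemma is applied in the paper (Theorem \ref{thm:linear_independence_joint_non_parametric} and Lemma \ref{lemma:extension_b5}), $\mathcal{X}$ is a single point or empty, so each $u_i(\bm{\beta},\cdot)$ is a positive constant and any two of them are linearly dependent; your set $\mathcal{B}_u$ is then all of $\R^{d_y}$ and no good $\bm{\beta}^\ast$ exists. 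The lemma is nevertheless true, so the intermediate claim you reduce to is strictly stronger than what the hypotheses give, and your closing paragraph (``locally stabilise the direction of $(W_i(\bm{\gamma}_0,\bm{\beta}))_i$ up to scaling, extract a constant coefficient vector'') is precisely the missing content: it is not established, and in the form you need it, it cannot be.

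The working argument keeps the \emph{fixed} coefficients $\gamma_{ij}$ of the assumed dependence throughout and never asks for pointwise independence of the $u_i$'s. One shows: (1) by (a3), the set $\mathcal{Y}_0$ of $\bm{\beta}$ for which $\sum_{i:(i,j)\in S_0}\gamma_{ij}u_i(\bm{\beta},\x)=0$ for all $\x\in\mathcal{X}$ and all $j$ is null; (2) by positivity (a1), at any $\bm{\beta}\in\mathcal{Y}\setminus\mathcal{Y}_0$ these induced coefficients are nonzero for at least two indices $j$ at some $\x$, so $\{v_j(\cdot,\bm{\beta})\}_{j\in J_0}$ is linearly dependent on $\mathcal{Z}$; (3) by (a5) and (a6), via a minimal-cardinality and $\epsilon$-ball argument over the equivalence classes of coinciding $v_j$'s (this is the route taken in the proofs of Lemmas \ref{lemma:extension_b4} and \ref{lemma:absolutely_magical_lemma}), one isolates a singleton class on an open ball, which forces $\sum_{i:(i,j)\in S_0}\gamma_{ij}u_i(\bm{\beta},\x)=0$ on that ball for all $\x$, contradicting (a3). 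Your treatment of the $v$-side --- that $\mathcal{B}_v$ is null via (a5), closedness of $\mathcal{Y}_{jj'}$ from (a6), and (a4) applied with $\mathcal{Y}'=\mathcal{Y}_{jj'}$ --- is correct and is a clean version of the pigeonhole underlying step (3); the failure is confined to the $u$-side reduction.
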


In summary, the result verifies that under (a1-a4), linear dependence could occur for every value of the overlapping variable ($\y$). However, this is not possible thanks to (a5-a6).

\citet{balsells-rodas2024on} provide a "proof by induction" technique using Lemma \ref{lemma:linear_independence_two_nonlinear_gaussians}. Similar strategies are not applicable for $M>1$. Given some initial and transition distributions defined in ... respectively, the base case $T=M+1$ can be proven using \ref{lemma:linear_independence_two_nonlinear_gaussians}. However, for $T = \tau$, $\tau > M+1$, we cannot directly use the results from Lemma \ref{lemma:linear_independence_two_nonlinear_gaussians} as the induction hypothesis does not satisfy (a3), due to $\y$ and $\z$ having different sizes. A similar "proof by induction" technique can be used when $\y$ and $\z$ are forced to have the same size. Given the increased size of the conditioned variables, this requires the induction technique to verify linear independence by grouping the distributions with $M$ consecutive products, $\{\prod_{m=1}^M p(\z_{t+m} | \z_{t-M+m:t-1+m}, s_{t+m})\}$. Consequently, the resulting family must satisfy (a1), (a2), and (a4-a6) to use Lemma \ref{lemma:linear_independence_two_nonlinear_gaussians}.

\subsubsection{Extending assumptions on product families}\label{app:extending_assumptions}

Below we explore assumptions on non-parametric families with $M+1$ variables (aligned with some transition distribution $p(\z_t|\z_{t-1:t-M}, s_t)$), such that the product of $M$ consecutives variables satisfies (a1), (a2), and (a4-a6). We note assumption (a2) is redundant, as if it holds for $U_I$ and $V_J$, then (a3) and (a4) hold respectively. However, the converse is not true. Therefore, we remove (a2) from our theoretical analysis for simplicity. Contrary to (a1) and (a6), we note that the extension of (a4-a5) to $M$ consecutive products requires them to be strengthened. Given a family $\mathcal{V}_J=\{v_j(\z,\y_M,\dots,\y_1)| j\in J\}$ with variables $(\z,\y_M,\dots,\y_1)$ defined on $\times_{m=1}^{M+1} \R^{d}$, we provide the assumption modifications following the enumeration presented in Lemma \ref{lemma:linear_independence_two_nonlinear_gaussians}.
\begin{enumerate}[leftmargin=2cm]
\item[(a1) $\rightarrow$ (b1)] Positive function values: $v_j(\z,\y_M,\dots,\y_1) > 0$ for all $j \in J, (\z,\y_M,\dots,\y_1) \in (\times_{m=1}^{M+1} \R^{d})$.
\item[(a4) $\rightarrow$ (b4)] There exist a non-zero measure set $\mathcal{Y}\subset (\times_{m=1}^M \R^d)$ with $\mu(\mathcal{Y}) = \mu(\times_{m=1}^M \R^d)$ and such that for every  non-zero measure sets $\mathcal{Y}' \subset \mathcal{Y}$, and $\mathcal{Z} \subset \R^{d}$, $\mathcal{V}_J$ contains linearly independent functions \textcolor{black}{under finite mixtures} on $(\z,\y_M,\dots,\y_1) \in \mathcal{Z} \times \mathcal{Y}'$;
\item[(a5) $\rightarrow$ (b5)] For any $(\bm{\beta}_m,\dots,\bm{\beta}_1) \in ( \times^m \R^{d})$, with $1\leq m \leq M$, any non-zero measure subsets $\mathcal{Z} \subset \mathbb{R}^{d}$, $\mathcal{Y}\subset (\times^{M-m}\R^{d})$, and any subset $J_0 \subset J$ \textcolor{black}{such that $|J_0| < +\infty$}, $\{v_j(\z,\y_M,\dots, \y_{m+1},\y_m=\bm{\beta}_m,\dots, \y_1=\bm{\beta}_1) | j \in J_0 \}$ contains linearly dependent functions on $(\z, \y_M,\dots,\y_{m+1}) \in \mathcal{Z}\times\mathcal{Y}$ only if $ \exists \ j \neq j' \in J_0$ such that $v_j(\z,\y_M,\dots, \y_{m+1},\y_m=\bm{\beta}_m,\dots, \y_1=\bm{\beta}_1) = v_j'(\z,\y_M,\dots, \y_{m+1},\y_m=\bm{\beta}_m,\dots, \y_1=\bm{\beta}_1)$ for all $(\z, \y_M,\dots,\y_{m+1}) \in \times^{M-m+1}\mathbb{R}^{d}$.
\item[(a6) $\rightarrow$ (b6)] Continuity for $\mathcal{V}_J$: for any $j \in J$, $v_j(\z,\y_M,\dots, \y_1)$ is continuous in $(\y_M,\dots, \y_1) \in ( \times_{m=1}^M \R^{d})$.
\end{enumerate}

We provide the following definitions which will be used in the results below.

\begin{definition}\label{def:set-projection}
    Let $\mathcal{A}$ be a set with $\mathcal{A}\subseteq(\times_{m=1}^n \R^d)$, and $S=\{s_1, \dots,  s_k\}$, where $1\leq k \leq n$ and $s_i\neq s_j$, $s_i,s_j\in\{1, \dots, n\}$. We define $\pi(\mathcal{A}, S)$ the projection of $\mathcal{A}$ into the dimensions of $S$ as
    $$\pi(\mathcal{A}, S) = \{(\x_{s_1}, \dots, \x_{s_k})\in(\times_{m=1}^k \R^d) : (\x_1, \dots, \x_n)\in\mathcal{A} \} \subseteq (\times_{m=1}^k \R^d)$$
\end{definition}

\begin{definition}\label{def:reverse-projection}
    Let $\mathcal{A}$ be a set with $\mathcal{A}\subseteq(\times_{m=1}^n \R^d)$. We define the reverse projection  given the last $n-k$ components of $\mathcal{A}$ : $(\x_n, \dots, \x_k)\in (\times_{m=k}^n \R^d)$, with $1\leq k \leq n$, as
    $$\mathcal{A}_{(\x_n, \dots, \x_k)} = \{(\x_{1}, \dots, \x_{k-1})\in(\times_{m=1}^{k-1} \R^d): (\x_1, \dots, \x_n)\in\mathcal{A} \}\subseteq (\times_{m=1}^{k-1} \R^d)$$
\end{definition}

Assumption (b4) extends to consecutive product functions as follows.

\begin{lemma}\label{lemma:extension_b4}
\textbf{B4 Extension.} Assume a family $\mathcal{V}_J=\{v_j(\z,\y_M,\dots,\y_1)| j\in J\}$ with variables $\z,\y_M,\dots,\y_1$ defined on $\times_{m=1}^{M+1} \R^{d}$, such that it satisfies (b1), (b4), (b5), and (b6).

Then, for any $n$ with $2\leq n \leq M$, there exist a non-zero measure set $\mathcal{Y}\subset (\times_{m=1}^M \R^d)$ with $\mu(\mathcal{Y}) = \mu(\times_{m=1}^M \R^d)$ such that for every non-zero measure sets $\mathcal{Y}' \subset \mathcal{Y}$, and $\mathcal{Z} \subset (\times_{m=1}^n \R^{d})$, the family 
$$(\otimes_{m=1}^n \mathcal{V}_J) := \{v_{j_n}(\z_n,\dots,\z_1,\y_M,\dots, \y_{n})\dots v_{j_1}(\z_1,\y_M,\dots,\y_1)|(j_n,\dots,j_1)\in (\times_{m=1}^n J)\},$$ 
contains linear independent functions under finite mixtures in $(\z_n,\dots,\z_1,\y_M,\dots,\y_1)\in \mathcal{Z} \times \mathcal{Y}'$.
\end{lemma}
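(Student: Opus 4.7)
The approach is induction on $n$, with the base case $n = 1$ given directly by assumption (b4). For the inductive step, suppose the statement holds for $n-1$ with a full-measure set $\mathcal{Y}_{n-1}$, and set $\mathcal{Y} := \mathcal{Y}_{n-1}$ (intersected, if needed, with the full-measure set from (b4), which preserves full measure). Let $\mathcal{Z} \subseteq (\times_{m=1}^n \R^d)$ and $\mathcal{Y}' \subseteq \mathcal{Y}$ be arbitrary non-zero measure sets, and suppose
\begin{equation*}
\sum_{\vec{j}} c_{\vec{j}}\, v_{j_n}(\z_n, \z_{n-1}, \dots, \z_1, \y_M, \dots, \y_n) \cdots v_{j_1}(\z_1, \y_M, \dots, \y_1) \;=\; 0
\end{equation*}
holds on $\mathcal{Z} \times \mathcal{Y}'$ for some finite collection of coefficients; the goal is to conclude $c_{\vec{j}} = 0$ for every tuple $\vec{j}$.

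I would regroup by $j_n$ as $\sum_{j_n} v_{j_n}(\z_n, \z_{n-1}, \dots, \z_1, \y_M, \dots, \y_n)\, G_{j_n}(\z_{n-1}, \dots, \z_1, \y_M, \dots, \y_1) = 0$, where each $G_{j_n}$ is a finite linear combination of products of $(n-1)$ factors from $\mathcal{V}_J$ and does not involve $\z_n$. Two applications of Fubini yield positive-measure sets $Z^\sharp \subseteq \R^{(n-1)d}$ and $E^\sharp \subseteq \R^{(M-n+1)d}$ such that the $\z_n$-slice of $\mathcal{Z}$ at each $(\z_{n-1}, \dots, \z_1) \in Z^\sharp$ has positive measure and the $(\y_{n-1}, \dots, \y_1)$-slice of $\mathcal{Y}'$ at each $(\y_M, \dots, \y_n) \in E^\sharp$ has positive measure. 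Fixing the outer variables at any $(\bm{\zeta}, \bm{\eta}) \in Z^\sharp \times E^\sharp$ and any admissible $(\y_{n-1}, \dots, \y_1)$ in the corresponding slice, the identity becomes a linear relation in $\z_n$ on a positive-measure subset of $\R^d$, with coefficients $G_{j_n}$ evaluated at that fixing. Assumption (b5) with $m = M$ then forces each $G_{j_n}$ to vanish at that fixing, modulo the exceptional case that two distinct $v_{j_n}, v_{j_n'}$ coincide as functions of $\z_n$ there. The corresponding bad set $N$ is closed by (b6) and of measure zero: otherwise, the two-term combination $v_{j_n} - v_{j_n'}$ on $\mathcal{Z} \times (N \cap \mathcal{Y})$ would directly contradict (b4).

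Since $N$ is closed of measure zero, its complement is open and dense; a Lebesgue-density argument at any density point of $Z^\sharp \times E^\sharp$ lying in $N^c$ produces positive-measure subsets $\tilde{Z} \subseteq Z^\sharp$, $\tilde{E} \subseteq E^\sharp$ with $\tilde{Z} \times \tilde{E} \subseteq N^c$. Setting $\tilde{\mathcal{Y}} := \mathcal{Y}' \cap (\tilde{E} \times \R^{(n-1)d})$ yields a positive-measure set contained in $\mathcal{Y}_{n-1}$, and $G_{j_n} \equiv 0$ on the genuine product set $\tilde{Z} \times \tilde{\mathcal{Y}}$ for every $j_n$. Applying the inductive hypothesis to $G_{j_n}$ on this product set forces $c_{\vec{j}} = 0$ for every tuple with the specified $j_n$; ranging over $j_n$ closes the induction.

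The principal obstacle is the extraction of a \emph{genuine product} zero set $\tilde{Z} \times \tilde{\mathcal{Y}}$ on which the inductive hypothesis is applicable, because the Fubini-based vanishing of $G_{j_n}$ is naturally supported on a positive-measure set whose structure couples the $(\z_{n-1}, \dots, \z_1)$ block to the $(\y_M, \dots, \y_n)$ block. This coupling is precisely why (b4) must be strengthened relative to the $M = 1$ situation: requiring linear independence on subsets of a \emph{full-measure} $\mathcal{Y}$, rather than of some non-zero measure set as in \citet{balsells-rodas2024on}, is exactly what ensures that $\mathcal{Y}' \cap (\tilde{E} \times \R^{(n-1)d})$ both sits inside $\mathcal{Y}_{n-1}$ and retains positive measure. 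The possibility of coincidences in (b5) is a secondary technical point, handled by (b6) together with the unique-indexing consequence of (b4).
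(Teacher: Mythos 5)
Your proof is correct, but the way you dispose of the ``coincidence'' case is genuinely different from the paper's argument, and arguably cleaner. The paper also inducts on $n$ and regroups by the outermost index $j_n$, but at the point where (b5) only yields ``linear dependence of $\{v_{j_n}(\cdot,\bm{\beta})\}$ implies some pair coincides at $\bm{\beta}$'', it follows the template of its Lemma \ref{lemma:absolutely_magical_lemma}: it partitions $J_0$ into coincidence classes $J_k(\bm{\beta})$, takes a $\bm{\beta}^*$ minimising the smallest class cardinality, and uses continuity (b6) to stabilise that class over an $\epsilon$-ball, forcing it to be a singleton and thereby contradicting (b4) for the inner family. You instead observe that for each fixed pair $j_n\neq j_n'$ the set $N_{j_n,j_n'}$ of conditioning values on which they coincide must already be null --- otherwise the two-term combination $v_{j_n}-v_{j_n'}$ vanishes on $\mathcal{Z}\times(N_{j_n,j_n'}\cap\mathcal{Y})$, contradicting (b4) --- so the finite union $N$ is a closed null set, and off $N$ the contrapositive of (b5) gives outright linear independence in $\z_n$ and hence $G_{j_n}\equiv 0$; the density-point extraction of a product set $\tilde{Z}\times\tilde{E}\subset N^c$ then hands the problem to the inductive hypothesis. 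Both routes hinge on exactly the same strengthening (the full-measure $\mathcal{Y}$ in (b4), without which $N_{j_n,j_n'}$ could have positive measure yet miss $\mathcal{Y}$), and both use (b6), though for different purposes (closedness of $N$ for you, $\epsilon$-ball stability of the class split for the paper). What your version buys is the elimination of the cardinality-minimisation machinery at this level; what the paper's version buys is uniformity with the argument it must run anyway in Lemma \ref{lemma:absolutely_magical_lemma}, where the analogous shortcut is not available because the product structure of $\mathcal{V}_J$ there prevents reducing to two-element subfamilies so directly. The steps you leave as ``Fubini'' and ``Lebesgue density'' are routine and fillable; just make sure, when writing it up, to state explicitly that the coincidence in (b5) holds for all $\z\in\R^{d}$ (not merely on the slice), since that is what makes $N_{j_n,j_n'}$ a genuine subset of the conditioning space and makes the (b4) contradiction go through for arbitrary $\mathcal{Z}$.
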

\begin{proof}
    We prove the above statement by induction, where we start with the base case ($n=2$) as follows.

    \paragraph{Case $n=2$.} 

    Assume the statement is false. Then, for every set $\mathcal{Y}\subset (\times_{m=1}^M \R^d)$ with $\mu(\mathcal{Y}) = \mu(\times_{m=1}^M \R^d)$, there exists non-zero measure sets $\mathcal{Z} \subset (\R^{d} \times \R^{d})$ and $\mathcal{Y}'\subset \mathcal{Y}$ such that $\mathcal{V}_J \otimes \mathcal{V}_J$ contains linear dependent functions. This means the following linear dependence condition must be satisfied
    \begin{multline}
        \sum_{(j_1,j_2)\in S_0} \gamma_{j_1 j_2} v_{j_2}(\z_2 , \z_1 , \y_M, \dots, \y_2 ) v_{j_1}(\z_1 , \y_M, \dots , \y_1 ) = 0, \\ \forall (\z_2 ,\z_1 ,\y_M, \dots ,\y_1)\in (\mathcal{Z} \times \mathcal{Y}'),
        \end{multline}
where $S_0\subset J\times J, |S_0|< +\infty$, and $\{\gamma_{j_1 j_2}\in\R , (j_1,j_2)\in S_0\}$ is a set of non-zero values which might depend on the choice of $\mathcal{Z}$ and $\mathcal{Y}'$ , where the latter depends on the choice of full measure $\mathcal{Y}$. 

From (b1), the set $S_0$ contains two different indices $(j_1,j_2)$ and $(j_1',j_2')$, with $j_2\neq 'j_2$. To see this, assume we have $j_2=j_2'$. Then we can group $v_{j_2}(\z_2 , \z_1 , \y_M, \dots, \y_2 )$
    \begin{multline}
        v_{j_2}(\z_2 , \z_1 , \y_M, \dots, \y_2 ) \sum_{j_1:(j_1,j_2)\in S_0} \gamma_{j_1 j_2}  v_{j_1}(\z_1 , \y_M, \dots , \y_1 ) = 0, \\ \forall (\z_2 ,\z_1 ,\y_M, \dots ,\y_1)\in (\mathcal{Z} \times \mathcal{Y}'),
        \end{multline}
Given that we have at least two indices $(j_1,j_2)$ and $(j_1',j_2)$ with $j_1\neq j_1'$, the above equation contradicts (b4), as the family $V_J$ contains linear independent functions under finite mixtures for every $\mathcal{Y}'\subset  \mathcal{Y}$, with $\mu(\mathcal{Y}) = \mu(\times_{m=1}^M \R^d)$.

Now we define $J_0:=\{j_2\subset J | \exists (j_1,j_2)\in S_0\}$, $|J_0|< +\infty$. Then, linear dependence can occur for any $(\bm{\beta}_M,\dots,\bm{\beta}_1) \in (\pi(\mathcal{Z}, \{1\}) \times \pi(\mathcal{Y}', \{M, \dots, 2\}))$, where $\pi$ denote set projections defined in Def. \ref{def:set-projection}.
 \begin{multline}
 \sum_{j_2\in J_0}\Bigg( \sum_{j_1:(j_1,j_2)\in S_0} \gamma_{j_1 j_2}  v_{j_1}(\z_1=\bm{\beta}_{M} , \y_M =\bm{\beta}_{M-1}, \dots , \y_2=\bm{\beta}_{1},\y_1 ) \Bigg) \\ v_{j_2}(\z_2 , \z_1 = \bm{\beta}_M , \y_M=\bm{\beta}_{M-1}, \dots, \y_2 = \bm{\beta}_1 )= 0, \\  \forall (\z_2 ,\y_1)\in (\mathcal{Z}_2 \times \mathcal{Y}'_1).
 \end{multline}
Where both $\mathcal{Z}_2=\mathcal{Z}_{(\bm{\beta}_M)}$ and $\mathcal{Y}'_1 = \mathcal{Y}'_{(\bm{\beta}_{M-1}, \dots, \bm{\beta}_1 )}$ are reverse projections (Def. \ref{def:reverse-projection}) from $\mathcal{Z}$ and $\mathcal{Y}'$ respectively  and are dependent on $\bm{\beta}_M, \dots, \bm{\beta}_1$. We note these sets are never empty sets. Now, we define the following set.
\begin{multline}
\mathcal{D}_0 := \Bigg\{(\bm{\beta}_M,\dots,\bm{\beta}_1) \in \Big(\pi(\mathcal{Z}, \{1\}) \times \pi(\mathcal{Y}', \{M, \dots, 2\})\Big)\bigg| \\ \sum_{j_1:(j_1,j_2)\in S_0}\gamma_{j_1,j_2}v_{j_1}(\z_1=\bm{\beta}_{M} , \y_M =\bm{\beta}_{M-1}, \dots , \y_2=\bm{\beta}_{1},\y_1 ) = 0, \forall\y_1\in\mathcal{Y}'_{(\bm{\beta}_{M-1}, \dots, \bm{\beta}_1 )}\Bigg\}.
\end{multline}
From (b4) we require linear independence for every $\mathcal{Y}' \subset \mathcal{Y}$, where $\mathcal{Y}$ has full measure; and every $\mathcal{Z}\subset \R^d$. Therefore, linear dependence as described above can happen at most in $(\R^d \times (\times_{m=1}^M \R^d) \setminus\mathcal{Y})$, which is zero-measured as $((\times_{m=1}^M \R^d) \setminus\mathcal{Y})$ has zero measure in $(\times_{m=1}^M \R^d)$. Therefore, from (b4) we know the set $\mathcal{D}_0 \subset (\pi(\mathcal{Z}, \{1\})  \times \mathcal{Y}_0)$ has zero measure due to $\mathcal{Y}_0 \subset \pi(\mathcal{Y}', \{M, \dots, 2\})$ having zero measure.

Now define $\mathcal{D}:= (\pi(\mathcal{Z}, \{1\}) \times \pi(\mathcal{Y}', \{M, \dots, 2\}) ) \setminus \mathcal{D}_0$ which is non-zero measured. From assumption (b1), we have $\forall (\z_1=\bm{\beta}_{M} , \y_M =\bm{\beta}_{M-1}, \dots , \y_2=\bm{\beta}_{1})\in \mathcal{D}$, there exists $\y_1\in \mathcal{Y}'_{(\bm{\beta}_{M-1}, \dots, \bm{\beta}_1 )}$ such that 
$$\tilde{\gamma}_{j_2}(\bm{\beta}_M,\dots,\bm{\beta}_1):=\sum_{j_1:(j_1,j_2)\in S_0}\gamma_{j_1j_2}v_{j_1}(\z_1=\bm{\beta}_{M} , \y_M =\bm{\beta}_{M-1}, \dots , \y_2=\bm{\beta}_{1},\y_1)\neq 0$$for at least two $j_2\in J_0$. This implies linear dependence of $\{v_{j_2}(\z_2 , \z_1 = \bm{\beta}_M , \y_M=\bm{\beta}_{M-1}, \dots, \y_2 = \bm{\beta}_1 )\}$ on $\z_2\in \mathcal{Z}_{(\bm{\beta}_M)}$, $\forall (\z_1 = \bm{\beta}_M , \y_M=\bm{\beta}_{M-1}, \dots, \y_2 = \bm{\beta}_1 )\in \mathcal{D}$.

Under assumption (b5), we can split $J_0$ into subsets indexed by $k\in K(\bm{\beta}_M,\dots,\bm{\beta}_1)$, such that the functions within each subset $J_k(\bm{\beta}_M,\dots,\bm{\beta}_1)$ are equal
\begin{multline}
J_0 = \cup_{k \in K(\bm{\beta}_M,\dots,\bm{\beta}_1)} J_k(\bm{\beta}_M,\dots,\bm{\beta}_1), \text{  }
J_k(\bm{\beta}_M,\dots,\bm{\beta}_1) \cap J_{k'}(\bm{\beta}_M,\dots,\bm{\beta}_1) = \emptyset, \quad  \forall k \neq k' \in K(\bm{\beta}_M,\dots,\bm{\beta}_1),
\end{multline}
\begin{multline}
 j \neq j' \in J_k(\bm{\beta}_M,\dots,\bm{\beta}_1) \quad \Leftrightarrow \quad \\
v_{j}(\z_2 , \z_1 = \bm{\beta}_M , \dots, \y_2 = \bm{\beta}_1 ) = v_{j'}(\z_2 , \z_1 = \bm{\beta}_M , \dots, \y_2 = \bm{\beta}_1 ), \quad \forall \z_2 \in \mathcal{Z}_{(\bm{\beta}_M)}.
\end{multline}

Then we can rewrite the linear dependence condition for any $(\bm{\beta}_M,\dots,\bm{\beta}_1) \in \mathcal{D}$ as
    \begin{multline}
        \sum_{k \in K(\bm{\beta}_M,\dots,\bm{\beta}_1)} \Bigg( \sum_{j_2 \in J_k(\bm{\beta}_M,\dots,\bm{\beta}_1)}  \sum_{j_1 : (j_1,j_2)\in S_0} \gamma_{j_1 j_2}  v_{j_1}(\z_1=\bm{\beta}_{M} , \y_M =\bm{\beta}_{M-1}, \dots , \y_2=\bm{\beta}_{1},\y_1 ) \\
         v_{j_2}(\z_2 , \z_1 = \bm{\beta}_M , \dots, \y_2 = \bm{\beta}_1 ) \Bigg)= 0, \quad  \forall (\z_2, \y_1)\in ,\mathcal{Z}_{(\bm{\beta}_M)} \times \mathcal{Y}'_{(\bm{\beta}_{M-1}, \dots, \bm{\beta}_1 )}
        \end{multline}
Recall from that $v_{j}(\z_2 , \z_1 = \bm{\beta}_M , \dots, \y_2 = \bm{\beta}_1 )$ and $v_{j'}(\z_2 , \z_1 = \bm{\beta}_M , \dots, \y_2 = \bm{\beta}_1 )$ are the same functions on $\z_2 \in \mathcal{Z}_{(\bm{\beta}_M)} $ iff.~$j \neq j'$ are in the same index set $J_k(\bm{\beta}_M,\dots,\bm{\beta}_1)$. This means if linear independence holds, then for any $(\bm{\beta}_M,\dots,\bm{\beta}_1)\in \mathcal{D}$, under assumptions (b1) and (b5),
\begin{multline}
\sum_{j_2 \in J_k(\bm{\beta}_M,\dots,\bm{\beta}_1)}  \sum_{j_1 : (j_1,j_2)\in S_0} \gamma_{j_1 j_2}  v_{j_1}(\z_1=\bm{\beta}_{M} , \y_M =\bm{\beta}_{M-1}, \dots , \y_2=\bm{\beta}_{1},\y_1 ) = 0, \\ 
\quad \forall  \y_1\in ,\mathcal{Y}'_{(\bm{\beta}_{M-1}, \dots, \bm{\beta}_1 )}, \quad k \in K(\bm{\beta}_{M-1}, \dots, \bm{\beta}_1 ).  
\end{multline}

Define $C(\bm{\beta}_M,\dots,\bm{\beta}_1) = \min_k |J_k(\bm{\beta}_M,\dots,\bm{\beta}_1)|$ the minimum cardinality count for $j_2$ indices in the $J_k(\bm{\beta}_M,\dots,\bm{\beta}_1)$ subsets. 

Choose $(\bm{\beta}_M^*,\dots,\bm{\beta}_1^*) \in \arg\min_{(\bm{\beta}_M,\dots,\bm{\beta}_1) \in \mathcal{D}} C(\bm{\beta}_M,\dots,\bm{\beta}_1) $:
\begin{itemize}
    \item[1.] We have $C(\bm{\beta}_M^*,\dots,\bm{\beta}_1^*)  < |J_0|$ and $|K(\bm{\beta}_M^*,\dots,\bm{\beta}_1^*)| \geq 2$. Otherwise for all $j \neq j' \in J_0$ we have $v_{j}(\z_2 , \z_1 = \bm{\beta}_M , \dots, \y_2 = \bm{\beta}_1 ) = v_{j'}(\z_2 , \z_1 = \bm{\beta}_M , \dots, \y_2 = \bm{\beta}_1 )$ for all $\z_2 \in \mathcal{Z}_{(\bm{\beta}_M)}$ and $(\bm{\beta}_M,\dots,\bm{\beta}_1) \in\mathcal{D})$, so that they are linearly dependent on $(\z_2, \z_1, \y_M, \dots, \y_2) \in \tilde{\mathcal{Z}}_2 \times \mathcal{D}$ for some non-zero measure $\tilde{\mathcal{Z}}_2 \subseteq \pi(\mathcal{Z}, \{2\})$, a contradiction to assumption (b4) by setting $\mathcal{Y}' = \mathcal{D}$, which holds no matter the choice of $\mathcal{Z}$ as $\mathcal{Y}'\subset \mathcal{Y}$ where $\mathcal{Y}$ has full measure.
    \item[2.] Now assume $|J_{1}(\bm{\beta}_M^*,\dots,\bm{\beta}_1^*)| = C(\bm{\beta}_M^*,\dots,\bm{\beta}_1^*)$ w.l.o.g.. From assumption (b5), we know that for any $j \in J_1(\bm{\beta}_M^*,\dots,\bm{\beta}_1^*)$ and $j' \in J_0 \backslash J_{1}(\bm{\beta}_M^*,\dots,\bm{\beta}_1^*)$, $v_{j}(\z_2 , \z_1 = \bm{\beta}_M , \dots, \y_2 = \bm{\beta}_1 ) = v_{j'}(\z_2 , \z_1 = \bm{\beta}_M , \dots, \y_2 = \bm{\beta}_1 )$ only on a zero measure subset of $\mathcal{Z}_{(\bm{\beta}_M)}$ at most. Then as $|J_0| < +\infty$ and $\mathcal{Z}_{(\bm{\beta}_M)} \subset \mathbb{R}^{d}$ has non-zero measure, there exist $\z_0 \in \mathcal{Z}_{(\bm{\beta}_M)}$ and $\delta > 0$ such that 
    \begin{multline}
        |v_{j}(\z_2 = \z_0 , \z_1 = \bm{\beta}_M^* , \dots, \y_2 = \bm{\beta}_1^* ) - v_{j'}(\z_2 = \z_0 , \z_1 = \bm{\beta}_M^* , \dots, \y_2 = \bm{\beta}_1^* )| \geq \delta, \quad \\
        \forall j \in J_{1}(\bm{\beta}_M^*,\dots,\bm{\beta}_1^*), \forall j' \in J_0 \backslash J_{1}(\bm{\beta}_M^*,\dots,\bm{\beta}_1^*)
    \end{multline}
    Under assumption (b6), there exists $\epsilon(j) > 0$ such that we can construct an $\epsilon$-ball $B_{\epsilon(j)}(\bm{\beta}_M^*,\dots,\bm{\beta}_1^*)$ using $\ell_2$-norm, such that 
    \begin{multline}
     |v_{j}(\z_2 = \z_0 , \z_1 = \bm{\beta}_M^* , \dots, \y_2 = \bm{\beta}_1^* ) - v_{j}(\z_2 = \z_0 , \z_1 = \bm{\beta}_M , \dots, \y_2 = \bm{\beta}_1)| \leq \delta/3, \\
    \quad \forall (\bm{\beta}_M,\dots,\bm{\beta}_1) \in B_{\epsilon(j)}(\bm{\beta}_M^*,\dots,\bm{\beta}_1^*).
    \end{multline}
    Choosing a suitable $0 < \epsilon \leq \min_{j \in J_0} \epsilon(j)$ (note that $\min_{j \in J_0} \epsilon(j) > 0$ as $|J_0| < +\infty$) and constructing an $\ell_2$-norm-based $\epsilon$-ball $B_{\epsilon}(\bm{\beta}_M^*,\dots,\bm{\beta}_1^*) \subset \mathcal{Y}_1$, we have for all $j \in J_{1}(\bm{\beta}_M^*,\dots,\bm{\beta}_1^*), j' \in J_0 \backslash J_{1}(\bm{\beta}_M^*,\dots,\bm{\beta}_1^*)$, $j' \notin J_1(\bm{\beta}_M,\dots,\bm{\beta}_1)$ for all $(\bm{\beta}_M,\dots,\bm{\beta}_1) \in B_{\epsilon}(\bm{\beta}_M^*,\dots,\bm{\beta}_1^*)$ due to
    \begin{multline}
     |v_{j}(\z_2 = \z_0 , \z_1 = \bm{\beta}_M , \dots, \y_2 = \bm{\beta}_1 ) - v_{j'}(\z_2 = \z_0 , \z_1 = \bm{\beta}_M , \dots, \y_2 = \bm{\beta}_1)| \geq \delta/3,   \\
      \quad \forall (\bm{\beta}_M,\dots,\bm{\beta}_1) \in B_{\epsilon}(\bm{\beta}_M^*,\dots,\bm{\beta}_1^*).
    \end{multline}
    So this means for the split $\{ J_k(\bm{\beta}_M,\dots,\bm{\beta}_1) \}$ of any $(\bm{\beta}_M,\dots,\bm{\beta}_1) \in B_{\epsilon}(\bm{\beta}_M^*,\dots,\bm{\beta}_1^*)$, we have $ J_1(\bm{\beta}_M,\dots,\bm{\beta}_1) \subset J_1(\bm{\beta}^*)$ 
    and therefore $ |J_1(\bm{\beta}_M,\dots,\bm{\beta}_1)| \leq |J_1(\bm{\beta}_M^*,\dots,\bm{\beta}_1^*)|$. Now by definition of $(\bm{\beta}_M^*,\dots,\bm{\beta}_1^*)\in \arg\min_{(\bm{\beta}_M,\dots,\bm{\beta}_1) \in \mathcal{Y}} C(\bm{\beta}_M,\dots,\bm{\beta}_1)$ and $|J_{1}(\bm{\beta}_M^*,\dots,\bm{\beta}_1^*)| = C(\bm{\beta}_M^*,\dots,\bm{\beta}_1^*)$, we have $J_1(\bm{\beta}_M,\dots,\bm{\beta}_1) = J_1(\bm{\beta}_M^*,\dots,\bm{\beta}_1^*)$ for all $(\bm{\beta}_M,\dots,\bm{\beta}_1) \in B_{\epsilon}(\bm{\beta}_M^*,\dots,\bm{\beta}_1^*)$. 
    \item[3.] One can show that $|J_{1}(\bm{\beta}_M^*,\dots,\bm{\beta}_1^*)| = 1$, otherwise by definition of the splits and the above point, there exists $j \neq j' \in J_{1}(\bm{\beta}_M^*,\dots,\bm{\beta}_1^*)$ such that $v_{j}(\z_2 , \z_1 = \bm{\beta}_M , \dots, \y_2 = \bm{\beta}_1 ) = v_{j'}(\z_2 , \z_1 = \bm{\beta}_M , \dots, \y_2 = \bm{\beta}_1 )$ for all $\z_2 \in \mathcal{Z}_{(\bm{\beta}_M)}$ and $(\bm{\beta}_M,\dots,\bm{\beta}_1) \in B_{\epsilon}(\bm{\beta}_M^*,\dots,\bm{\beta}_1^*)$, a contradiction to assumption (b4) by setting $\mathcal{Y}' = B_{\epsilon}(\bm{\beta}_M^*,\dots,\bm{\beta}_1^*)$. 
    Now assume that $j \in J_1(\bm{\beta}_M^*,\dots,\bm{\beta}_1^*)$ is the only index in the subset, then the fact proved in the above point that $J_1(\bm{\beta}_M,\dots,\bm{\beta}_1) = J_1(\bm{\beta}_M^*,\dots,\bm{\beta}_1^*)$ for all $(\bm{\beta}_M,\dots,\bm{\beta}_1) \in B_{\epsilon}(\bm{\beta}_M^*,\dots,\bm{\beta}_1^*)$ means
    \begin{multline}
    \sum_{ j_1: (j_1, j_2) \in S_0} \gamma_{j_1j_2} v_{j_1}(\z_1=\bm{\beta}_{M} , \y_M =\bm{\beta}_{M-1}, \dots , \y_2=\bm{\beta}_{1},\y_1 )  = 0, \\
    \quad \forall \y_1\in ,\mathcal{Y}'_{(\bm{\beta}_{M-1}, \dots, \bm{\beta}_1 )}, \quad \forall (\bm{\beta}_M, \dots, \bm{\beta}_1)) \in B_{\epsilon}(\bm{\beta}_M^*, \dots, \bm{\beta}_1^*),
    \end{multline}
    again a contradiction to assumption (b4) by setting $\mathcal{D} = B_{\epsilon}(\bm{\beta}_M^*, \dots, \bm{\beta}_1^*)$. 
\end{itemize}

The above 3 points indicate that linear dependence cannot hold for all $(\bm{\beta}_M,\dots,\bm{\beta}_1) \in\mathcal{D}$, and thus reaching a contradiction within the projection sets from the chosen $\mathcal{Z}$ and $\mathcal{Y}'\subset\mathcal{Y}$. Given that the space in which we define $\mathcal{Y}'$ is finite-dimensional, we can cover the entire space with epsilon balls of certain radius $\epsilon$.  We apply the same logic to the full measure set from (b4) on the family $\{v_{j_2}(\z_2, \z_1, \y_M, \dots, \y_2) : j_2 \in J\}$. Therefore, we can cover the entire full measure set with $\epsilon$ balls such that the previous arguments hold for any non-zero measure sets $(\pi(\mathcal{Z}, \{1\}) \times \pi(\mathcal{Y}', \{M,\dots, 2\})) \subseteq \mathcal{Y}$ except for some zero measure set of points, no matter the choice of $\mathcal{Z}$ or $\mathcal{Y}'$ ($\mathcal{Y}$ is the full measure set in (b4)). 

Finally, the above argument needs to hold  for any full measure set $\mathcal{Y}\subset (\times_{m=1}^M \R^d)$. However, from (b4), the above condition implies $\pi(\mathcal{Y}', \{M,\dots, 2\})$ can only satisfy linear dependence if it is zero-measured, which is a direct contradiction the statement.

    \paragraph{Case $n > 2$.} Now assume the statement holds for $n-1$, and again prove the case by contradiction. Assume the statement is false for $n > 2$. Then, for every set $\mathcal{Y}\subset (\times_{m=1}^M \R^d)$ with $\mu(\mathcal{Y}) = \mu(\times_{m=1}^M \R^d)$, there exists non-zero measure sets $\mathcal{Z} \subset (\times_{m=1}^n \R^d)$ and $\mathcal{Y}'\subset \mathcal{Y}$ such that the family $(\otimes_{m=1}^n \mathcal{V}_J)$ contains linear dependent functions. In other words, there exists $S_0 \subset (\times_{m=1}^{n}J), |S_0| < +\infty$ such that
    \begin{multline}
    \sum_{(j_n,\dots j_1)\in S_0}\gamma_{j_n,\dots,j_1}v_{j_n}(\z_n,\dots,\z_1,\y_M,\dots, \y_{n})\dots v_{j_1}(\z_1,\y_M,\dots,\y_1) = 0, \\ \forall (\z_n,\dots ,\z_1 ,\y_M, \dots ,\y_1)\in (\mathcal{Z} \times \mathcal{Y}'),
    \end{multline}
    with $\{\gamma_{j_n, \dots, j_1}\in\R , (j_n,\dots,j_1)\in S_0\}$ a set of non-zero values which, as before, might depend on the choice of $\mathcal{Z}$ and $\mathcal{Y}'\subset\mathcal{Y}$. The previous equality can be arranged as follows
\begin{multline}
    \sum_{(j_n,\dots j_1)\in S_0}\gamma_{j_n,\dots,j_1}v_{j_n}(\z_n,\dots,\z_1,\y_M,\dots, \y_{n}) v_{j_{n-1},\dots,j_1}(\z_{n-1},\dots,\z_1,\y_M,\dots,\y_1) = 0, \\ \forall (\z_n,\dots ,\z_1 ,\y_M, \dots ,\y_1)\in (\mathcal{Z} \times \mathcal{Y}'),
\end{multline}
    where $v_{j_{n-1},\dots,j_1}:=v_{j_{n-1}}(\z_{n-1},\dots,\z_1,\y_M,\dots, \y_{n-1})\dots v_{j_1}(\z_1,\y_M,\dots,\y_1)$ for $j_i \in J, 1\leq i < n$ denotes the functions on the family $(\otimes_{m=1}^{n-1} \mathcal{V}_J)$ and satisfies the following: 
    \begin{enumerate}
        \item[(b1)] Positive function values: $v_{j_{n-1},\dots,j_1}(\z_{n-1},\dots,\z_1,\y_M,\dots,\y_1) > 0$ for all $(j_{n-1}, \dots, j_1) \in (\times_{m=1}^{n-1} J), (\z_{n-1},\dots,\z_1,\y_M,\dots,\y_1) \in (\times_{m=1}^{M+n-1} \R^{d})$.
        \item[(b4)] There exist non-zero measure sets $\mathcal{Y}\subset (\times_{m=1}^M \R^d)$ with $\mu(\mathcal{Y}) = \mu(\times_{m=1}^M \R^d)$ such that for every non-zero measure sets $\mathcal{Y}' \subset \mathcal{Y}$, and $\mathcal{Z} \subset (\times_{m=1}^{n-1} \R^{d})$, the family $(\otimes_{m=1}^{n-1} \mathcal{V}_J)$ contains linear independent functions under finite mixtures in $(\z_{n-1},\dots,\z_1,\y_M,\dots,\y_1)\in \mathcal{Z} \times \mathcal{Y}'$.
    \end{enumerate}
The strategy here is to reduce this case to the above base case ($n=2$), where importantly we need to show: 
\begin{enumerate}
    \item[(1)]  $S_0$ contains at least two $(j_n, \dots, j_1)$ and $(j_n',\dots, j_1')$ with $j_n \neq j_n'$; and
    \item[(2)] the overlapping variables between the product of families, $(\z_{n-1}, \dots, \z_1, \y_M, \dots, \y_n) \in (\pi(\mathcal{Z}, \{n-1, \dots, 1\}) \times \pi(\mathcal{Y}', \{M, \dots, n\}))$ do not cause linear dependence in non-zero measure sets for any choice of $\mathcal{Z}$ and $\mathcal{Y}'\subset \mathcal{Y}$.
\end{enumerate}
We can simply see (i) holds from (b1) and (b4). Therefore, we define $J_0:=\{j_n\in J | \exists (j_n, \dots,j_1)\in S_0\}$, $|J_0|< +\infty$. Then, linear dependence can occur for any $(\bm{\beta}_M,\dots,\bm{\beta}_1) \in (\pi(\mathcal{Z}, \{n-1, \dots, 1\}) \times \pi(\mathcal{Y}', \{M, \dots, n\}))\subset (\times_{m=1}^{M}\R^d)$, where $\pi$ denote set projections defined in Def. \ref{def:set-projection}.
 \begin{multline}
 \sum_{j_n\in J_0} v_{j_n}(\z_n, \z_{n-1}=\bm{\beta}_{M},\dots,\z_1=\bm{\beta}_{M-n+2},\y_M,=\bm{\beta}_{M-n+1}\dots, \y_{n}=\bm{\beta}_1) \\ \Bigg( \sum_{(j_{n-1},\dots,j_1):(j_n,\dots,j_1)\in S_0} \gamma_{j_n,\dots, j_1} v_{j_{n-1},\dots,j_1}(\z_{n-1}=\bm{\beta}_{M},\dots,\z_1=\bm{\beta}_{M-n+2},\y_M,=\bm{\beta}_{M-n+1}, \\ \dots, \y_{n}=\bm{\beta}_1, \y_{n-1},\dots,\y_1) \Bigg) = 0,  \\ \forall (\z_n ,\y_{n-1}, \dots, \y_1)\in \Big(\mathcal{Z}_{(\bm{\beta}_M, \dots, \bm{\beta}_{M-n+2})} \times \mathcal{Y}'_{(\bm{\beta}_{M-n+1}, \dots, \bm{\beta}_1 )}\Big).
 \end{multline}
Where as in the base case, both $\mathcal{Z}_{(\bm{\beta}_M, \dots, \bm{\beta}_{M-n+2})} \subset \R^d$ and $\mathcal{Y}'_{(\bm{\beta}_{M-n+1}, \dots, \bm{\beta}_1)} \subset (\times_{m=1}^{n-1}\R^d) $ are reverse projections (Def. \ref{def:reverse-projection}) from $\mathcal{Z}$ and $\mathcal{Y}'$ respectively  and are dependent on $\bm{\beta}_M, \dots, \bm{\beta}_1$. As before, these sets are never empty, and we can establish the following equivalence between variables and sets (given the choice of $\mathcal{Z}$ and $\mathcal{Y}'\subset \mathcal{Y}$) with respect to the base case.
\begin{itemize}
    \item $\z_{n}$ is equivalent to $\z_2$, and thus $\mathcal{Z}_{(\bm{\beta}_M, \dots, \bm{\beta}_{M-n+2})} \subset \R^d$ is equivalent to $\mathcal{Z}_{(\bm{\beta}_M, \dots, \bm{\beta}_{M})} \subset \R^d$.
    \item $\z_{n-1}, \dots, \z_1$ are equivalent to $\z_1$, and thus $\pi(\mathcal{Z}, \{n-1, \dots, 1\}) \subset (\times_{m=1}^{n-1}\R^d)$ is equivalent to $\pi(\mathcal{Z}, \{1\}) \subset \R^d$.
    \item $\y_{M}, \dots, \y_n$ are equivalent to $\y_M, \dots, \y_2$, and thus $\pi(\mathcal{Y}', \{M, \dots, n\}) \subset (\times_{m=1}^{M-n+1}\R^d)$ is equivalent to $\pi(\mathcal{Y}', \{M, \dots, 2\}) \subset (\times_{m=1}^{M-1}\R^d)$.
    \item $\y_{n-1}, \dots, \y_1$ are equivalent to $\y_1$, and thus  $\mathcal{Y}'_{(\bm{\beta}_{M-n+1}, \dots, \bm{\beta}_1)} \subset (\times_{m=1}^{n-1}\R^d) $ is equivalent to  $\mathcal{Y}'_{(\bm{\beta}_{M-1}, \dots, \bm{\beta}_1)} \subset \R^d $.
\end{itemize}
Given the above equivalences, the arguments for contradiction given linear dependence for any $(\bm{\beta}_M,\dots,\bm{\beta}_1) \in (\pi(\mathcal{Z}, \{n-1, \dots, 1\}) \times \pi(\mathcal{Y}', \{M, \dots, n\}))\subset (\times_{m=1}^{M}\R^d)$ still hold. The only difference is that the dimensionality of the projections and reverse projections will change (except for $\z_n\in\mathcal{Z}_{(\bm{\beta}_M, \dots, \bm{\beta}_{M-n+2})}$). Therefore, the set
\begin{multline}
\mathcal{D}_0 := \Bigg\{(\bm{\beta}_M,\dots,\bm{\beta}_1) \in \Big(\pi(\mathcal{Z}, \{n-1, \dots, 1\}) \times \pi(\mathcal{Y}', \{M, \dots, n\})\Big)\bigg| \\ \sum_{(j_{n-1},\dots,j_1):(j_n,\dots,j_1)\in S_0} \gamma_{j_n,\dots, j_1} v_{j_{n-1},\dots,j_1}(\z_{n-1}=\bm{\beta}_{M},\dots,\z_1=\bm{\beta}_{M-n+2},\y_M,=\bm{\beta}_{M-n+1}, \\ \dots, \y_{n}=\bm{\beta}_1, \y_{n-1},\dots,\y_1)  = 0, \forall(\y_{n-1}, \dots, \y_1)\in\mathcal{Y}'_{(\bm{\beta}_{M-n+1}, \dots, \bm{\beta}_1)}\Bigg\}
\end{multline}
has zero measure under assumption (b4), which implies that for any $(\z_{n-1}=\bm{\beta}_{M},\dots, \z_1=\bm{\beta}_{M-n+2},\y_M,=\bm{\beta}_{M-n+1}, \dots, \y_{n}=\bm{\beta}_1)\in \mathcal{D}$, with $\mathcal{D}:= (\pi(\mathcal{Z}, \{n-1, \dots, 1\}) \times \pi(\mathcal{Y}', \{M, \dots, n\})) \setminus \mathcal{D}_0$, there exists $(\y_{n-1}, \dots, \y_1)\in\mathcal{Y}'_{(\bm{\beta}_{M-n+1}, \dots, \bm{\beta}_1)}$ non-zero-measured, we have
\begin{multline}
    \tilde{\gamma}_{j_n}(\bm{\beta}_M,\dots,\bm{\beta}_1):=\sum_{(j_{n-1},\dots,j_1):(j_n,\dots,j_1)\in S_0} \gamma_{j_n,\dots, j_1} v_{j_{n-1},\dots,j_1}(\z_{n-1}=\bm{\beta}_{M},\dots,\\ \z_1=\bm{\beta}_{M-n+2},\y_M,=\bm{\beta}_{M-n+1}, \dots, \y_{n}=\bm{\beta}_1, \y_{n-1},\dots,\y_1)\neq 0
\end{multline}
for at least two $j_n\in J_0$. As before, this implies linear dependence of $\{v_{j_n}(\z_n, \z_{n-1}=\bm{\beta}_{M},\dots,\z_1=\bm{\beta}_{M-n+2},\y_M,=\bm{\beta}_{M-n+1}\dots, \y_{n}=\bm{\beta}_1)\}$ on $\z_n\in\mathcal{Z}_{(\bm{\beta}_M, \dots, \bm{\beta}_{M-n+2})}$, $\forall ( \z_{n-1}=\bm{\beta}_{M},\dots,\z_1=\bm{\beta}_{M-n+2},\y_M,=\bm{\beta}_{M-n+1}\dots, \y_{n}=\bm{\beta}_1)\in \mathcal{D}$. Therefore, under assumptions (b5-b6) we can show linear dependence cannot hold for all $(\bm{\beta}_M,\dots,\bm{\beta}_1) \in\mathcal{D}$, no matter the choice of $\mathcal{Z}$ and $\mathcal{Y}'\subset \mathcal{Y}$, following the equivalent arguments given in the case with $n=2$.
\end{proof}

Regarding (b5), we observe that linear dependence on $M$ products of functions given fixed subsets of variables does not imply repeating functions anymore. Instead, we show that at least one component of the product family is linear independent under finite mixtures.

\begin{lemma}\label{lemma:extension_b5}
\textbf{B5 Extension.} Assume a distribution family $\mathcal{V}_J=\{v_j(\z,\y_M,\dots,\y_1)| j\in J\}$ with variables $\z,\y_M,\dots,\y_1$ defined on $\times_{m=1}^{M+1} \R^{d}$, such that it satisfies (b1), (b4), (b5), (b6).

Then, for any $n$ with $2\leq n \leq M$, any $(\bm{\beta}_M,\dots,\bm{\beta}_1) \in ( \times_{m=1}^M \R^{d})$, any non-zero measure $\mathcal{Z}\subset (\times_{m=1}^{n}\R^{d})$, and $(\times_{m=1}^n J_m) \subset (\times_{m=1}^n J)$, then $\{v_{j_n}(\z_n,\dots,\z_1,\y_M=\bm{\beta}_M,\dots, \y_n=\bm{\beta}_n)\dots v_{j_1}(\z_1,\y_M=\bm{\beta}_M,\dots, \y_1=\bm{\beta}_1)|(j_n,\dots,j_1)\in (\times_{m=1}^n J_m)\}$ contains linearly dependent functions only if $\exists m \leq n$ such that $\{v_{j_m}(\z_m, \dots, \z_1,\y_M=\bm{\beta}_M,\dots, \y_m=\bm{\beta}_m)|j_m\in J_m\}$ contains linearly dependent functions on $(\z_m, \dots, \z_1) \in \pi(\mathcal{Z},\{m, \dots, 1\})$.
\end{lemma}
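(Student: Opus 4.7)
I would prove Lemma~\ref{lemma:extension_b5} by strong induction on $n$, working by contrapositive-by-contradiction and paralleling the base-case strategy of Lemma~\ref{lemma:extension_b4}. Concretely, assume the $n$-fold product family is linearly dependent with some finite support $S_0 \subset (\times_{m=1}^n J_m)$ and nonzero coefficients $\{\gamma_{\vec{j}}\}$; the goal is to exhibit some index $m \leq n$ at which the sub-family $\{v_{j_m}(\z_m,\dots,\z_1,\bm{\beta}_M,\dots,\bm{\beta}_m) : j_m \in J_m\}$ is itself linearly dependent on $\pi(\mathcal{Z},\{m,\dots,1\})$.

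For the base case $n=2$, grouping the dependence by the outer index yields
\begin{equation*}
\sum_{j_2\in J_2'} h_{j_2}(\z_1)\,v_{j_2}(\z_2,\z_1,\bm{\beta}_M,\dots,\bm{\beta}_2) = 0, \quad \forall(\z_2,\z_1)\in\mathcal{Z},
\end{equation*}
with $h_{j_2}(\z_1) = \sum_{j_1:(j_2,j_1)\in S_0}\gamma_{j_2,j_1}\,v_{j_1}(\z_1,\bm{\beta}_M,\dots,\bm{\beta}_1)$ and $J_2' \subset J_2$ the active outer indices. If every $h_{j_2}$ is identically zero on $\pi(\mathcal{Z},\{1\})$, then each $j_2 \in J_2'$ furnishes a nontrivial vanishing linear combination of $\{v_{j_1}(\z_1,\bm{\beta}_M,\dots,\bm{\beta}_1)\}_{j_1 \in J_1}$ on $\pi(\mathcal{Z},\{1\})$, settling the $m=1$ case. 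Otherwise, by Fubini there is a positive-measure set $\mathcal{A}\subset\pi(\mathcal{Z},\{1\})$ of values $\bm{\alpha}$ on which the section $\mathcal{Z}_{\bm{\alpha}}$ has positive measure and some $h_{j_2}(\bm{\alpha})\neq 0$; assumption (b1) (positivity) rules out only a single surviving $h_{j_2}$ term, forcing at least two nonzero. Applying (b5) at $m=M$ (all $\y$'s pinned) to the sectional identity $\sum_{j_2\in J_2'} h_{j_2}(\bm{\alpha})\,v_{j_2}(\cdot,\bm{\alpha},\bm{\beta}_M,\dots,\bm{\beta}_2) = 0$ on $\mathcal{Z}_{\bm{\alpha}}$ produces a pair $j_2^a \neq j_2^b$ (possibly depending on $\bm{\alpha}$) whose functions coincide on all of $\R^d$ in $\z_2$. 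The inductive step $n-1\to n$ is structurally identical: grouping by $j_n$ reduces the residual to an $(n-1)$-fold product, the all-vanish case hands off to the induction hypothesis, and the non-vanishing case again triggers (b5) sectional equality.

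The main obstacle, and the only step that goes beyond routine algebraic bookkeeping, is to upgrade this $\bm{\alpha}$-wise sectional equality into a genuine linear dependence of $\{v_{j_n}(\z_n,\dots,\z_1,\bm{\beta}_M,\dots,\bm{\beta}_n)\}_{j_n\in J_n}$ on $\pi(\mathcal{Z},\{n,\dots,1\})$ with constant coefficients. Here the plan is to reuse the pigeonhole-plus-continuity machinery from step~3 of the proof of Lemma~\ref{lemma:extension_b4}: finiteness of $J_n'$ lets a pigeonhole over $\mathcal{A}$ pin down a single pair $(j^a,j^b)$ whose sectional equality $v_{j^a}(\cdot,\bm{\alpha},\dots) \equiv v_{j^b}(\cdot,\bm{\alpha},\dots)$ holds on a positive-measure subset of $\mathcal{A}$; continuity (b6) then propagates the equality to an $\ell_2$-ball around a chosen $\bm{\alpha}^*$, and a covering-and-measure argument identical to that of Lemma~\ref{lemma:extension_b4} extends it across $\pi(\mathcal{Z},\{n,\dots,1\})$. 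The resulting identity $v_{j^a} \equiv v_{j^b}$ on this projection is itself the sought linear dependence of the $m=n$ sub-family with coefficients $(+1,-1)$, closing the argument.
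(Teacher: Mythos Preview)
Your inductive skeleton (base case $n=2$, group by the outer index, then peel off one factor at a time) matches the paper's. The execution, however, diverges sharply. The paper's proof is essentially two lines per case: once the $\bm{\beta}$'s are frozen, the product $v_{j_2}(\z_2,\z_1,\bm{\beta}_{M:2})\,v_{j_1}(\z_1,\bm{\beta}_{M:1})$ is \emph{exactly} a two-family product with a single overlap variable, so it falls directly under Lemma~\ref{lemma:linear_independence_two_nonlinear_gaussians} with $u_i(\y=\z_1,\x=\emptyset)=v_{j_1}(\z_1,\bm{\beta}_{M:1})$ and $v_j(\z=\z_2,\y=\z_1)=v_{j_2}(\z_2,\z_1,\bm{\beta}_{M:2})$. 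Assumptions (b1,b5,b6) supply (a1,a5,a6) for these reduced families, so the contrapositive of that lemma forces (a3) or (a4) to fail---one of the two factor families is linearly dependent. For $n>2$ the induction hypothesis packages the first $n-1$ factors into $u_i$ and the same invocation repeats. No sectional-to-global promotion step is ever needed.

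Your route instead re-derives the content of Lemma~\ref{lemma:linear_independence_two_nonlinear_gaussians} by hand and then attempts the promotion you rightly flag as the main obstacle; the plan you give for it does not work as stated. Pigeonholing to a fixed pair $(j^a,j^b)$ on a positive-measure $\mathcal{A}'\subset\mathcal{A}$ is fine, but continuity (b6) does \emph{not} propagate the equality $v_{j^a}(\cdot,\bm{\alpha},\bm{\beta}_{M:n})\equiv v_{j^b}(\cdot,\bm{\alpha},\bm{\beta}_{M:n})$ from a positive-measure set of $\bm{\alpha}$'s to an $\epsilon$-ball---two continuous functions can agree on a fat Cantor set without agreeing on any open set. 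The partition-minimality machinery in the proof of Lemma~\ref{lemma:extension_b4} that you invoke works differently: it uses continuity plus a $\delta$-separation at a \emph{minimizer} $\bm{\beta}^*$ to show the smallest partition class is locally constant, and its payoff is a contradiction to (b4) or (a3), not an equality extended across a prescribed domain $\pi(\mathcal{Z},\{n,\dots,1\})$. Here the $\bm{\beta}$'s are fixed and there is no full-measure (b4)-type hypothesis on the $\z$-variables to contradict, so that argument does not transplant. The paper sidesteps the whole issue by reducing to Lemma~\ref{lemma:linear_independence_two_nonlinear_gaussians} as a black box; you should do the same.
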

\begin{proof}
We proof the statement by induction starting from the base case ($n=2$) as follows.

\paragraph{Case $n=2$.} Assume the above necessity assertion is false. Fix $\mathcal{Z}\subset \R^{d}\times \R^{d}$, and assume that for any $(\bm{\beta}_M,\dots,\bm{\beta}_1) \in ( \times_{m=1}^M \R^{d})$, and $S_0:= J_1\times J_2\subset J\times J$, the family  $\{v_{j_1}(\z_1,\y_M=\bm{\beta}_M,\dots, \y_1=\bm{\beta}_1)v_{j_2}(\z_2, \z_1,\y_M=\bm{\beta}_{M},\dots, \y_2=\bm{\beta}_2)|(j_1,j_2)\in (J_1,J_2)\}$ contains linear dependent functions. 
From (b1, b4-b6), and Lemma \ref{lemma:linear_independence_two_nonlinear_gaussians}, by setting $u_i(\y=\z_1, \x=\emptyset)=v_{j_1}(\z_1,\y_M=\bm{\beta}_M,\dots, \y_1=\bm{\beta}_1)$, $i=j_1$, and $v_j=(\z=\z_2, \y=\z_1)=v_{j_2}(\z_2, \z_1,\y_M=\bm{\beta}_{M},\dots, \y_2=\bm{\beta}_2)$, $j=j_2$, we know that either one of the families contains linear dependent functions (from contradictions to (a3) or (a4)). 

\paragraph{Case $n > 2$.}

Again assume the necessity assertion is false. Fix $\mathcal{Z}\subset (\times_{m=1}^{n}\R^{d})$, and assume that for any $(\bm{\beta}_M,\dots,\bm{\beta}_1) \in ( \times_{m=1}^M \R^{d})$, and $(\times_{m=1}^n J_m) \subset (\times_{m=1}^n J)$, the family $\{v_{j_n}(\z_n,\dots,\z_1,\y_M=\bm{\beta}_M,\dots, \y_n=\bm{\beta}_n)\dots v_{j_1}(\z_1,\y_M=\bm{\beta}_M,\dots, \y_1=\bm{\beta}_1)|(j_n,\dots,j_1)\in (\times_{m=1}^n J_m)\}$ contains linearly dependent functions. 
As before, from (b1, b4-b6), and Lemma \ref{lemma:linear_independence_two_nonlinear_gaussians}, we know that by setting 
\begin{multline*}
u_i(\y=(\z_{n-1}, \dots, \z_1), \x=\emptyset)=v_{j_{n-1}}(\z_{n-1},\dots,\z_1,\y_M=\bm{\beta}_M,\dots, \y_{n-1}=\bm{\beta}_{n-1})\dots \\ v_{j_1}(\z_1,\y_M=\bm{\beta}_M,\dots, \y_1=\bm{\beta}_1), \quad i=(j_{n-1},\dots, j_1), 
\end{multline*}
and $v_j(\z=\z_{n}, \y=(\z_{n-1}, \dots, \z_1))=v_{j_n}(\z_n,\dots \z_1,\y_M=\bm{\beta}_M,\dots, \y_n=\bm{\beta}_n), \quad j=j_n$, either one of the families must be linearly dependent from contradictions to (a3) or (a4).
Therefore, either $\{v_{j_{n}}(\z_{n},\dots,\z_1,\y_M=\bm{\beta}_M,\dots, \y_{n}=\bm{\beta}_{n})|j_{n}\in J_n)\}$ contains linearly dependent functions on $(\z_{n},\dots,\z_1) \in \pi(\mathcal{Z},\{n,\dots,1\})$, or from the induction hypothesis there is some $m \leq n-1$ such that $\{v_{j_{m}}(\z_{m}, \dots,\z_1,\y_M=\bm{\beta}_M,\dots, \y_{m}=\bm{\beta}_m)|j_m\in J_m\}$ contains linearly dependent functions on $(\z_m,\dots, \z_1) \in \pi(\pi(\mathcal{Z},\{n-1, \dots, 1\}), \{m, \dots, 1\}) = \pi(\mathcal{Z},\{m, \dots, 1\})$.
\end{proof}

\subsubsection{Linear independence on M+1 products of functions}

Given that assumption (b5) does not extend similarly as (b4), we adapt Lemma \ref{lemma:linear_independence_two_nonlinear_gaussians} where we use \ref{lemma:extension_b5} to show linear independence when $V_J$ is defined as a product of $M$ functions with consecutive variables. 

\begin{lemma}
\label{lemma:absolutely_magical_lemma}
\textbf{Linear Independence on Product functions with $M$ consecutive Overlapping variables (LIPO-M).} Assume two families $\mathcal{U}_I=\{u_i(\y,\x)| i\in I\}$ and $\mathcal{V}_J=\{v_j(\z,\y)| j\in J\}$, with $\x\in\R^{d_x}$, $\y\in\R^{d_y}$, and $\z\in\R^{d_z}$, with $d_y=d\cdot M$, $d_z=d\cdot n$, and $1 \leq n \leq M$. We further assume:
\begin{itemize}
    \item[(a)] The family $\mathcal{U}_I$ satisfies assumptions (a1) and (a3).
    \item[(b)] Each element in the family $\mathcal{V}_J$ is defined as a product of $n$ functions, all of which belong to the same family $\mathcal{W}_K=\{w_k(\z_1,\y_M,\dots,\y_1)| k\in K\}$, with $\z_1\in\R^d$, $(\y_M, \dots, \y_1) \in (\times_{m=1}^M \R^d)$, and is expressed as follows
    $$
    \mathcal{V}_J := (\otimes_{m=1}^n \mathcal{W}_K) = \{w_{k_n}(\z_n, \dots, \z_1, \y_M)\dots w_{k_1}(\z_1,\y_M,\dots,\y_1)| (k_n,\dots, k_1)\in (\times_{m=1}^n K) \}
    $$
    where $J:= (\times_{m=1}^n K)$, and $\z:=(\z_n, \dots, \z_1)$, $\y:=(\y_M, \dots, \y_1)$ in $\mathcal{V}_J$. The family $\mathcal{W}_K$ satisfies assumptions (b1), (b4), (b5), and (b6).
\end{itemize}
Then for any non-zero measure subset $\mathcal{Z} \subset \mathbb{R}^{d_z}$, $\mathcal{U}_I \otimes \mathcal{V}_J := \{u_i(\y, \x)v_{j}(\z, \y) | i \in I, j \in J \}$ 
contains linear indepedent functions under finite mixtures defined on $(\x, \y, \z) \in \mathcal{X} \times \mathbb{R}^{d_y}\times \mathcal{Z}$.
\end{lemma}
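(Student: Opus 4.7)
My plan is to mirror the contradiction strategy of Lemma~\ref{lemma:linear_independence_two_nonlinear_gaussians}, with Lemmas~\ref{lemma:extension_b4} and~\ref{lemma:extension_b5} supplying the product family $\mathcal{V}_J=\otimes_{m=1}^{n}\mathcal{W}_K$ with analogs of the (a4), (a5) assumptions that the base lemma requires. Positivity for $\mathcal{V}_J$ is inherited from (b1) on $\mathcal{W}_K$, and joint continuity in $\y=(\y_M,\dots,\y_1)$ follows from (b6) componentwise, so the substantive work lies in the steps that originally invoked (a4) and (a5).

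Assume for contradiction that there exist a finite $S_0\subset I\times J$ and nonzero coefficients $\{\gamma_{ij}\}_{(i,j)\in S_0}$ with $\sum_{(i,j)\in S_0}\gamma_{ij}u_i(\y,\x)v_j(\z,\y)=0$ on $\mathcal{X}\times\R^{d_y}\times\mathcal{Z}$. Setting $J_0:=\{j:\exists\, i,\,(i,j)\in S_0\}$ and $\alpha_j(\y,\x):=\sum_{i:(i,j)\in S_0}\gamma_{ij}u_i(\y,\x)$ reduces the relation to
\begin{equation}
\sum_{j\in J_0}v_j(\z,\y)\,\alpha_j(\y,\x)=0,\qquad\forall\,(\x,\y,\z)\in\mathcal{X}\times\R^{d_y}\times\mathcal{Z},
\end{equation}
and (a3) forces $|J_0|\geq 2$, since otherwise a lone factor $v_j$ would give $\sum_i\gamma_{ij}u_i\equiv 0$. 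Let $\mathcal{Y}$ be the full-measure set supplied by Lemma~\ref{lemma:extension_b4} and $\mathcal{D}_0:=\{\bm{\beta}\in\mathcal{Y}:\alpha_j(\bm{\beta},\x)=0,\,\forall j\in J_0,\,\forall\x\in\mathcal{X}\}$. Were $\mathcal{D}_0$ of positive measure, applying (a3) on $\mathcal{D}_0$ would force every $\gamma_{ij}$ to vanish; hence $\mathcal{D}_0$ is null and $\mathcal{D}:=\mathcal{Y}\setminus\mathcal{D}_0$ has full measure in $\R^{d_y}$.

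For every $\bm{\beta}\in\mathcal{D}$ the sliced identity realizes a non-trivial linear dependence of $\{v_j(\cdot,\bm{\beta}):j\in J_0\}$ on $\z\in\mathcal{Z}$. Enlarging $J_0$ to its product hull $\widetilde{J}:=\prod_{m=1}^{n}J_m^{\star}$, where $J_m^{\star}$ is the slot-$m$ projection of $J_0$, and extending the coefficients by zero, Lemma~\ref{lemma:extension_b5} applies and forces some constituent sliced mini-family $\{w_{k_m}(\z_m,\dots,\z_1,\bm{\beta}_M,\dots,\bm{\beta}_m):k_m\in J_m^{\star}\}$ to be linearly dependent; the original (b5) on $\mathcal{W}_K$ then produces distinct $k_m\neq k_m'$ with $w_{k_m}\equiv w_{k_m'}$ on that slice. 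Swapping $k_m\leftrightarrow k_m'$ in slot $m$ preserves the product, so the equivalence relation ``products agree at $\bm{\beta}$'' partitions $\widetilde{J}$ into non-trivial classes $\{J_k(\bm{\beta})\}$.

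The endgame is the minimum-cardinality argument of the base lemma. I pick $\bm{\beta}^{\ast}\in\mathcal{D}$ minimizing $C(\bm{\beta}):=\min_k|J_k(\bm{\beta})|$; continuity (b6) of each $w_{k_m}$ in its $\y$-arguments yields a witness $\z^{\ast}$ separating representatives of different classes and an $\ell_2$-ball $B_\varepsilon(\bm{\beta}^{\ast})\subset\mathcal{D}$ on which $J_1(\bm{\beta})\subseteq J_1(\bm{\beta}^{\ast})$, so by minimality $J_1(\bm{\beta})=J_1(\bm{\beta}^{\ast})$ throughout the ball. The two extremal cases $|J_1(\bm{\beta}^{\ast})|=1$ and $|J_1(\bm{\beta}^{\ast})|\geq 2$ each contradict the extended linear-independence statement of Lemma~\ref{lemma:extension_b4} applied with $\mathcal{Y}'=B_\varepsilon(\bm{\beta}^{\ast})$, closing the argument. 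The main obstacle I anticipate is the bookkeeping that converts the constituent-level equalities $w_{k_m}\equiv w_{k_m'}$ delivered by Lemma~\ref{lemma:extension_b5} into a coherent product-level partition of $\widetilde{J}$: tracking which slot $m$ is responsible for each identification, and ensuring that equivalences obtained at nearby $\bm{\beta}$'s remain compatible across all $n$ factors when the continuity/ball step is invoked.
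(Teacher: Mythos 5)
Your architecture matches the paper's proof exactly: contradiction setup, reduction of the mixed relation to a sliced linear dependence of $\mathcal{V}_J$ for almost every $\bm{\beta}$, Lemma~\ref{lemma:extension_b4} supplying (a4) for the product family, Lemma~\ref{lemma:extension_b5} pushing the dependence down to a single slot, the (b5)-induced partition into equality classes, and the minimum-cardinality/continuity/$\epsilon$-ball argument. Most of this is sound and is what the paper does.

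The gap is in your endgame, and it is precisely the obstacle you flagged without resolving. Your claim that the case $|J_1(\bm{\beta}^{\ast})|=1$ ``contradicts the extended linear-independence statement of Lemma~\ref{lemma:extension_b4}'' is not correct: when the class singled out in slot $m$ is a singleton, the vanishing of its class sum does not pit two distinct $w$-functions against each other, so (b4) gives you nothing. What it gives instead is a \emph{new} linear dependence relation in which the slot-$m$ factor $w_{k_m}$ has been stripped away (it can be divided out by positivity (b1)), leaving a mixture of $u_i$ times a product over the remaining $n-1$ slots, valid on $\mathcal{X}\times\mathcal{Z}$ for all $\bm{\beta}_{M:m}\in B_{\epsilon}(\bm{\beta}^{\ast}_{M:m})$. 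The paper then re-invokes Lemma~\ref{lemma:extension_b5} on this reduced family to find another linearly dependent slot $m'\neq m$, repeats the cardinality/ball argument there, and iterates until every $w$-factor is eliminated; the terminal identity is $\sum_{i:(i,k_n,\dots,k_1)\in S_0}\gamma_{i,k_n,\dots,k_1}u_i(\y=\bm{\beta},\x)=0$ on $\mathcal{X}\times B_{\epsilon}(\bm{\beta}^{\ast})$, and the contradiction is with (a3) on $\mathcal{U}_I$, not with Lemma~\ref{lemma:extension_b4}. This iteration also requires the alignment step you worried about: the minimizers $\bm{\beta}^{\ast}_{M:m}$ obtained for different slots $m$ must be chosen compatibly (the paper uses the monotonicity $|K_1^{(m)}(\bm{\beta}_{M:m})|\le|K_1^{(m)}(\bm{\beta}^{\ast}_{M:m})|$ on the ball together with the arg-min property to align $\bm{\beta}^{\ast}_{M:m}$ with $\bm{\beta}^{\ast}_{M:m'}$ on their common coordinates) so that a single ball $B_{\epsilon}(\bm{\beta}^{\ast})$ works for all slots simultaneously. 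Without this iteration and alignment your proof does not close; with it, it becomes the paper's proof.
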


\begin{proof}
Assume this sufficiency statement is false, then there exist a non-zero measure subset $\mathcal{Z} \subset \mathbb{R}^{d_z}$, $S_0 \subset I \times J$ with $|S_0| < +\infty$ and a set of non-zero values $\{ \gamma_{ij} \in \mathbb{R} | (i,j) \in S_0 \}$, such that 
\begin{equation}
\label{eq:linear_dependence_division}
\sum_{ (i, j) \in S_0} \gamma_{ij}v_{j}(\z, \y)u_i(\y, \x) = 0, \\
\quad \forall (\x, \y, \z) \in \mathcal{X}\times \mathbb{R}^{d_y} \times \mathcal{Z}.
\end{equation}
Note that the choices of $S_0$ and $\gamma_{ij}$ are independent of any $(\x, \y, \z)$ values, but might be dependent on $\mathcal{Z}$. From Lemma \ref{lemma:extension_b4} we know that the family $\mathcal{V}_J$ as defined in (b), satisfies (a4) from Lemma \ref{lemma:linear_independence_two_nonlinear_gaussians}. Furthermore, it also satisfies (a1) as it is defined as a product of $n$ functions that satisfy (b1), with $1\leq n \leq M$. Therefore, we follow the same arguments as in Lemma \ref{lemma:linear_independence_two_nonlinear_gaussians}, which show that the index set $S_0$ contains at least 2 different indices $(i, j)$ and $(i, j')$ with $j
\neq j'$. Moreover, we define $J_0 = \{j \in A | \exists (i, j) \in S_0 \}$ the set of all possible $j$ indices that appear in $S_0$, where $|J_0| < +\infty$ and the following set $\mathcal{Y}_0 := \{\bm{\beta} \in \mathbb{R}^{d_y} | \sum_{ i: (i, j) \in S_0} \gamma_{ij} u_{i}(\y = \bm{\beta}, \x) = 0, \forall \x \in \mathcal{X} \}$ can only have zero measure in $\mathbb{R}^{d_y}$ from (a3). Again following Lemma \ref{lemma:linear_independence_two_nonlinear_gaussians}, we have a non-zero measure set $\mathcal{Y}_1 := \mathcal{Y} \backslash \mathcal{Y}_0 \subset \mathcal{Y}$ where we choose $\mathcal{Y}$ from the non-zero measure set in (b4). Then, we have for each $\bm{\beta} \in \mathcal{Y}_1$, there exists $\x \in \mathcal{X}$ such that $\sum_{ i: (i, j) \in S_0} \gamma_{ij} u_{i}(\y = \bm{\beta}, \x) \neq 0$ for at least two $j$ indices in $J_0$. This implies for each $\bm{\beta} \in \mathcal{Y}_1$, $\{v_j(\z, \y = \bm{\beta}) | j \in J_0 \}$ contains linearly dependent functions on $\z \in \mathcal{Z}$.

From (b4-b6), Lemma \ref{lemma:extension_b5} shows that the product of $n$ functions ($1\leq n \leq M$) that compose the family $\mathcal{V}_J$ implies linear dependent functions of at least one of the components. That is, for some $m \leq n$,  $\{w_{k_m}(\z_{m}, \dots, \z_1,\y_{M:m}=\bm{\beta}_{M:m}), k_m:(k_n,\dots,k_1)\in J_0\}$, with $\y_{M:m}:=(\y_M,\dots,\y_m)$, $\bm{\beta}_{M:m}:=(\bm{\beta}_M,\dots,\bm{\beta}_m)$, and $J_0 := (K_0^{(n)} \times \dots \times K_0^{(1)})$, contains linear dependent functions on $(\z_m,\dots,\z_1) \in \pi(\mathcal{Z}, \{m,\dots, 1\})$. Under assumption (b5), we can split the index set $K_0^{(m)}$ into subsets indexed by $l \in L^{(m)}(\bm{\beta}_{M:m})$ as follows, such that within each index subset $K_l^{(m)}(\bm{\beta}_{M:m})$ the functions with the corresponding indices are equal:
\begin{equation}
\label{eq:j_index_split_def}
\begin{aligned}
K_0^{(m)}& = \cup_{l \in L^{(m)}(\bm{\beta}_{M:m})} K_l^{(m)}(\bm{\beta}_{M:m}), \quad K_l^{(m)}(\bm{\beta}_{M:m}) \cap K_{l'}^{(m)}(\bm{\beta}_{M:m}) = \emptyset, \forall l \neq l' \in L^{(m)}(\bm{\beta}_{M:m}), \\
& k_m \neq k_m' \in  K_l^{(m)}(\bm{\beta}_{M:m}) \quad \Leftrightarrow \quad w_{k_m}(\z_{m}, \dots, \z_1,\y_{M:m}=\bm{\beta}_{M:m}) = w_{k'_m}(\z_{m}, \dots, \z_1,\y_{M:m}=\bm{\beta}_{M:m}), \\ & \qquad \forall (\z_m,\dots,\z_1) \in \pi(\mathcal{Z}, \{m,\dots, 1\}).
\end{aligned}
\end{equation}

Then we can rewrite Eq.~(\ref{eq:linear_dependence_division}) for any $\bm{\beta} \in \mathcal{Y}_1$ as
\begin{multline}
\sum_{l \in L^{(m)}(\bm{\beta}_{M:m})} \Bigg(\sum_{k_m \in K^{(m)}_l(\bm{\beta}_{M:m})}\sum_{ (i, k_n, \dots, k_{m+1}, k_{m-1}, \dots, k_1): (i, k_n, \dots, k_1) \in S_0} \gamma_{i,k_n,\dots,k_1} u_{i}(\y = \bm{\beta}, \x) \\ w_{k_n}(\z_n,\dots,\z_1, \y_M=\bm{\beta}_M)\dots  w_{k_1}(\z_1, \y_M=\bm{\beta}_M,\dots, \y_1=\bm{\beta}_1) \Bigg) = 0, \quad \forall (\x, \z_n, \dots, \z_1) \in \mathcal{X} \times \mathcal{Z}.  
\end{multline}
Note that we use $\gamma_{i,k_n,\dots,k_1} = \gamma_{ij}$ for any $(i,j)\in S_0$ and $(k_n,\dots,k_1)\in J_0$, such that $j = (k_n,\dots,k_1)$. Recall from Eq.~(\ref{eq:j_index_split_def}) that $w_{k_m}(\z_{m}, \dots, \z_1,\y_{M:m}=\bm{\beta}_{M:m})$ and $w_{k'_m}(\z_{m}, \dots, \z_1,\y_{M:m}=\bm{\beta}_{M:m})$ are the same functions on $(\z_m, \dots, \z_1) \in \pi(\mathcal{Z}, \{m,\dots, 1\})$ iff.~$k_m \neq k_m'$ are in the same index set $K_l^{(1)}(\bm{\beta})$. This means if Eq.~(\ref{eq:linear_dependence_division}) holds, then for any $\bm{\beta} \in \mathcal{Y}_1$, under assumptions (b1) and (b5),
\begin{multline}
\label{eq:zero_constraint_given_beta}
\sum_{k_m \in K^{(m)}_l(\bm{\beta}_{M:m})}\sum_{ (i, k_n, \dots, k_{m+1}, k_{m-1}, \dots, k_1): (i, k_n, \dots, k_1) \in S_0} \gamma_{i,k_n,\dots,k_1} u_{i}(\y = \bm{\beta}, \x) w_{k_n}(\z_n,\dots,\z_1, \y_M=\bm{\beta}_M)\\ \dots w_{k_{m+1}}(\z_{m+1},\dots,\z_1, \y_{M:m+1}=\bm{\beta}_{M:m+1})w_{k_{m-1}}(\z_{m-1},\dots,\z_1, \y_{M:m-1}=\bm{\beta}_{M:m-1}) \\ \dots w_{k_1}(\z_1, \y_{M:1}=\bm{\beta}_{M:1}) = 0 ,\quad \forall (\x, \z_n, \dots, \z_1) \in \mathcal{X} \times \mathcal{Z},\text{  } l \in L^{(m)}(\bm{\beta}_{M:m}).
\end{multline}

Define $C^{(m)}(\bm{\beta}_{M:m}) = \min_l |K^{(m)}_l(\bm{\beta}_{M:m})|$ the minimum cardinality count for $k_m$ indices in the $K^{(m)}_l(\bm{\beta}_{M:m})$ subsets. Choose $\bm{\beta}^*_{M:m} \in \arg\min_{\bm{\beta}_{M:m} \in \pi(\mathcal{Y}_1, \{M,\dots, m\})} C^{(m)}(\bm{\beta}_{M:m})$:
\begin{itemize}
    \item[1.] We have $C^{(m)}(\bm{\beta}^*_{M:m}) < |K^{(m)}_0|$ and $|L^{(m)}(\bm{\beta}^*_{M:m})| \geq 2$. Otherwise for all $k_m \neq k_m' \in K^{(m)}_0$ we have $w_{k_m}(\z_m,\dots,\z_1, \y_{M:m}=\bm{\beta}_{M:m}) = w_{k_m'}(\z_m,\dots,\z_1, \y_{M:m}=\bm{\beta}_{M:m})$ for all $(\z_m, \dots, \z_1) \in \pi(\mathcal{Z}, \{m,\dots, 1\})$ and $\bm{\beta}_{M:m} \in \pi(\mathcal{Y}_1, \{M,\dots,m\})$, so that they are linearly dependent on $(\z_m, \dots, \z_1, \y_{M:m}) \in \pi(\mathcal{Z}, \{m,\dots, 1\}) \times \pi(\mathcal{Y}_1, \{M,\dots, m\})$, a contradiction to assumption (b4).
    \item[2.] Now assume $|K_1^{(m)}(\bm{\beta}^*_{M:m})| = C^{(m)}(\bm{\beta}^*_{M:m})$ w.l.o.g.. From assumption (b5), we know that for any $k_m \in K_1^{(m)}(\bm{\beta}^*_{M:m})$ and $k_m' \in K_0^{(m)} \backslash K_{1}^{(m)}(\bm{\beta}^*_{M:m})$, $w_{k_m}(\z_m, \dots, \z_1, \y_{M:m}=\bm{\beta}_{M:m}) = w_{k_m'}(\z_m, \dots, \z_1, \y_{M:m}=\bm{\beta}_{M:m})$ only on zero measure subset of $\pi(\mathcal{Z}, \{m, \dots, 1\})$ at most. Then as $|K_0^{(m)}| < +\infty$ and $\pi(\mathcal{Z}, \{m,\dots,1\}) \subset (\times^{m} \mathbb{R}^{d})$ has non-zero measure, there exist $(\z^{(0)}_m, \dots, \z^{(0)}_1) \in \pi(\mathcal{Z}, \{m,\dots,1\})$ and $\delta > 0$ such that 
    \begin{multline}
    |w_{k_m}(\z_{m:1}=\z_{m:1}^{(0)}, \y_{M:m}=\bm{\beta}^*_{M:m}) - w_{k_m'}(\z_{m:1}=\z_{m:1}^{(0)}, \y_{M:m}=\bm{\beta}^*_{M:m})| \geq \delta, \\ \forall k_m \in K_{1}^{(m)}(\bm{\beta}^*_{M:m}), \forall k_m' \in K_0^{(m)} \backslash K_{1}^{(m)}(\bm{\beta}^*_{M:m}).
    \end{multline}
    Under assumption (b6), there exists $\epsilon(k_m) > 0$ such that we can construct an $\epsilon$-ball $B_{\epsilon(k_m)}(\bm{\beta}^*_{M:m})$ using $\ell_2$-norm, such that 
    \begin{multline}
    |w_{k_m}(\z_{m:1}=\z_{m:1}^{(0)}, \y_{M:m}=\bm{\beta}^*_{M:m}) - w_{k_m}(\z_{m:1}=\z_{m:1}^{(0)}, \y_{M:m}=\bm{\beta}_{M:m})| \leq \delta/3, \text{ 
 } \forall \bm{\beta}_{M:m} \in B_{\epsilon(k_m)}(\bm{\beta}^*_{M:m}).
    \end{multline}
    Choosing a suitable $0 < \epsilon \leq \min_{k_m \in K_0^{(m)}} \epsilon(k_m)$ (note that $\min_{k_m \in K_0^{(m)}} \epsilon(k_m) > 0$ as $|K_0^{(m)}| < +\infty$) and constructing an $\ell_2$-norm-based $\epsilon$-ball $B_{\epsilon}(\bm{\beta}^*_{M:m}) \subset \pi(\mathcal{Y}_1,\{M,\dots,m\})$, we have for all $k_m \in K_{1}^{(m)}(\bm{\beta}^*_{M:m}), k_m' \in K_0^{(m)} \backslash K_{1}^{(m)}(\bm{\beta}^*_{M:m})$, $k_m' \notin K_1^{(m)}(\bm{\beta}_{M:m})$ for all $\bm{\beta}_{M:m} \in B_{\epsilon}(\bm{\beta}^*_{M:m})$ due to
    \begin{multline}
    |w_{k_m}(\z_{m:1}=\z_{m:1}^{(0)}, \y_{M:m}=\bm{\beta}_{M:m}) - w_{k_m'}(\z_{m:1}=\z_{m:1}^{(0)}, \y_{M:m}=\bm{\beta}_{M:m})| \geq \delta/3, \text{ 
 } \forall \bm{\beta}_{M:m} \in B_{\epsilon}(\bm{\beta}^*_{M:m}).
    \end{multline}
    So this means for the split $\{ K_l^{(m)}(\bm{\beta}_{M:m}) \}$ of any $\bm{\beta} \in B_{\epsilon}(\bm{\beta}^*_{M:m})$, we have $ K_1^{(m)}(\bm{\beta}_{M:m}) \subset K_1^{(m)}(\bm{\beta}^*_{M:m})$ 
    and therefore $ |K_1^{(m)}(\bm{\beta}_{M:m})| \leq |K_1^{(m)}(\bm{\beta}^*_{M:m})|$. Now by definition of $\bm{\beta}^*_{M:m} \in \arg\min_{\bm{\beta}_{M:m} \in \pi(\mathcal{Y},\{M,\dots,m\})} C^{(m)}(\bm{\beta}_{M:m})$ and $|K_1^{(m)}(\bm{\beta}^*_{M:m})| = C^{(m)}(\bm{\beta}^*_{M:m})$, we have $K_1^{(m)}(\bm{\beta}_{M:m}) = K_1^{(m)}(\bm{\beta}^*_{M:m})$ for all $\bm{\beta}_{M:m} \in B_{\epsilon}(\bm{\beta}^*_{M:m})$. 
    \item[3.] One can show that $|K_1^{(m)}(\bm{\beta}^*_{M:m})| = 1$, otherwise by definition of the split (Eq.~(\ref{eq:j_index_split_def})) and the above point, there exists $k_m \neq k_m' \in K_1^{(m)}(\bm{\beta}^*_{M:m})$ such that $w_{k_m}(\z_{m}, \dots, \z_1,\y_{M:m}=\bm{\beta}_{M:m})=w_{k'_m}(\z_{m}, \dots, \z_1,\y_{M:m}=\bm{\beta}_{M:m})$ for all $(\z_m, \dots, \z_1) \in \pi(\mathcal{Z},\{m,\dots,1\})$ and $\bm{\beta}_{M:m} \in B_{\epsilon}(\bm{\beta}^*_{M:m})$, again a contradiction to assumption (b4).
 \end{itemize}   
    Given the above 3 points and assuming $k_m \in K_1^{(m)}(\bm{\beta}^*_{M:m})$ is the only index in the subset, then the fact proved in the above point that $K_1^{(m)}(\bm{\beta}_{M:m}) = K_1^{(m)}(\bm{\beta}^*_{M:m})$ means
\begin{multline}
\sum_{ (i, k_n, \dots, k_{m+1}, k_{m-1}, \dots, k_1): (i, k_n, \dots, k_1) \in S_0} \gamma_{i,k_n,\dots,k_1} u_{i}(\y = \bm{\beta}, \x) w_{k_n}(\z_n,\dots,\z_1, \y_M=\bm{\beta}_M)\\ \dots w_{k_{m+1}}(\z_{m+1},\dots,\z_1, \y_{M:m+1}=\bm{\beta}_{M:m+1})w_{k_{m-1}}(\z_{m-1},\dots,\z_1, \y_{M:m-1}=\bm{\beta}_{M:m-1}) \\ \dots w_{k_1}(\z_1, \y_{M:1}=\bm{\beta}_{M:1}) = 0 ,\quad \forall (\x, \z_n, \dots, \z_1) \in \mathcal{X} \times \mathcal{Z}, \forall \bm{\beta}_{M:m} \in B_{\epsilon}(\bm{\beta}^*_{M:m});
\end{multline}
where the term $w_{k_{m}}(\z_{m},\dots,\z_1, \y_{M:m}=\bm{\beta}_{M:m})$ can be omitted from (b1). Now, by setting $\mathcal{Y} = (B_{\epsilon}(\bm{\beta}^*_{M:m})\times \pi(\mathcal{Y}_1, \{m-1,\dots, 1\}))\setminus \mathcal{Y}_0$, this implies linear dependence on the family $\{w_{k_n}(\z_n,\dots,\z_1, \y_M=\bm{\beta}_M) \dots w_{k_{m+1}}(\z_{m+1},\dots,\z_1, \y_{M:m+1}=\bm{\beta}_{M:m+1})w_{k_{m-1}}(\z_{m-1},\dots,\z_1, \y_{M:m-1}=\bm{\beta}_{M:m-1}) \dots w_{k_1}(\z_1, \y_{M:1}=\bm{\beta}_{M:1})| (k_n,\dots,k_2)\in(K_0^{(n)}\times \dots \times K^{(m+1)} \times K^{(m-1)}\dots\times K_0^{(1)})\}$, and from Lemma \ref{lemma:extension_b5}, we know there is at least one $m'\leq n$ with $m'\neq m$ such that $\{w_{k_{m'}}(\z_{m'}, \dots, \z_1,\y_{M:m'}=\bm{\beta}_{M:m'}), k_m':(k_n,\dots,k_1)\in J_0\}$ contains linear independent functions under finite mixtures. 

The previous argument can be repeatedly applied to show linear dependence on every product element in the $\mathcal{W}_K$ family. Finally we have
\begin{equation}
\sum_{ i: (i, k_n, \dots, k_1) \in S_0} \gamma_{i,k_n,\dots,k_1} u_{i}(\y = \bm{\beta}, \x) = 0, \quad \forall (\x, \y) \in \mathcal{X} \times B_{\epsilon}(\bm{\beta}^*), 
\end{equation}
where we set $\bm{\beta}^* := \bm{\beta}_{M:1}$. For any $m$, and $\bm{\beta}_{M:m} \in B_{\epsilon}(\bm{\beta}^*_{M:m})$, we have  $|K_1^{(m)}(\bm{\beta}_{M:m})| \leq |K_1^{(m)}(\bm{\beta}^*_{M:m})|$ (shown in step 2 above). Therefore by definition of $\bm{\beta}^*_{M:m} \in \arg\min_{\bm{\beta}_{M:m} \in \pi(\mathcal{Y},\{M,\dots,m\})} C^{(m)}(\bm{\beta}_{M:m})$, we can obtain alignment of $\bm{\beta}^*_{M:m}$ w.r.t. any $\bm{\beta}^*_{M:m'}$ where $m'\neq m$. I.e. for any $m\neq m'$, with $1 \leq m < m'\leq n$ wlog., we have two $\bm{\beta}^{*(m)}_{M:m}$, $\bm{\beta}^{*(m')}_{M:m'}$, such that $\bm{\beta}^{*(m)}_{M:m'} = \bm{\beta}^{*(m')}_{M:m'}$. This shows a contradiction to (a3) by fixing $
\mathcal{Y}= B_{\epsilon}(\bm{\beta}^*)$. 
\end{proof}

\subsubsection{Linear independence of non-parametric joint distribution family}

Below we present linear independence for non-parametric joint distribution families.

\begin{theorem}
\label{thm:linear_independence_joint_non_parametric}
Define the following joint distribution family
\begin{multline}
\Bigg\{ p_{a_{1:T}}(\x_{1:T}) = p_{a_{1:M}}(\x_{1:M})\prod_{t=M+1}^T p_{a_t}(\x_t | \x_{t-1},\dots, \x_{t-M}), \quad p_{a_{1:M}} \in \Pi_A^M, p_{a_t} \in \mathcal{P}_A^M, t = M+1, ..., T \Bigg\},
\end{multline}
and assume $\Pi_A^M$ and $\mathcal{P}_A^M$ satisfy the following assumptions,
\begin{itemize}
    \item[(c1)] $\Pi^M_A$ satisfies (a1), (a3), and
    \item[(c2)] $\mathcal{P}_A^M$ satisfies (b4-b6).
\end{itemize}
Then the following statement holds: For any $T > M$ such that $n=M$ if $T \equiv 0 \mod M$, or $n=(T \mod M)$, otherwise; and any subset $\underbrace{\mathcal{X} \times \dots \times \mathcal{X}}_{n\text{ times}} \subset \mathbb{R}^{nd}$, the joint distribution family contains linearly independent distributions under finite mixtures for $(\x_{1:T-n}, \x_{T-n+1:T}) \in \mathbb{R}^{(T-n)d} \times \underbrace{\mathcal{X} \times \dots \times \mathcal{X}}_{n\text{ times}}$.
\end{theorem}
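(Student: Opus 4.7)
The plan is to induct on $T$, peeling the last $n \in \{1, \ldots, M\}$ conditional transitions off the joint and invoking Lemma~\ref{lemma:absolutely_magical_lemma} (LIPO-M) at each step. Grouping $M$ consecutive transitions into a single block is the key: although each individual conditional $p_{a_t}(\x_t|\x_{t-1:t-M})$ overlaps with many of its neighbours, the block shares exactly one contiguous window $\y = \x_{t-M:t-1}$ (of dimension $Md$) with everything preceding it, mirroring the single-overlap structure exploited in the $M=1$ proof of \citet{balsells-rodas2024on}.

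\textbf{Base case $T = 2M$.} Write $p(\x_{1:2M}) = p_{a_{1:M}}(\x_{1:M}) \prod_{t=M+1}^{2M} p_{a_t}(\x_t|\x_{t-1:t-M})$ and invoke LIPO-M with $\mathcal{U}_I = \Pi_{\A}^M$ (taking the $\x$-slot of the lemma empty and its $\y$-slot equal to $\x_{1:M}$), $\mathcal{V}_J = \otimes_{m=1}^M \mathcal{P}_{\A}^M$ with base component $\mathcal{W}_K = \mathcal{P}_{\A}^M$, and $\z = \x_{M+1:2M} \in \mathcal{X}^M$. Assumption (c1) supplies (a1) and (a3) for $\mathcal{U}_I$; (c2) together with positivity of the densities supplies (b1) and (b4)--(b6) for $\mathcal{W}_K$. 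The conclusion of LIPO-M is linear independence on $\R^{Md} \times \mathcal{X}^M$.

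\textbf{Inductive step.} For $T = aM$ with $a \geq 3$, decompose
\begin{equation*}
p(\x_{1:T}) = \left[ p_{a_{1:M}}(\x_{1:M}) \prod_{t=M+1}^{T-M} p_{a_t}(\x_t|\x_{t-1:t-M}) \right] \cdot \prod_{t=T-M+1}^{T} p_{a_t}(\x_t|\x_{t-1:t-M}),
\end{equation*}
so that the two factors overlap precisely on $\y = \x_{T-2M+1:T-M}$ (dimension $Md$). Apply LIPO-M with $\mathcal{U}_I$ the bracketed family (handled by the inductive hypothesis at $T-M = (a-1)M$) and $\mathcal{V}_J = \otimes_{m=1}^M \mathcal{P}_{\A}^M$, $\z = \x_{T-M+1:T} \in \mathcal{X}^M$. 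For $T = aM + r$ with $0 < r < M$, repeat the split placing the last $r$ transitions in Part 2, so the overlap remains of dimension $Md$ and $\mathcal{V}_J$ becomes a product of $n = r < M$ copies of $\mathcal{P}_{\A}^M$ (still within the scope of LIPO-M for any $1 \leq n \leq M$); the $\mathcal{U}_I$-side is handled by the already-established $T - r = aM$ case, or, at the initial rung $T = M + r$, directly by (c1) since Part 1 reduces to $\Pi_{\A}^M$ itself.

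\textbf{Main obstacle.} The delicate point is verifying (a3) for $\mathcal{U}_I$ at each inductive application: (a3) asks for linear independence on every non-zero measure subset of the overlap $\y \in \R^{Md}$, whereas the inductive hypothesis directly provides it only on product rectangles $\mathcal{X}^M$. A measure-theoretic bridge (Fubini-type, exploiting (b4)'s full-measure clause for $\mathcal{P}_{\A}^M$ to absorb null-set differences between arbitrary non-zero measure subsets of $\R^{Md}$ and the product rectangles sitting inside them) is required to close this gap. Once this bridge is in place, the remainder of the argument reduces to bookkeeping of indices and overlap sizes across inductive steps.
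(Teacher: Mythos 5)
Your decomposition, base case, inductive step, and treatment of the remainder $T\not\equiv 0 \bmod M$ all coincide with the paper's own proof, so structurally you are on the right track. However, the ``main obstacle'' you flag --- and leave unresolved --- does not actually require a Fubini-type bridge, and leaving it as ``required'' is the one real deficiency of your write-up. The resolution is simply to carry the inductive invariant in the form that Lemma~\ref{lemma:absolutely_magical_lemma} actually outputs, rather than in the weaker product-rectangle form in which the theorem is stated: the conclusion of LIPO-M is linear independence on $\mathcal{X}\times\R^{d_y}\times\mathcal{Z}$ for \emph{every} non-zero measure subset $\mathcal{Z}\subset\R^{d_z}$, not merely for rectangles $\mathcal{X}^M$. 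Thus after the rung $T=\tau-M$ you already know the family on $\x_{1:\tau-M}$ is linearly independent on $\R^{(\tau-2M)d}\times\mathcal{Z}$ for every non-zero measure $\mathcal{Z}\subset\R^{Md}$ in the last $M$ coordinates; relabelling $\y=\x_{\tau-2M+1:\tau-M}$ and $\x=\x_{1:\tau-2M}$ with $\mathcal{X}=\R^{(\tau-2M)d}$, this is verbatim assumption (a3) for the $\mathcal{U}_I$ of the next application, with no null-set surgery needed. (The paper compresses this into the single clause ``(a3) holds from the induction hypothesis,'' and its statement of the theorem in terms of $\mathcal{X}\times\dots\times\mathcal{X}$ is strictly weaker than what the induction actually propagates; your instinct that something must be said here is sound, but the fix is to strengthen the invariant, not to add a measure-theoretic lemma.) With that substitution your argument closes and matches the paper's proof; the remaining bookkeeping you defer (index alignment, the $T=M+r$ rung handled directly by (c1)) is carried out the same way in the paper.
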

\begin{proof}
We first prove the statement for any $T>M$ such that $T \equiv 0 \mod M$ by induction as follows.

\underline{$T = 2M$}: The result can be proved using Lemma \ref{lemma:absolutely_magical_lemma} by setting in the proof, $u_i(\y = \x_{1:M}, \x = \x_0) = \pi_{a_{1:M}}(\x_{1:M}), i = a_{1:M}$ and $v_j(\z = \x_{M+1:2M}, \y = \x_{1:M}) = \prod_{m=1}^M p_{a_{M+m}}(\x_{M+m} | \x_{m:M-1+m}), j=a_{M+1:2M}$. We observe that the Lemma holds using assumptions (c1-c2) directly as $v_j(\z = \x_{M+1:2M}, \y = \x_{1:M})$ is a product of $M$ functions that satisfies (b) in Lemma \ref{lemma:absolutely_magical_lemma}.

\underline{$T = aM, a\in\mathbb{Z}^{+}, a> 2$}: Assume the statement holds for the joint distribution family when $T = \tau - M$.
Note that we can write $p_{a_{1:\tau}}(\x_{1:\tau})$ as
\begin{equation}
p_{a_{1:\tau}}(\x_{1:\tau}) = p_{a_{1:\tau-M}}(\x_{1:\tau-M}) \prod_{m=1}^M p_{a_{\tau-M+m}}(\x_{\tau-M+m} | \x_{\tau-2M+m:\tau-M-1+m}).
\end{equation}
Then the statement for $T = \tau$ can be proved using Lemma \ref{lemma:absolutely_magical_lemma} by setting
$u_i(\y = \x_{\tau - 2M+1 : \tau - M}, \x = \x_{1:\tau-2M}) = p_{a_{1:\tau-M}}(\x_{1:\tau-M}), i = a_{1:\tau-M}$, and $v_j(\z = \x_{\tau-M+1:\tau}, \y = \x_{\tau - 2M +1 : \tau - M}) = \prod_{m=1}^M p_{a_{\tau-M+m}}(\x_{\tau-M+m} | \x_{\tau-2M+m:\tau-M-1+m}), j=a_{\tau-M+1:\tau}$. Note that the family spanned with $p_{a_{1:\tau-M}}(\x_{1:\tau-M}), i = a_{1:\tau-M}$ satisfies (a1) from (c1), and (a3) from the induction hypothesis. For the family $\prod_{m=1}^M p_{a_{\tau-M+m}}(\x_{\tau-M+m} | \x_{\tau-2M+m:\tau-M-1+m}), j=a_{\tau-M+1:\tau}$, we have a product of $M$ functions where (b1) and (b4-b6) are satisfied from (c2), which imply (b) in Lemma \ref{lemma:absolutely_magical_lemma}.

Given the above result we proceed to prove the case where $T \not\equiv 0 (\mod M)$.

\underline{$T \not\equiv 0 \mod M$}: Let $n$ be the remainder of $T/M$: $T\equiv n \mod M$. From the previous result we know the statement holds for $T - n$, i.e. when $T$ is a multiple of $M$. We can write $p_{a_{1:T}}(\x_{1:T})$ as follows
\begin{equation}
p_{a_{1:T}}(\x_{1:T}) = p_{a_{1:T-n}}(\x_{1:T-n}) \prod_{m=1}^n p_{a_{T-n+m}}(\x_{T-n+m} | \x_{T-n+m-M:T-n+m-1}).
\end{equation}
We can again prove the statement with Lemma \ref{lemma:absolutely_magical_lemma} by setting $u_i(\y = \x_{T - n-M+1 : T - n}, \x = \x_{1:T-n-M}) = p_{a_{1:T-n}}(\x_{1:T-n}), i = a_{1:T-n}$, and $v_j(\z = \x_{T-n+1:T}, \y = \x_{T - n-M +1 : T - n}) = \prod_{m=1}^n p_{a_{T-n+m}}(\x_{T-n+m} | \x_{T-n+m-M:T-n+m-1}), j=a_{\tau-n+1:\tau}$. The family spanned with $p_{a_{1:T-n}}(\x_{1:T-n}), i = a_{1:T-n}$ satisfies (a1) from (c1). (a3) is satisfied as follows:
\begin{itemize}
    \item For $M < T<2M$, we have $T-n=M$ and (a3) holds directly from (c1).
    \item For $T > 2M$, we have $T-n \equiv 0 \mod M$ and (a3) holds from the previous induction technique.    
\end{itemize}
The family $\prod_{m=1}^n p_{a_{T-n+m}}(\x_{T-n+m} | \x_{T-n+m-M:T-n+m-1})$ is a product of $n$ functions which satisfy (b1) and (b4-b6) from (c2). This implies (b) in Lemma \ref{lemma:absolutely_magical_lemma}.

\end{proof}


\subsection{Linear independence under assumptions (i) and (ii)}\label{app:parametric_assumptions}

Below we explore the parametric conditions in which linear independence of the joint distribution (and thus, identifiability) can be achieved. We start from the following result from \citet{balsells-rodas2024on}.

\begin{proposition}
\label{prop:gaussian_linear_independence}
Functions in $\mathcal{G}_A$ are linearly independent on variables $(\x_t, \dots, \x_{t-M})$ if the unique indexing assumption (Eq.~(\ref{eq:unique_indexing_conditional_gaussian})) holds.
\end{proposition}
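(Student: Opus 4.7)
The plan is to reduce linear independence of $\mathcal{G}_{\B}^M$ to the classical fact that distinct Gaussian densities on $\R^d$ are linearly independent, and then to invoke the unique indexing assumption to propagate this conclusion to the whole conditional family on $\R^{d(M+1)}$. Concretely, I take any finite, pairwise-distinct $b_1,\dots,b_N \in \B$ together with scalars $\gamma_1,\dots,\gamma_N \in \R$ for which $\sum_{i=1}^N \gamma_i\, p_{b_i}(\x_t \mid \x_{t-1},\dots,\x_{t-M}) = 0$ holds on $\R^{d(M+1)}$, and aim to conclude $\gamma_i = 0$ for every $i$.

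The first step freezes $\bm{\alpha} := (\x_{t-1},\dots,\x_{t-M}) \in \R^{dM}$ and reads the identity as a finite linear combination of Gaussians $\mathcal{N}(\x_t \mid \bm{m}(\bm{\alpha}, b_i), \bm{\Sigma}(\bm{\alpha}, b_i))$ in $\x_t$. Since distinct Gaussian densities are linearly independent (which follows from the injectivity of the Fourier transform on the family of exponentials of affine-plus-quadratic forms, or from Yakowitz--Spragins applied to the Gaussian family), I partition the index set $\{1,\dots,N\}$ into classes $G_1(\bm{\alpha}),\dots,G_{K(\bm{\alpha})}(\bm{\alpha})$ determined by the value of the parameter tuple $(\bm{m}(\bm{\alpha},b_i), \bm{\Sigma}(\bm{\alpha},b_i))$, and read off the relation $\sum_{i \in G_k(\bm{\alpha})} \gamma_i = 0$ for every class $k$ and every $\bm{\alpha}$.

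The second step, and the main obstacle, is to produce a single $\bm{\alpha}^* \in \R^{dM}$ at which all $N$ parameter tuples are pairwise distinct; at such an $\bm{\alpha}^*$ every class $G_k(\bm{\alpha}^*)$ is a singleton and the linear relations above directly force $\gamma_i = 0$. For each pair $i<j$, let $A_{ij} := \{\bm{\alpha} \in \R^{dM} : (\bm{m}(\bm{\alpha}, b_i), \bm{\Sigma}(\bm{\alpha}, b_i)) = (\bm{m}(\bm{\alpha}, b_j), \bm{\Sigma}(\bm{\alpha}, b_j))\}$. Unique indexing (Eq.~(\ref{eq:unique_indexing_conditional_gaussian})) guarantees that each $A_{ij}$ is a proper subset of $\R^{dM}$, but a finite union of proper subsets is not automatically proper in general, so additional structure of the parametrisation is needed to close the argument. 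In the setting of Theorem \ref{thm:identifiability_main} the moments $\bm{m}(\cdot, b)$ and $\bm{\Sigma}(\cdot, b)$ are analytic, which forces each $A_{ij}$ to be either all of $\R^{dM}$ (excluded by unique indexing) or a real-analytic variety of Lebesgue measure zero; the finite union $\bigcup_{i<j} A_{ij}$ then has measure zero and its complement in $\R^{dM}$ is of full measure, so any $\bm{\alpha}^*$ in that complement completes the proof.
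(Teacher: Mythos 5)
Your first step is sound: freezing the conditioning block $\bm{\alpha}=(\x_{t-1},\dots,\x_{t-M})$ and invoking linear independence of distinct Gaussian densities in $\x_t$ to obtain the class-sum constraints $\sum_{i\in G_k(\bm{\alpha})}\gamma_i=0$ is the right reduction. Note that the paper gives no proof of this proposition at all --- it is imported from \citet{balsells-rodas2024on} --- so the relevant comparison is with the surrounding machinery, namely Proposition \ref{prop:intersection_mom} (the zero-measure intersection of moments, assumption (d3) of Theorem \ref{thm:linear_independence_nonlinear_gaussian}) and the appeal to \citet{mityagin2015zero} at the end of Appendix \ref{app:parametric_assumptions}. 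Your second step reconstructs exactly that machinery: your sets $A_{ij}$ are the sets $\mathcal{X}_{a,a'}$ of Proposition \ref{prop:intersection_mom}, and your analyticity-implies-measure-zero argument is the same one the paper uses to pass from assumption (ii) of Theorem \ref{thm:identifiability_main} to (d3). The difference is only organisational: the paper keeps the measure-theoretic separation outside this proposition, as a separate hypothesis, whereas you fold it in.

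The substantive point is that you have proved a different statement from the one asserted, and your hesitation at the second step is justified: the proposition as literally stated (unique indexing alone, no analyticity) is false. Concretely, take $d=M=1$, a common constant variance, and means $\bm{m}(\alpha,b_1)=0$, $\bm{m}(\alpha,b_2)=\max(\alpha,0)$, $\bm{m}(\alpha,b_3)=\alpha$, $\bm{m}(\alpha,b_4)=\min(\alpha,0)$. Unique indexing holds, since each pair of means differs somewhere; yet $p_{b_1}-p_{b_2}+p_{b_3}-p_{b_4}\equiv 0$ on all of $\R^2$, because for $\alpha\le 0$ the coincidence classes are $\{1,2\},\{3,4\}$ and for $\alpha\ge 0$ they are $\{1,4\},\{2,3\}$, and in both cases every class-sum of coefficients vanishes. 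So no argument can close the gap you identified from unique indexing alone; your proof should be read as establishing the proposition under unique indexing \emph{plus} analyticity (equivalently, plus (d3)), which is the only regime in which the paper ever uses it. The paper's ``Non-zero'' ablation in Figure \ref{fig:model_assumptions}, with piecewise-analytic means agreeing on a set of positive measure, is essentially an empirical probe of this same failure mode.
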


The transition and initial Gaussian distribution families defined in Eqs. (\ref{eq:gaussian_transition_family}) and (\ref{eq:gaussian_initial_family}) respectively align with assumptions (a) and (b) as follows. 

\begin{proposition}
\label{prop:conditional_gaussian_properties}
The conditional Gaussian distribution family $\mathcal{G}_{\A}$ (Eq.~(\ref{eq:gaussian_transition_family})), under the unique indexing assumption (Eq.~(\ref{eq:unique_indexing_conditional_gaussian})), satisfies assumptions (b1) and (b5) in Lemmas \ref{lemma:extension_b4}, \ref{lemma:extension_b5}, and \ref{lemma:absolutely_magical_lemma}, if we define $\mathcal{V}_J := \mathcal{G}_{\A}, \z: = \x_t$ and $(\y_M,\dots,\y_1) := (\x_{t-1},\dots,\x_{t-M})$.
\end{proposition}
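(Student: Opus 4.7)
I would verify (b1) and (b5) separately; (b1) is immediate while (b5) is the substantive part. For (b1), the density $\mathcal{N}(\z \mid \bm{m}(\cdot, b), \bm{\Sigma}(\cdot, b))$ is strictly positive on $\R^d$ because $\bm{\Sigma}$ is positive definite (implicit in the family definition of $\mathcal{G}_{\A}^M$), so $v_j(\z, \y_M, \dots, \y_1) > 0$ everywhere.

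For (b5), fix $(\bm{\beta}_m, \dots, \bm{\beta}_1) \in \R^{md}$, non-zero measure sets $\mathcal{Z} \subset \R^d$ and $\mathcal{Y} \subset \R^{(M-m)d}$, a finite $J_0 \subset \A$, and suppose the collection is linearly dependent on $\mathcal{Z} \times \mathcal{Y}$: there exist not-all-zero $\{\gamma_j\}_{j \in J_0}$ with
\begin{equation*}
\sum_{j \in J_0} \gamma_j\, \mathcal{N}\!\left(\z \,\middle|\, \bm{m}(\y_M, \dots, \y_{m+1}, \bm{\beta}_m, \dots, \bm{\beta}_1, j),\, \bm{\Sigma}(\y_M, \dots, \y_{m+1}, \bm{\beta}_m, \dots, \bm{\beta}_1, j)\right) = 0
\end{equation*}
for all $(\z, \y_M, \dots, \y_{m+1}) \in \mathcal{Z} \times \mathcal{Y}$. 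Since Gaussian densities are real-analytic in $\z$, for each fixed $(\y_M, \dots, \y_{m+1}) \in \mathcal{Y}$ the identity extends from $\mathcal{Z}$ (positive measure) to all $\z \in \R^d$. By identifiability of finite Gaussian mixtures \citep{yakowitz1968identifiability}, such pointwise linear dependence forces at least one pair $j \neq j' \in J_0$ whose Gaussian mean and covariance both coincide at that $(\y_M, \dots, \y_{m+1})$.

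The last step is to promote this pointwise agreement to a single global pair. For each ordered pair $j \neq j' \in J_0$, define
\begin{equation*}
E_{j,j'} := \Big\{\y \in \R^{(M-m)d} \,\Big|\, \bm{m}(\y, \bm{\beta}_m, \dots, \bm{\beta}_1, j) = \bm{m}(\y, \bm{\beta}_m, \dots, \bm{\beta}_1, j'),\ \bm{\Sigma}(\y, \bm{\beta}_m, \dots, \bm{\beta}_1, j) = \bm{\Sigma}(\y, \bm{\beta}_m, \dots, \bm{\beta}_1, j')\Big\}.
\end{equation*}
Under the analyticity of $\bm{m}$ and $\bm{\Sigma}$ (assumption (ii) of Theorem \ref{thm:identifiability_main}), each $E_{j,j'}$ is a finite intersection of zero sets of analytic functions, hence either has Lebesgue measure zero or equals $\R^{(M-m)d}$. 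The previous step gives $\mathcal{Y} \subset \bigcup_{j \neq j'} E_{j,j'}$; since $|J_0| < +\infty$ and $\mathcal{Y}$ has positive measure, at least one $E_{j_0, j_0'}$ has positive measure and therefore equals $\R^{(M-m)d}$. Combined with the fixed values $\bm{\beta}_m, \dots, \bm{\beta}_1$, this yields $v_{j_0} = v_{j_0'}$ on all of $\R^{(M-m+1)d}$, as (b5) requires.

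The main obstacle is the analytic-extension step at the end: moving from ``for each $\y \in \mathcal{Y}$ some pair matches'' to ``a single fixed pair matches for all $\y$''. This is where analyticity of $\bm{m}$ and $\bm{\Sigma}$ is essential—without it, the agreement sets $E_{j,j'}$ could have arbitrary measures and the finite pigeonhole selection would fail. Analyticity is invoked twice: once in $\z$ (Gaussians analytic in $\z$) to upgrade zero on $\mathcal{Z}$ to zero on $\R^d$, and once in $\y$ to promote positive-measure agreement of $(\bm{m}, \bm{\Sigma})$ to global agreement.
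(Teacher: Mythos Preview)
Your proof is correct and considerably more careful than the paper's own treatment, which is just the one-line remark (immediately after Proposition~\ref{prop:marginal_gaussian_properties}) that ``from Prop.~\ref{prop:gaussian_linear_independence} linear dependence occurs only if the unique indexing assumption does not hold.'' Both arguments rest on the same idea---reduce to identifiability of finite Gaussian mixtures after freezing the $\bm{\beta}$'s---but you fill in a step the paper leaves implicit.

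The one substantive divergence is that you invoke analyticity of $\bm{m}$ and $\bm{\Sigma}$ (assumption (ii) of Theorem~\ref{thm:identifiability_main}) for the final pigeonhole step, whereas the proposition's hypotheses list only unique indexing, and the paper's remark likewise cites only Proposition~\ref{prop:gaussian_linear_independence}. Your use of analyticity is not gratuitous: Proposition~\ref{prop:gaussian_linear_independence} delivers linear independence on the \emph{full} variable space, but (b5) starts from linear dependence on a positive-measure \emph{subset} $\mathcal{Z}\times\mathcal{Y}$. Bridging that gap---equivalently, upgrading ``at each $\y\in\mathcal{Y}$ some pair of moments coincides'' to ``a single fixed pair coincides for all $\y\in\mathbb{R}^{(M-m)d}$''---requires exactly the kind of control on the agreement sets $E_{j,j'}$ that analyticity (or the zero-measure intersection condition (d3)) provides. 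Indeed, with merely continuous non-analytic moments one can build four conditional Gaussians and a nontrivial relation $v_1-v_2-v_3+v_4\equiv 0$ on a positive-measure $\mathcal{Y}$ with no globally equal pair, so (b5) can fail under unique indexing and continuity alone. In the overall pipeline of Theorem~\ref{thm:linear_independence_nonlinear_gaussian} and Theorem~\ref{thm:identifiability_main} analyticity is assumed anyway, so nothing downstream is affected; your write-up simply makes the dependency explicit where the paper's sketch does not.
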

\begin{proposition}
\label{prop:marginal_gaussian_properties}
The initial Gaussian distribution family $\mathcal{I}_{\A}$ (Eq.~(\ref{eq:gaussian_initial_family}), under the unique indexing assumption (Eq.~(\ref{eq:unique_indexing_marginal_gaussian}), satisfies assumptions (a1), (a3) in Lemma \ref{lemma:absolutely_magical_lemma}, if we define $\mathcal{U}_I := \mathcal{I}_{\A}, \y: = \x_1$ and $\x = \mathcal{X} = \emptyset$.
\end{proposition}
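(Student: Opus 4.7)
Property (a1) will follow immediately from the fact that Gaussian densities are strictly positive everywhere on $\mathbb{R}^{Md}$, so $p_a(\x_{1:M}) > 0$ for every $a \in \A$ and every $\x_{1:M} \in \mathbb{R}^{Md}$. Since $\mathcal{X} = \emptyset$ in the stated reduction, nothing further is required.

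For (a3) the plan is a two-step argument: first show linear independence (under finite mixtures) of $\mathcal{I}_{\A}$ on the whole space $\mathbb{R}^{Md}$, and then lift this to any non-zero measure subset $\mathcal{Y} \subset \mathbb{R}^{Md}$ via an analyticity argument. For the first step, fix a finite subfamily $\{p_{a_i}\}_{i=1}^L \subset \mathcal{I}_{\A}$ with pairwise distinct $a_i \in \A$. Under the unique indexing assumption (Eq.~(\ref{eq:unique_indexing_marginal_gaussian})), the parameter pairs $(\bm{\mu}(a_i), \bm{\Sigma}_1(a_i))$ are pairwise distinct; classical finite Gaussian mixture identifiability (the unconditioned specialisation of the argument behind Proposition \ref{prop:gaussian_linear_independence}, equivalent to the result of \citet{yakowitz1968identifiability}) then implies that $\sum_{i} \gamma_i p_{a_i}(\x_{1:M}) \equiv 0$ on $\mathbb{R}^{Md}$ forces $\gamma_i = 0$ for all $i$.

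For the second step, I would use that each density $\mathcal{N}(\x_{1:M}\,|\,\bm{\mu}(a_i), \bm{\Sigma}_1(a_i))$ is real-analytic on the connected domain $\mathbb{R}^{Md}$, so any finite linear combination $g(\x_{1:M}) := \sum_{i=1}^L \gamma_i p_{a_i}(\x_{1:M})$ is itself real-analytic. By the identity theorem for real-analytic functions on a connected open set, the zero set of a nonzero such function has Lebesgue measure zero; hence if $g$ vanishes on a positive-measure set $\mathcal{Y}$ it must vanish identically on $\mathbb{R}^{Md}$. Combining this with the first step forces $\gamma_i = 0$ for all $i$, yielding the desired linear independence of $\mathcal{I}_{\A}$ on $\mathcal{Y}$.

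The main technical point is the multivariate identity theorem lifting a vanishing relation on a positive-measure subset to the whole space; once that is in place, the conclusion reduces to well-established finite Gaussian mixture identifiability, making this by far the most routine of the four propositions backing the proof sketch of Theorem \ref{thm:identifiability_main}.
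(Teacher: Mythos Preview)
Your argument is correct. The paper itself does not supply an explicit proof of this proposition; it is stated as a routine fact and then used in the proof of Theorem \ref{thm:linear_independence_nonlinear_gaussian}. Your two ingredients are exactly the ones implicit in the paper's framework: (i) linear independence of finitely many distinct Gaussians on the full space follows from the Yakowitz--Spragins result (the paper's Proposition \ref{prop:gaussian_linear_independence} is the conditional analogue), and (ii) the lift from the full space to any positive-measure subset via analyticity is precisely the tool the paper invokes elsewhere, citing \citet{mityagin2015zero} in the conclusion of the proof of Theorem \ref{thm:identifiability_main}. So your proof is in the same spirit and simply spells out what the paper leaves implicit.
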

To see why (b5) holds under nonlinear Gaussians, we note from Prop. \ref{prop:gaussian_linear_independence} that linear dependence occurs only if the unique indexing assumption does not hold. Therefore, we can fix a subset of the $(\x_{t-M}, \dots, \x_{t-1})$ variables, such that the resulting mean and covariance functions violate the unique indexing assumption, which would imply linear dependence on the resulting function family. To verify (b4), we require the following zero-measure intersection of moments result.
\begin{proposition}\label{prop:intersection_mom}
Assume Gaussian family transitions $\mathcal{G}_A$ under unique indexing defined by Eq. (\ref{eq:unique_indexing_conditional_gaussian}), with zero-measure intersection of moments:  For $A$ such that $|A| < +\infty$, and any $a,a'\in A$ with $a \neq a'$,  the set $\mathcal{X}_{a,a'}:=\{ (\x_{t-1},\dots, \x_{t-M}) \in \mathbb{R}^{dM} | \bm{m}(\x_{t-1},\dots,\x_{t-M}, a) = \bm{m}(\x_{t-1},\dots,\x_{t-M}, a'), \bm{\Sigma}(\x_{t-1},\dots,\x_{t-M}, a) = \bm{\Sigma}(\x_{t-1},\dots,\x_{t-M}, a') \}$ has zero measure. Then (b4) holds if $V_J:=G_A$, $\z:=\x_t$, and $(\y_1,\dots,\y_M):=(\x_{t-1},\dots,\x_{t-M})$.
\end{proposition}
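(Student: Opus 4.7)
\emph{Proof plan.} The plan is to construct a full-measure ``good'' set $\mathcal{Y}\subset\mathbb{R}^{dM}$ of conditioning values on which distinct indices $a\neq a'$ are forced to produce Gaussians with distinct moments, and then reduce the required linear independence on $\mathcal{Z}\times\mathcal{Y}'$ to the classical linear independence of finitely many distinct Gaussian densities in $\x_t$ by specialising at a single point of $\mathcal{Y}'$.

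First, I would take
$$\mathcal{Y} := \mathbb{R}^{dM} \setminus \bigcup_{\substack{a, a' \in A \\ a \neq a'}} \mathcal{X}_{a, a'}.$$
Because $|A|<+\infty$ this is a finite union, and each $\mathcal{X}_{a,a'}$ has measure zero by hypothesis, so $\mu(\mathcal{Y})=\mu(\mathbb{R}^{dM})$, which delivers the full-measure requirement in (b4). By the very definition of $\mathcal{X}_{a,a'}$, for every $(\x_{t-1},\dots,\x_{t-M})\in\mathcal{Y}$ and every pair $a\neq a'\in A$ either the means or the covariances disagree at that point, so the conditional Gaussians $p_a(\,\cdot\,|\,\x_{t-1:t-M})$ and $p_{a'}(\,\cdot\,|\,\x_{t-1:t-M})$ are distinct densities on $\mathbb{R}^d$.

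Next, I would verify linear independence on $\mathcal{Z}\times\mathcal{Y}'$ for any non-zero measure $\mathcal{Y}'\subset\mathcal{Y}$, $\mathcal{Z}\subset\mathbb{R}^d$, and any finite $A_0\subset A$. Assuming a vanishing linear combination $\sum_{a\in A_0}\gamma_a\,p_a(\x_t\,|\,\x_{t-1:t-M})=0$ on $\mathcal{Z}\times\mathcal{Y}'$, I would pick a single point $(\x_{t-1}^*,\dots,\x_{t-M}^*)\in\mathcal{Y}'$ (non-empty since $\mathcal{Y}'$ has positive measure) and specialise. The hypothesis collapses to a zero linear combination of pairwise distinct Gaussians in $\x_t$ on the non-zero measure set $\mathcal{Z}$; real-analyticity of the Gaussian density in $\x_t$ extends this to all of $\mathbb{R}^d$, whereupon the classical linear independence of distinct Gaussian densities (Proposition~\ref{prop:gaussian_linear_independence} applied at the fixed conditioning point, or equivalently Yakowitz--Spragins identifiability of finite Gaussian mixtures) forces $\gamma_a=0$ for every $a\in A_0$.

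The only genuine subtlety I expect is the analytic-continuation step from $\mathcal{Z}$ to $\mathbb{R}^d$, which is precisely where the positive-measure hypothesis on $\mathcal{Z}$ is consumed: the zero set of a non-trivial real-analytic function on $\mathbb{R}^d$ has Lebesgue measure zero, so vanishing on a positive-measure set entails vanishing everywhere. All other ingredients are one-line facts: a finite union of null sets is null, unique indexing at a single point yields pairwise distinct Gaussian parameters, and distinct Gaussian parameters yield a linearly independent family.
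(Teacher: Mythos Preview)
Your proposal is correct and follows essentially the same approach as the paper: both construct $\mathcal{Y}:=\mathbb{R}^{dM}\setminus\bigcup_{a\neq a'}\mathcal{X}_{a,a'}$, observe it has full measure by the zero-measure hypothesis and finiteness of $A$, and then invoke linear independence of distinct Gaussians. The paper's proof is terser---it simply asserts that unique indexing on $\mathcal{Y}'$ together with ``Gaussian identifiability'' yields the conclusion---whereas you spell out the specialization-to-a-point and analytic-continuation argument that justifies linear independence on the restricted domain $\mathcal{Z}\times\mathcal{Y}'$; this extra detail is a genuine clarification rather than a different route.
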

\begin{proof}
    Define $\mathcal{V}_J := \mathcal{G}_{A}$, $\z:=\x_t$, and $(\y_1,\dots,\y_M):=(\x_{t-1},\dots,\x_{t-M})$ from (b4). We set $\mathcal{Y}:= \R^{dM}\setminus \bigcup_{a\neq a' \in A}\mathcal{X}_{a,a'}$, where we have $\mu(\mathcal{Y}) = \mu(\times_{m=1}^M \R^d)$. Thus, we are guaranteed unique indexing for any $\mathcal{Y}'\subset\mathcal{Y}$, and from Gaussian identifiability $\mathcal{G}_A$ contains linear independent functions under finite mixtures on $(\z, \y_1,\dots,\y_M)\in (\mathcal{Z}\times\mathcal{Y}')\subset (\R^d \times\mathcal{Y}$).
\end{proof}

A similar assumption is presented in \citet{balsells-rodas2024on}, where it is assumed within a certain non-zero measure set. Now we restrict it to hold in a full measure set of $\R^{dM}$. Finally, we present linear independence of the joint distribution in the parametric case.

\begin{theorem}
\label{thm:linear_independence_nonlinear_gaussian}
Define the following joint distribution family under the non-linear Gaussian model
\begin{equation}\label{eq:joint_distrib_fam}
\mathcal{P}_{\A}^T = \Bigg\{ p_{a_{1:T}}(\x_{1:T}) = p_{a_{1:M}}(\x_{1:M})\prod_{t=M+1}^T p_{a_t}(\x_t | \x_{t-1},\dots, \x_{t-M}), \quad p_{a_{1:M}} \in \mathcal{I}^M_{\A}, \quad p_{a_t} \in \mathcal{G}^M_{\A},\quad t = 2, ..., T \Bigg\},
\end{equation}
with $\mathcal{G}^M_{\A}$, $\mathcal{I}^M_{\A}$ defined by Eqs. (\ref{eq:gaussian_transition_family}), (\ref{eq:gaussian_initial_family}) respectively. Assume:
\begin{itemize}
    \item[(d1)] Unique indexing for $\mathcal{G}_{\A}$ and $\mathcal{I}_{\A}$: Eqs.~(\ref{eq:unique_indexing_conditional_gaussian}), ~(\ref{eq:unique_indexing_marginal_gaussian}) hold;
    \item[(d2)] The functions in $\mathcal{G}_{\A}$ are continuous with respect to $(\x_{t-1},\dots,\x_M) \in \mathbb{R}^dM$;
    \item[(d3)] Zero-measure intersection of moments:  For $A$ such that $|A| < +\infty$, and any $a,a'\in A$ with $a \neq a'$,  the set $\mathcal{X}_{a,a'}:=\{ (\x_{t-1},\dots, \x_{t-M}) \in \mathbb{R}^{dM} | \bm{m}(\x_{t-1},\dots,\x_{t-M}, a) = \bm{m}(\x_{t-1},\dots,\x_{t-M}, a'), \bm{\Sigma}(\x_{t-1},\dots,\x_{t-M}, a) = \bm{\Sigma}(\x_{t-1},\dots,\x_{t-M}, a') \}$ has zero measure.
\end{itemize}
Then, the joint distribution family contains linearly independent distributions under finite mixtures for $(\x_{1:T-n}, \x_{T-n+1}, \dots,\x_T) \in \mathbb{R}^{(T-n)d} \times (\times_{m=1}^n \mathbb{R}^d)$, where $n=M$ if $T\mod M =0$, or $n=T\mod n$ otherwise.
\end{theorem}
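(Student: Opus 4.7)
The plan is to reduce the statement to Theorem \ref{thm:linear_independence_joint_non_parametric}, whose hypotheses (c1) on $\Pi^M_{\A}$ and (c2) on $\mathcal{P}^M_{\A}$ are exactly what the chain of propositions in Section \ref{app:parametric_assumptions} are designed to verify for the Gaussian parametrisations. So the whole argument is really a bookkeeping exercise: for each non-parametric condition (a1), (a3), (b1), (b4), (b5), (b6) I need to point to the appropriate lemma or proposition and plug in the right identifications of variables.

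First I would instantiate the initial family with $\mathcal{U}_I := \mathcal{I}_{\A}^M$, reading $\y := \x_{1:M}$ and $\x := \emptyset$. Under (d1), Proposition \ref{prop:marginal_gaussian_properties} delivers (a1) (Gaussian densities are strictly positive) and (a3) (linear independence on any non-zero measure subset), which together give (c1). Next I would instantiate the transition family with $\mathcal{V}_J := \mathcal{G}_{\A}^M$, reading $\z := \x_t$ and $(\y_M,\dots,\y_1) := (\x_{t-1},\dots,\x_{t-M})$. Property (b1) is again positivity of Gaussian densities; (b5) follows from Proposition \ref{prop:conditional_gaussian_properties}, since by Proposition \ref{prop:gaussian_linear_independence} linear dependence of a finite Gaussian subfamily on $\x_t$ can only come from a violation of the unique indexing assumption at the fixed slice of the conditioned variables, which forces the offending densities to coincide as functions of $\x_t$; and (b6) is exactly (d2).

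The one step that requires more than a citation is (b4), and this is where I would rely on (d3). Setting $\mathcal{Y} := \mathbb{R}^{dM} \setminus \bigcup_{a\neq a' \in \A_0} \mathcal{X}_{a,a'}$ for any finite subset $\A_0 \subset \A$ relevant to a putative linear combination, the finite union of zero-measure sets is zero-measure, so $\mathcal{Y}$ has full measure in $\mathbb{R}^{dM}$. For every non-zero measure $\mathcal{Y}' \subset \mathcal{Y}$ the unique indexing assumption (d1) is preserved on $\mathcal{Y}'$, so Proposition \ref{prop:gaussian_linear_independence} yields linear independence of $\{p_b(\x_t\mid\x_{t-1:t-M})\}_{b \in \A_0}$ on $\mathcal{Z} \times \mathcal{Y}'$ under finite mixtures, which is precisely (b4). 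This is the part I expect to be the main obstacle, because it is the only place where (d3) is invoked non-trivially and where one has to commit to the specific full-measure exceptional set $\mathcal{Y}$.

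With (c1) established from $\mathcal{I}^M_{\A}$ and (c2) established from $\mathcal{G}^M_{\A}$, I would then apply Theorem \ref{thm:linear_independence_joint_non_parametric} directly to the family in Equation (\ref{eq:joint_distrib_fam}). The conclusion of that theorem matches the conclusion here verbatim, with the same case split $n = M$ when $T \equiv 0 \pmod{M}$ and $n = T \bmod M$ otherwise, and the same product domain $\mathbb{R}^{(T-n)d} \times (\times_{m=1}^n \mathbb{R}^d)$ for the last $n$ coordinates. This completes the reduction; no further calculation is needed beyond assembling the verifications above.
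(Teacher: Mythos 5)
Your proposal is correct and follows essentially the same route as the paper: reduce to Theorem \ref{thm:linear_independence_joint_non_parametric} by verifying (a1), (a3) for $\mathcal{I}^M_{\A}$ via Proposition \ref{prop:marginal_gaussian_properties}, and (b1), (b5), (b6) for $\mathcal{G}^M_{\A}$ via Propositions \ref{prop:conditional_gaussian_properties} and \ref{prop:gaussian_linear_independence}, with (b4) obtained from (d3) by taking $\mathcal{Y}$ to be the complement of the finite union of the zero-measure intersection sets $\mathcal{X}_{a,a'}$ --- which is exactly the content of Proposition \ref{prop:intersection_mom}. You also correctly single out (b4) as the only step where (d3) enters non-trivially, matching the paper's own emphasis.
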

\begin{proof}
Note that (d2) and (b6) are equivalent. Assumptions (a1), (a3), (b1), and (b5) are satisfied due to Propositions \ref{prop:conditional_gaussian_properties}, \ref{prop:marginal_gaussian_properties}. Assumption (b4) holds due to assumption (d3) via Prop \ref{prop:intersection_mom}. Then, the statement holds by Theorem \ref{thm:linear_independence_joint_non_parametric}.
\end{proof}

\subsection{Conclusion}

Below we conclude the proof of Theorem \ref{thm:identifiability_main}
\begin{proof}
    Linear independence under finite mixtures of the joint distribution family $\jointfamily$ holds from Theorem \ref{thm:linear_independence_nonlinear_gaussian}, and the statement is proved by \ref{thm:identifiability_msm}. Below we clarify the alignment from assumptions (i-ii) to (d1-d3). (d1) and (i) are equivalent. (d2) is satisfied as analytic functions are $\mathcal{C}^{\infty}$, and therefore continous. From \citet{mityagin2015zero}, we know the zero set of an analytic function is zero-measured. Therefore, intersections between on means or covariances for any $a,a'\in\mathcal{A}$, with $a\neq a'$ are also zero-measured, which implies (d3).
\end{proof}

\section{Proof of Corollary \ref{cor:causality_identifiability}}\label{app:causality_proof}

\begin{proof}
    Assume two MSMs $p(\x_{1:T}),\tilde{p}(\x_{1:T})$ with corresponding regime-dependent graphs $\mathbf{G}_{1:K}$, $\tilde{\mathbf{G}}_{1:\tilde{K}}$. Given assumptions (i-ii) and Theorem \ref{thm:identifiability_main}, we have identifiability up to permutations from Def. \ref{def:identifiability}. We focus on condition 4 where for each $1 \leq i \leq C$, there is some $1\leq j \leq \tilde{C}$ such that:
    \begin{equation}
        p_{b^i_t}(\x_t | \x_{t-1:t-M})=p_{\tilde{b}^j_t}(\x_t | \x_{t-1:t-M}), \quad\forall(\x_t,\dots, \x_{t-M}) \in \R^{d(M+1)}.
    \end{equation}
    Then, at each time step $t$ there exists a permutation $\sigma$ such that for each $k\in \{1,\dots,K\}$, $p(\x_t | \x_{t-1:t-M}, s_t=k)=\tilde{p}(\x_t | \x_{t-1:t-M}, s_t=\sigma(k)), \quad\forall(\x_t,\dots, \x_{t-M}) \in \R^{d(M+1)}$. We know from condition 2 that $\sigma$ is constant for $t > M$. Since we know the distributions are Gaussian, from Gaussian identifiability \citep{yakowitz1968identifiability} we have
    \begin{equation}
        \bm{m}(\x_t,\dots, \x_{t-M}, k) = \tilde{\bm{m}}(\x_t,\dots, \x_{t-M}, \sigma(k)) \quad \bm{\Sigma}(k)=\tilde{\bm{\Sigma}}(\sigma(k)), \quad k\in\{1,\dots,K\}.
    \end{equation}
    Now, given that the functions $\bm{m}(\cdot,k), k\in\B$ are analytic, the Jacobian will preserve similar permutation equivalences. Wlog. we fix $\tau$, where $1 \leq \tau \leq M$ and compute the Jacobian w.r.t. $\x_{t-\tau}$, denoted as $J_{\bm{m}(\cdot,k), \tau}$. We will have the following equivalence
    \begin{multline}
        J_{\bm{m}(\cdot,k), \tau} = \begin{pmatrix}
            \frac{\partial \bm{m}^{(1)}(\x_{t-1}, \dots, \x_{t-M},k) }{\partial \x^{(1)}_{t-\tau}} & \dots & \frac{\partial \bm{m}^{(1)}(\x_{t-1}, \dots, \x_{t-M},k) }{\partial \x^{(d)}_{t-\tau}} \\
            \vdots \\
            \frac{\partial \bm{m}^{(d)}(\x_{t-1}, \dots, \x_{t-M},k) }{\partial \x^{(1)}_{t-\tau}} & \dots & \frac{\partial \bm{m}^{(d)}(\x_{t-1}, \dots, \x_{t-M},k) }{\partial \x^{(d)}_{t-\tau}}
        \end{pmatrix} = \\ \begin{pmatrix}
            \frac{\partial \tilde{\bm{m}}^{(1)}(\x_{t-1}, \dots, \x_{t-M},\sigma(k)) }{\partial \x^{(1)}_{t-\tau}} & \dots & \frac{\partial \tilde{\bm{m}}^{(1)}(\x_{t-1}, \dots, \x_{t-M},\sigma(k)) }{\partial \x^{(d)}_{t-\tau}} \\
            \vdots \\
            \frac{\partial \tilde{\bm{m}}^{(d)}(\x_{t-1}, \dots, \x_{t-M},\sigma(k)) }{\partial \x^{(1)}_{t-\tau}} & \dots & \frac{\partial \tilde{\bm{m}}^{(d)}(\x_{t-1}, \dots, \x_{t-M},\sigma(k)) }{\partial \x^{(d)}_{t-\tau}}
        \end{pmatrix}  = J_{\bm{m}(\cdot,\sigma(k)), \tau}
    \end{multline}
    where $\bm{m}^{(i)}(\x_{t-1}, \dots, \x_{t-M},k)$ denotes the $i$-th dimension of $\bm{m}(\x_{t-1}, \dots, \x_{t-M},k)$. Given the minimality assumption (iii), for each $k\in \{1,\dots,K\}$, and $1 \leq \tau \leq M$, we have $\x^{(i)} \in \mathbf{Pa}^{(j)}_{k}(\tau)$ if $\frac{\partial \bm{m}^{(i)}(\x_{t-1}, \dots, \x_{t-M},k) }{\partial \x^{(j)}_{t-\tau}} \neq 0$. This implies we can retrieve $\mathbf{G}_{1:K}$ and $\tilde{\mathbf{G}}_{1:\tilde{K}}$ using the Jacobians; and from the above equation we have $\mathbf{G}_{k}=\tilde{\mathbf{G}}_{\sigma(k)}$, for each $k\in \{1,\dots,K\}$.
\end{proof}

\section{Experiment Details}\label{app:experiment_details}

\subsection{Data Generation}

As mentioned earlier, we sample $N=10000$ sequences of length $T=200$. We assume a first-order Markov chain with $K$ states. The initial distribution assigns equal probability across states and the trainsition matrix maintains the same state with $90\%$ probability, or transitions to the next state with probability $10\%$. The initial Gaussian components are sampled from $\mathcal{N}(0,0.7^2\mathbf{I})$. The covariance matrices of the Gaussian transitions are fixed: $0.05^2\mathbf{I}$. The assumptions we explore use the following networks for the mean transitions:
\begin{itemize}
    \item ReLU: Random ReLU networks;
    \item Non-zero: Piece-wise analytic functions with cosine activations, where we force the same function across states if the $L_2$ norm of the conditioned trajectory at time $t$ $(\x_{t-1}, \dots, \x_{t-M})$ is between 3 and 5; and
    \item Zero: Random networks with cosine activations.
\end{itemize}
In the main text, we already indicate the use of locally connected networks \citep{zheng2018dags} where the network dependencies follow a regime-dependent causal structure sampled with a certain sparsity ratio. The networks consist of two-layer MLPs with 16 hidden units. 

\subsection{Training Specifications}

All the experiments are implemented in Pytorch \citep{paszke2017automatic} and performed on NVIDIA RTX 2080Ti GPUs. We use exact batched M-step updates for the discrete distribution parameters, and set a batch size of $500$ (all samples for ECoG data). The batch size is reduced when increasing the number of lags to fit GPU memory requirements. We use Adam optimiser \citep{kingma2014adam} with learning rate $7\cdot 10^{-3}$, and train for a maximum of $100$ epochs ($1000$ for ECoG). We decrease the learning rate by a factor of $0.5$ when likelihood plateaus up to 2 times. Similar to related approaches \citep{halva2020hidden}, we use random restarts to achieve better parameter estimates. The estimated transition means follow the ground-truth structure on the synthetic experiments, and we use $2$-layer MLPs with $32$ hidden units on the ECoG data. We set the covariance matrices independent of the variables.

\subsection{Evaluation}

\paragraph{L2 Distance} Let $d(f,g)$ denote the $L_2$ distance of functions defined as $f:\R^d \rightarrow \R^d$. We compute the $L_2$ distance approximately as follows
\begin{equation}
    d(f, g) := \int_{\x\in\R^d}\sqrt{\big|\big|f(\x) - g(\x) \big|\big|^2}d\x \approx \frac{1}{I}\sum_{i=1}^{I}\sqrt{\big|\big|f\left(\x^{(i)}\right) - g\left(\x^{(i)}\right) \big|\big|^2},
\end{equation}
where in our case, $\x^{(i)}$ denote samples from a held-out dataset with size $1000$. Given we have identifiability up to permutations, we first compute the averaged distances across states $K$ considering all the possible permutations. Then, we select the one which gives lowest error resulting in the following expression:
\begin{equation}\label{eq:error_equation}
\text{err} := \min_{\textbf{k}=\text{perm}\left(\{1,\dots,K\}\right)} \frac{1}{K}\sum_{i=1}^K d(\bm{m}(\cdot,i), \tilde{\bm{m}}(\cdot,k_i)),
\end{equation}
where the samples of the $L_2$ metric are $dM$-dimensional, each of them consisting on $M$ consecutive samples from a $M$th-order MSM sequence. Since accounting for the permutation has a cost of $\mathcal{O}(K!)$, for $K > 5$ we take a greedy approach with cost $\mathcal{O}(K^2)$. Note this alternative returns a suboptimal result if estimation fails.

\paragraph{Causal structure computation}\label{app:jacobi_comp}

Denote the Jacobian of $\tilde{\bm{m}}(\cdot,k)$ at point $(\x_{1}, \dots, \x_{M})$ as $\bm{J}_{\tilde{\bm{m}}(\cdot,k)}(\x_{1}, \dots, \x_{M})$ As mentioned, we compute the regime-dependent causal graph $\tilde{\mathcal{G}}_{1:K}$ via thesholding the Jacobian at each state $k\in\{1,\dots,K\}$. Similar to Eq. (\ref{eq:error_equation}), we use $M$ consecutive  samples from MSM held-out sequences. Then, we classify each group of samples to the corresponding state using the posterior $\p(\s_t|\x_{1:T})$ to ensure the Jacobian captures the effects of the regime of interest. Therefore, we have $K$ sets of variables $\mathcal{X}_{k}, k\in\{1,\dots,K\}$, each with size $|\mathcal{X}_k|=N_k$. Given a sequence, we compute the posterior distribution, and set $\{\x_{t-1}, \dots, \x_{t-M}\}\in\mathcal{X}_k$ if $k=\argmax p(\s_{t}|x_{1:T})$. This yields to the following causal graph estimator $\tilde{\mathbf{G}}_k$ at regime $k\in\{1,\dots,K\}$: 
\begin{equation}
    \tilde{\mathcal{G}}_k := \1\left(  \frac{1}{N_k}\sum_{i=1}^{N_k}\left|\bm{J}_{\tilde{\bm{m}}(\cdot,k)} \left(\x_{1}^{(i)}, \dots, \x_{M}^{(i)}\right)\right| > \tau \right), \quad (\x_{1}^{(i)}, \dots, \x_{M}^{(i)})\in\mathcal{X}_k,
\end{equation}
where we use $\tau=0.05$ in our experiments, and $\1(\cdot)$ denotes the indicator function, which equals to 1 if the argument is true and 0 otherwise. Finally, we compute the averaged $F_1$ score across components with respect to the ground truth causal graph $\mathbf{G}_{1:K}$.

\end{document}